\documentclass[11pt]{article}

\usepackage[final]{acl}

\usepackage{times}
\usepackage{latexsym}

\usepackage[T1]{fontenc}

\usepackage[utf8]{inputenc}

\usepackage{microtype}

\usepackage{inconsolata}

\usepackage{graphicx}

\usepackage{amsmath, amssymb, amsthm}

\newtheorem{theorem}{Theorem}[section]

\newtheorem{lemma}[theorem]{Lemma}

\theoremstyle{definition}

\newtheorem{assumption}[theorem]{Assumption}
\theoremstyle{remark}

\DeclareMathOperator*{\argmax}{top\_K\_arg\,max}

\usepackage{algorithm}
\usepackage{algorithmic}

\usepackage{enumitem}
\usepackage{multirow}

\usepackage{booktabs}

\usepackage{pifont}

\usepackage{makecell}

\usepackage{float}

\usepackage{url}

\usepackage{cleveref}

\crefname{subsection}{section}{Sections}    
\Crefname{subsection}{Section}{Sections}

\usepackage{graphicx}
\usepackage{subcaption}

\usepackage{tcolorbox}

\usepackage[table]{xcolor}

\newsavebox{\myentiretablebox}

\definecolor{backblue}{RGB}{210, 230, 250}

\title{Utility-Oriented Visual Evidence Selection for Multimodal Retrieval-Augmented Generation}

\author{
  \textbf{Weiqing Luo}\textsuperscript{\ding{169}} \quad 
  \textbf{Zongye Hu}\textsuperscript{\ding{169}} \quad
  \textbf{Xiao Wang} \textsuperscript{\ding{169}} \quad \\
  \textbf{Zhiyuan Yu}\textsuperscript{\ding{170}} \quad 
  \textbf{Haofeng Zhang}\textsuperscript{\ding{171}} \quad
  \textbf{Ziyi Huang}\textsuperscript{\ding{169}} \\ 
  \textsuperscript{\ding{169}}Arizona State University \quad
  \textsuperscript{\ding{170}}Texas A\&M University \quad
  \textsuperscript{\ding{171}}Morgan Stanley \\
  {\tt \{weiqing2,zongyehu,xwang213,zhuan236\}@asu.edu} \\
  {\tt zhiyuanyu@tamu.edu} \quad
  {\tt hz2553@columbia.edu}
}

\begin{document}
\maketitle
\begin{abstract}
Visual evidence selection is a critical component of multimodal retrieval-augmented generation (RAG), yet existing methods typically rely on semantic relevance or surface-level similarity, which are often misaligned with the actual utility of visual evidence for downstream reasoning. We reformulate multimodal evidence selection from an information-theoretic perspective by defining evidence utility as the information gain induced on a model’s output distribution. To overcome the intractability of answer-space optimization, we introduce a latent notion of evidence helpfulness and theoretically show that, under mild assumptions, ranking evidence by information gain on this latent variable is equivalent to answer-space utility. We further propose a training-free, surrogate-accelerated framework that efficiently estimates evidence utility using lightweight multimodal models. Experiments on MRAG-Bench and Visual-RAG across multiple model families demonstrate that our method consistently outperforms state-of-the-art RAG baselines while achieving substantial reductions in computational cost. We release our code at \url{https://github.com/Hcnaeg/utility-mrag}.
\end{abstract}

\section{Introduction}

Retrieval-Augmented Generation (RAG) has emerged as a powerful paradigm for improving the factuality and coverage of large language models (LLMs) by grounding generation in external evidence~\cite{lewis2020retrieval,gao2023retrieval}. In multimodal RAG, visual inputs are retrieved and incorporated alongside textual context to support downstream reasoning and response generation~\cite{mei2025survey}. Under visual setting, evidence selection is further complicated by the weak alignment between retrieval relevance and generative utility, whereby visually relevant images frequently do not contribute meaningful information for answering a query, despite appearing highly relevant under standard retrieval metrics~\cite{mortaheb2025rag,wu2024visual}. This raises a fundamental challenge: \textit{how should we select evidence that most effectively supports a multimodal model’s generation?}

\begin{figure}[t]
  \centering
  \includegraphics[width=0.5\textwidth]{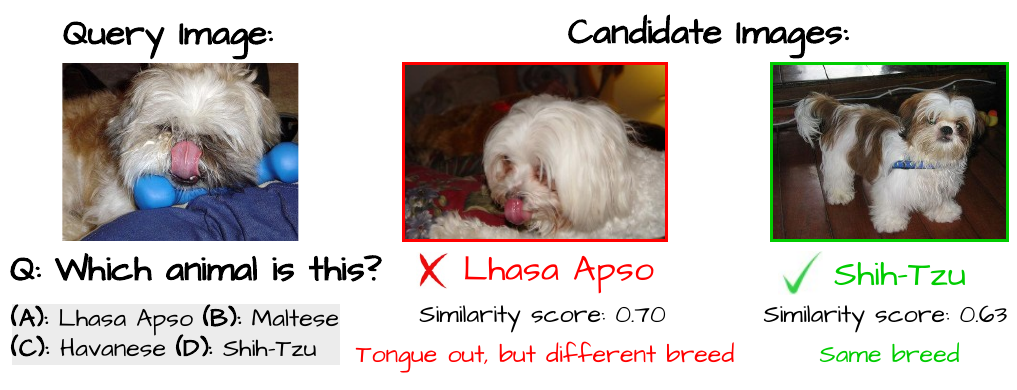}
  \vspace{-9mm}
  \caption{Illustrative example of the misalignment between relevance and utility in multimodal RAG. Although the left candidate image achieves a higher similarity score due to a shared salient attribute (tongue out), it depicts a different breed and does not provide discriminative information for answering the query. }
  
\vspace{-6mm}
\label{fig:intro_case}
\end{figure}

Existing multimodal RAG approaches predominantly rely on semantic relevance or similarity-based heuristics~\cite{mei2025survey,radford2021learning_clip,jia2021scaling_ALIGN,li2021align_ALBEF,lin2024mmembed}. While effective for improving retrieval recall, such criteria are only weakly connected to a model’s downstream generation behavior, implicitly assuming that relevance correlates with actual usefulness for answering a query. In practice, this assumption often breaks down: similarity-based signals, whether computed by contrastive vision-language models (e.g., CLIP-style encoders) or by relevance scorers built on multimodal large language models (MLLMs), frequently fail to reflect whether an image provides task-critical, discriminative information needed for correct reasoning~\cite{yang2022empirical,chen2025seeing}. More fundamentally, relevance-driven methods lack a principled mechanism to quantify \textit{how}, and \textit{to what extent}, a candidate evidence instance influences the model’s output distribution~\cite{nematov2025source}. As a result, they are ill-suited for utility-aware evidence selection in multimodal RAG. As illustrated in Figure~\ref{fig:intro_case}, visually salient yet task-irrelevant attributes can dominate similarity-based rankings, leading to the selection of evidence that appears plausible but ultimately fails to support the correct answer.

These limitations indicate that existing relevance-driven selection strategies are insufficient for multimodal RAG, motivating a more precise research question (\hypertarget{RQ1}{\textbf{\textit{RQ1}}}):
\textit{How can we select visual evidence that is truly \textbf{helpful} to multimodal RAG, beyond semantic relevance?}

To address \hyperlink{RQ1}{\textbf{\textit{RQ1}}}, we adopt an information-theoretic perspective and argue that evidence selection should be guided by a model-centric notion of utility that reflects how visual evidence influences a multimodal model’s generation (Section~\ref{sec: theory}). We formalize this notion as the information gain induced by conditioning on a candidate evidence instance (Section~\ref{sec:info_gain}). However, directly optimizing information gain in the answer space is impractical for MLLMs due to the high dimensionality and implicit nature of the output distribution, as well as linguistic and decoding variability. We therefore introduce a latent notion of evidence helpfulness that abstracts whether a visual input is useful from the model’s perspective, yielding a low-dimensional and tractable objective (Section \ref{sec:latent}). Under mild assumptions, we theoretically show that ranking evidence by information gain on this latent helpfulness variable is equivalent to ranking evidence in the full answer space, providing a principled alternative to relevance-based selection (Sections~\ref{sec:latent} and \ref{sec:odd_logits}).

Although the proposed utility criterion is theoretically well founded, naively computing it at scale with a high-capacity multimodal model is computationally prohibitive in practice. This practical consideration leads to our second research question (\hypertarget{RQ2}{\textbf{\textit{RQ2}}}):
\textit{Can evidence utility be estimated \textbf{efficiently} without relying on expensive main-model inference, while preserving selection quality?}

To address \hyperlink{RQ2}{\textbf{\textit{RQ2}}}, we propose a surrogate-accelerated evidence selection strategy that decouples utility estimation from expensive main-model inference (Section~\ref{sec:pipeline}). Our key observation is that assessing whether a visual input is helpful for answering a query is often a simpler discriminative judgment than full answer generation, and can be reliably approximated by a lightweight surrogate model. Leveraging this observation, we deploy a compact multimodal model to efficiently estimate evidence helpfulness and rank candidate evidence, before invoking the high-capacity target model only once for final generation. This design enables scalable utility-aware evidence selection while substantially reducing computational cost.

Our contributions can be summarized as follows:
\begin{itemize}
[leftmargin=*,itemsep=0pt]
\item \textbf{Problem formulation.} We identify the misalignment between semantic relevance and evidence utility in multimodal RAG, and formulate evidence selection as an information-theoretic problem by defining utility as information gain on the model’s output distribution.

\item \textbf{Theoretical Reformulation of Evidence Utility.} We resolve the intractability of answer-space information gain via a theoretically grounded latent helpfulness formulation, yielding a stable and tractable optimization objective. 

\item \textbf{Surrogate-Accelerated Selection.} We propose a training-free, surrogate-accelerated pipeline that efficiently estimates evidence helpfulness using a lightweight model, substantially reducing computational cost while preserving selection quality.

\end{itemize}

\section{Related Work}
\vspace{-2mm}
We briefly review representative work on multimodal RAG and evidence selection, with a more comprehensive discussion deferred to Appendix~\ref{sec:appendix_related_work}.

\textbf{Visual Evidence Selection for Multimodal RAG.} Early multimodal RAG studies rely on CLIP-style dual encoders to retrieve images via semantic similarity in a shared embedding space~\cite{radford2021learning_clip,cherti2023reproducible_openclip,tschannen2025siglip,zhou2025megapairs,jia2021scaling_ALIGN,li2021align_ALBEF}. 
More recent approaches leverage LLMs as dense retrievers~\cite{jiang2024e5,lin2024mmembed,meng2025vlm2vecV2,zhou2025megapairs,zhang2024gme} or relevance scorers, using pointwise or listwise judgments to improve retrieval quality~\cite{gu2025unime,liu2025lamra}. Despite architectural differences, these methods remain relevance-driven, selecting images based on semantic matching rather than their discriminative utility for answering a query. In contrast, our work explicitly models evidence utility rather than relevance.

\textbf{Uncertainty, Utility, and Efficiency in Evidence Selection.} Previous studies have explored model confidence via uncertainty quantification for LLMs, including likelihood- and entropy-based measures, Monte Carlo (MC) sampling, and consistency-based signals~\citep{kadavath2022language,kuhn2023semantic,wang2022self,lin2023generating}. In RAG settings, such signals have occasionally been used for auxiliary purposes such as triggering adaptive retrieval or assessing answer reliability~\citep{jiang2023active,xu2025mega,soudani2025why,ozaki2025understanding}. However, these methods are primarily designed to assess confidence in generated answers rather than to guide evidence selection, and when applied directly to RAG, answer-level uncertainty is ill-suited for comparing candidate evidence due to confounding generation noise and decoding variability. Moreover, these studies often require retraining or repeated sampling, incurring substantial computational cost. 

\section{Information-Theoretic Formulation of Evidence Selection}
\label{sec: theory}
\vspace{-2mm}

In this section, we study our core research problem (\hyperlink{RQ1}{\textbf{\textit{RQ1}}}) and present an information-theoretic formulation of evidence selection for RAG. Let $q$ denote the input query, $\mathcal{C}=\{c_i\}_{i\in I}$ a set of retrieved candidate evidences, $Y$ the model output, and $Z$ a latent variable representing evidence helpfulness. Given $q$ and $\mathcal{C}$, the goal is to select an optimal subset of top $K\ge 1$ evidences (in particular if $K=1$, a single evidence instance $c^* \in \mathcal{C}$) that best assists the LMM, denoted as $\mathcal{M}$, in generating $Y$.

\vspace{-2mm}
\subsection{Evidence Utility as Information Gain}
\label{sec:info_gain}
From an information-theoretic perspective, utility refers to \textit{predictive influence}, i.e., the extent to which an evidence instance $c$ alters the model’s belief about the output, rather than \textit{semantic relevance/similarity}, which may be redundant or misleading. Then, the utility of $c$ can be quantified via \textbf{Information Gain (IG)}, which measures the reduction in the uncertainty (entropy) of the model’s output distribution after conditioning on $c$. This is equivalent to the KL divergence between the posterior distribution (with evidence) and the prior distribution (without evidence): 
{
\setlength{\abovedisplayskip}{3pt}
\setlength{\belowdisplayskip}{3pt}
\begin{align}
\label{eq:utility_kl}
\text{Utility}(c) & := \text{IG}(Y; C=c \mid q)  \nonumber \\
& = D_{\text{KL}}\Big( P_{Y \mid C=c, q} \parallel P_{Y \mid q} \Big),
\end{align}
}

\noindent where $P_{Y \mid \cdot}$ represents the generative distribution of $Y$ generated by $\mathcal{M}$ conditional on the event $\cdot$. Under this formulation, an evidence candidate $c$ has high utility if and only if it induces a significant shift in the model's belief about the answer compared to its internal parametric knowledge. Therefore, our goal is to solve the following optimization problem:
{
\setlength{\abovedisplayskip}{3pt}
\setlength{\belowdisplayskip}{3pt}
\begin{equation}
\label{eq:utility_kl_max}
\argmax_{c \in \mathcal{C} }  \text{IG}(Y; C=c \mid q), 
\end{equation}}
where $\argmax$ represents the top $K$ solutions for which the objective obtains highest values.

While theoretically sound, directly solving Problem~(\ref{eq:utility_kl_max}) is computationally intractable. First, both $P_{Y \mid q}$ and $P_{Y \mid C=c, q}$ are implicit generative distributions induced by MLLMs, which are inaccessible in closed form. Second, the output space of $Y$ is high-dimensional and unbounded, making KL computation prohibitively expensive. Third, answer-space divergence conflates visual information gain with linguistic variability and hallucination noise, leading to unstable utility estimates. Hence, directly optimizing Problem (\ref{eq:utility_kl_max}) in the high-dimensional answer space is computationally intractable and statistically unstable. 

\vspace{-2mm}
\subsection{Reformulating Answer-Space Utility via Latent Helpfulness}
\label{sec:latent}
Due to these challenges, we therefore introduce a latent Bernoulli random variable $Z \in \{0,1\}$ to indicate whether a candidate evidence instance is \textit{helpful} to the model’s generation. Specifically, $\{Z=1 \mid C=c, q\}$ represents the event that a candidate evidence $c$ is \textit{semantically helpful} for answering the query $q$. We define a new optimization problem as follows: 
{
\begin{equation}
\label{eq:latent_kl_max}
\argmax_{c}  \text{IG}(Z; C=c \mid q).    
\end{equation}}
The latent variable $Z$ serves as a conceptual bridge that replaces an intractable divergence in the answer space with a tractable estimation objective, shifting the optimization from directly comparing high-dimensional output distributions to modeling a lower-dimensional, binary helpfulness outcome induced by each evidence instance. We next establish the theoretical validity of this reformulation.

\begin{assumption}[Constraints on helpful evidences]
\label{assumption:helpfulevidence_main}
Throughout this paper, we restrict attention to a feasible subset $\hat{\mathcal{C}} \in \mathcal{C}$ of candidate evidences that are not detrimental to the model’s generation. Formally, we define $\hat{\mathcal{C}} = \{c \in \mathcal{C}: p_c \ge \bar{p}\}$, where $p_c = P(Z=1\mid C=c, q)$, $\bar{p} = P(Z=1\mid q)$. 
\end{assumption}
This assumption means that our analysis focuses on ranking evidence within a feasible candidate set that is at least non-harmful to the model, rather than handling explicitly adversarial or systematically misleading inputs. 

\begin{assumption}[Connection between response and helpfulness; Informal; See Assumption \ref{assumption:helpfulnessscore} in  Appendix \ref{sec:proof} for the formal version]
\label{assumption_informal:helpfulnessscore}
The conditional response distribution of the model admits a monotone mixture structure with respect to evidence helpfulness.
\end{assumption}

\begin{figure*}[!htb]
  \centering
  \includegraphics[width=\textwidth]{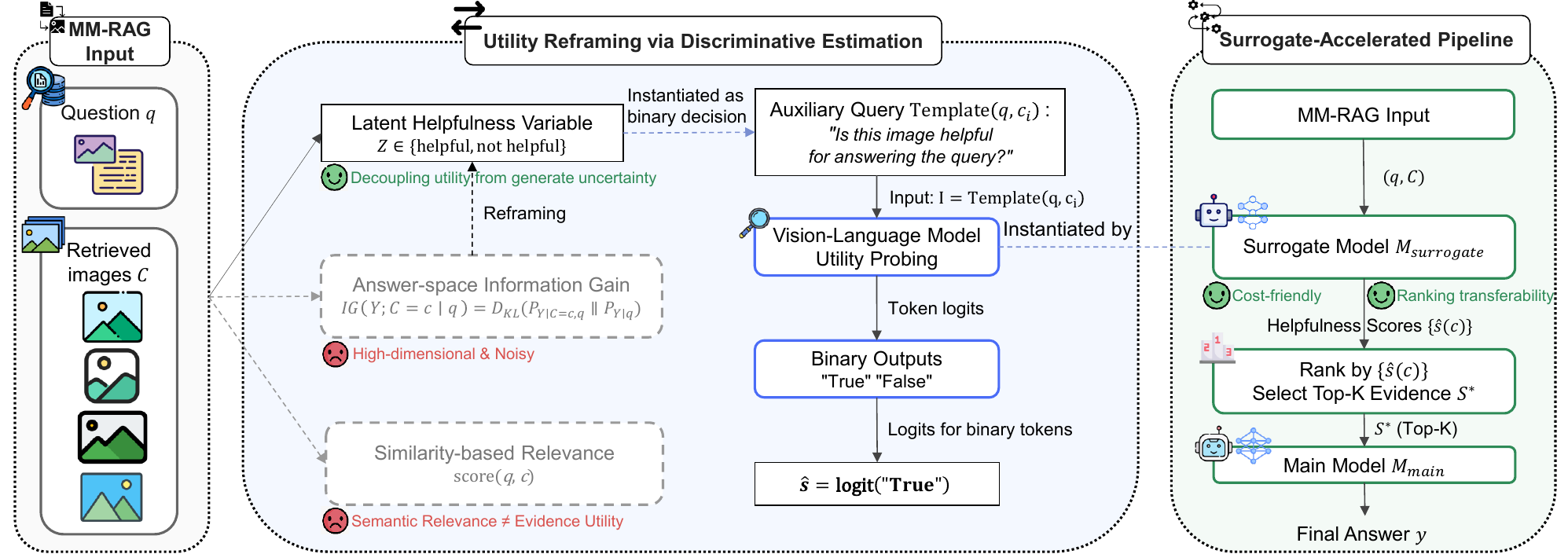}
  \caption{Overview of the proposed utility-oriented visual evidence selection framework for multimodal RAG.
We reformulate evidence selection as discriminative utility estimation via a latent helpfulness variable, and employ a surrogate-accelerated pipeline to efficiently rank and select informative visual evidence for downstream generation.}

\vspace{-5mm}
\label{fig:main_framework}
\end{figure*}

Intuitively, this assumption states that more helpful evidence should induce a larger shift in the model's posterior distribution over responses. This justifies using latent helpfulness as a tractable proxy for answer-space utility.
\begin{theorem}
\label{thm:equivalent1}
Suppose that Assumptions \ref{assumption:helpfulevidence_main} and \ref{assumption_informal:helpfulnessscore} hold. Then we have
{
\setlength{\abovedisplayskip}{3pt}
\setlength{\belowdisplayskip}{3pt}
\begin{align*}
IG(Z;C=c_1)\ge IG(Z;C=c_2)\quad\Longrightarrow\quad \\ IG(Y;C=c_1)\ge IG(Y;C=c_2).
\end{align*}
}
for all $c_1, c_2 \in \{c\in \hat{C}: \lambda(c)\ge \bar\lambda\}$
and the optimal solutions to Problem \eqref{eq:latent_kl_max} must be optimal to Problem \eqref{eq:utility_kl_max}, i.e.,
{
\setlength{\abovedisplayskip}{3pt}
\setlength{\belowdisplayskip}{3pt}
\begin{align*}
\argmax_{c\in \hat{C}: \lambda(c)\ge \bar\lambda} IG(Z;C=c) \\ \in \argmax_{c\in \hat{C}: \lambda(c)\ge \bar\lambda} IG(Y;C=c).
\end{align*}
}
In particular, if the optimal solution to Problem \eqref{eq:utility_kl_max} is unique, then
{
\setlength{\abovedisplayskip}{3pt}
\setlength{\belowdisplayskip}{3pt}
\begin{align*}
\argmax_{c\in \hat{C}: \lambda(c)\ge \bar\lambda} IG(Z;C=c) \\ = \argmax_{c\in \hat{C}: \lambda(c)\ge \bar\lambda} IG(Y;C=c),
\end{align*}
}
for $K=1$. Formal definitions of $\lambda(c)$ and $\bar\lambda\ $ are presented in Appendix \ref{assumption:helpfulnessscore}.
\end{theorem}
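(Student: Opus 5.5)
We will have to guess the formal version of Assumption \ref{assumption_informal:helpfulnessscore}, since only its informal form is given here. We take it to be a two-component mixture:
\[
P_{Y\mid C=c,q} = p_c\,Q_1 + (1-p_c)\,Q_0, \qquad P_{Y\mid q} = \bar p\,Q_1 + (1-\bar p)\,Q_0 .
\]
Here $Q_1$ and $Q_0$ are fixed response distributions given helpful and unhelpful evidence. The quantity $\lambda(c)$ is a helpfulness score, for instance the log-odds $\log\frac{p_c}{1-p_c}$, and $\bar\lambda$ is its value at $\bar p$. With these definitions the constraint $\lambda(c)\ge\bar\lambda$ coincides with $p_c\ge\bar p$, which is Assumption \ref{assumption:helpfulevidence_main}. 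If the formal assumption instead lets the mixture weights be a monotone transform of $\lambda(c)$, the argument below is the same with $p$ replaced by that transform.

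The proof then reduces both sides to scalar functions of $p_c$.

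\textbf{Latent side.} For the binary variable $Z$,
\[
\mathrm{IG}(Z;C=c)=g(p_c):=\mathrm{kl}(p_c\,\|\,\bar p),
\]
the Bernoulli KL divergence. Its derivative is $g'(p)=\log\frac{p}{\bar p}-\log\frac{1-p}{1-\bar p}$, which is nonnegative for $p\ge\bar p$. So $g$ is strictly increasing on $[\bar p,1]$. On the feasible set, $g(p_{c_1})\ge g(p_{c_2})$ is therefore equivalent to $p_{c_1}\ge p_{c_2}$.

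\textbf{Answer side.} Define
\[
h(p):=D_{\mathrm{KL}}\big(pQ_1+(1-p)Q_0 \,\big\|\, \bar pQ_1+(1-\bar p)Q_0\big),
\]
so that $\mathrm{IG}(Y;C=c)=h(p_c)$. The mixture $pQ_1+(1-p)Q_0$ is affine in $p$, and KL divergence is convex in its first argument, so $h$ is convex on $[0,1]$. It also satisfies $h(\bar p)=0$ and $h\ge 0$. A convex function attaining its minimum at $\bar p$ is nondecreasing on $[\bar p,1]$. This step is the main obstacle, because it must hold without any closed form for $Q_0,Q_1$. The convexity argument is the approach to try first. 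As a fallback, write $h(p)=D_{\mathrm{KL}}(P_p\|P_{\bar p})$ and differentiate under the sum: $h'(p)=\sum_y (Q_1-Q_0)(y)\log\frac{P_p(y)}{P_{\bar p}(y)}$, whose sign for $p>\bar p$ follows from monotonicity of the likelihood ratio $\frac{P_p(y)}{P_{\bar p}(y)}$ in $\frac{Q_1(y)}{Q_0(y)}$. This is presumably the content of ``monotone mixture'' in the formal assumption.

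\textbf{Combining.} For $c_1,c_2$ in the feasible set, $\mathrm{IG}(Z;c_1)\ge \mathrm{IG}(Z;c_2)$ gives $p_{c_1}\ge p_{c_2}$, since $g$ is strictly increasing. Then $h(p_{c_1})\ge h(p_{c_2})$, since $h$ is nondecreasing, which is the first implication. Applying it with $c_1$ a maximizer of the latent objective and $c_2$ ranging over all feasible candidates shows that any latent-optimal $c$ (or top-$K$ set) is answer-optimal. For the uniqueness clause with $K=1$, let $c^\star$ be the unique answer-space maximizer and $c'$ any latent maximizer. By the above, $c'$ is an answer maximizer, hence $c'=c^\star$. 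Conversely $c^\star$ is a latent maximizer, because a latent maximizer exists on the finite set and equals $c^\star$. This gives equality of the two argmax sets.
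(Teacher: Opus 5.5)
Your argument is correct for the model you posit, and its core is the same as the paper's. The paper also writes $\mathrm{IG}(Y;C=c)=D_{\mathrm{KL}}(P_{\lambda(c)}\,\|\,P_{\bar\lambda})$ with $P_\lambda=(1-\lambda)P_0+\lambda P_1$. It gets convexity from affinity of $\lambda\mapsto P_\lambda$ plus convexity of KL in its first argument (proved via Jensen for $u\log u$), and it reads off monotonicity on $[\bar\lambda,1]$ from $g(\bar\lambda)=0$.

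The difference is in the formal assumption. In the paper, $\lambda:\mathcal C\to[0,1]$ is an arbitrary mixture weight with $\bar\lambda=\mathbb E[\lambda(C)]$, so that $P_Y=P_{\bar\lambda}$. The step ``$\mathrm{IG}(Z)$-order $\Rightarrow$ weight order'' is not derived there but postulated as the monotone-alignment condition (Part 2 of the formal assumption). The feasible set $\{c\in\hat{\mathcal C}:\lambda(c)\ge\bar\lambda\}$ is then a genuine intersection of two separate constraints.

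Your specialization $\lambda(c)=p_c$ is what one gets when $Y\perp C\mid Z$ with $Q_z=P_{Y\mid Z=z}$. It buys three things:
\begin{itemize}
\item the alignment condition becomes a consequence of the Bernoulli-KL monotonicity of Theorem~\ref{thm:equivalent2};
\item the two constraints collapse into one;
\item the anchor $\bar p=\mathbb E[p_C]$ is automatically consistent with the marginal.
\end{itemize}
The paper's version instead buys generality, since the weights are decoupled from $p_c$, at the price of an extra assumption. Under the paper's assumption your proof goes through unchanged once you replace your Bernoulli-KL step by an appeal to the alignment condition. Your handling of the uniqueness clause also spells out a step the paper leaves implicit.

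One caution concerns your fallback remark. Suppose the mixture weight is $w(c)=\psi(p_c)$ for a nonlinear monotone $\psi$. Then the marginal is $P_Y=P_{\mathbb E[w(C)]}$, and the answer-side monotonicity needs $w(c)\ge\mathbb E[w(C)]$. This does not follow from $p_c\ge\bar p$, because by Jensen $\mathbb E[\psi(p_C)]\neq\psi(\bar p)$ in general; for convex $\psi$ the anchor can exceed $\psi(p_c)$. So the argument is not ``the same with $p$ replaced by that transform'': you must impose $w(c)\ge\mathbb E[w(C)]$ separately. That is exactly why the paper's theorem adds the restriction $\lambda(c)\ge\bar\lambda$, with $\bar\lambda=\mathbb E[\lambda(C)]$, on top of $c\in\hat{\mathcal C}$.
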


By Theorem \ref{thm:equivalent1}, we can convert solving Problem \eqref{eq:utility_kl_max} to solving a new and easier Problem \eqref{eq:latent_kl_max}.

\subsection{Simplifying Latent Information Gain via Helpfulness Probability}
\label{sec:odd_logits}
Although Problem~\eqref{eq:latent_kl_max} is lower-dimensional than the answer-space objective in Problem~\eqref{eq:utility_kl_max}, it still requires estimating a KL divergence involving both conditional and marginal distributions of $Z$. Computing the information gain $IG(Z;C=c, q)$ in practice incurs additional computational and statistical overhead. To further simplify the objective, we introduce the following optimization problem:
{
\setlength{\abovedisplayskip}{3pt}
\setlength{\belowdisplayskip}{3pt}
\begin{equation}
\label{eq:latent_1_max}
\argmax_{c}  P(Z=1 \mid C=c, q).    
\end{equation}
}

\begin{theorem}
\label{thm:equivalent2}
Suppose that Assumption \ref{assumption:helpfulevidence_main} holds. 
Then we have
{
\setlength{\abovedisplayskip}{3pt}
\setlength{\belowdisplayskip}{3pt}
\begin{align*}
IG(Z;C=c_1)\ge IG(Z;C=c_2)
\quad\Longleftrightarrow\quad \\
P(Z=1\mid C=c_1)\ge P(Z=1\mid C=c_2).
\end{align*}
For all $c_1, c_2 \in \hat{C}$, Problem \eqref{eq:latent_kl_max} has the identical solutions as in Problem \eqref{eq:latent_1_max}, i.e., 
\begin{align*}
&\argmax_{c\in \hat{C}} IG(Z;C=c) \\ 
= &
\argmax_{c\in \hat{C}} P(Z=1\mid C=c).
\end{align*}}
\end{theorem}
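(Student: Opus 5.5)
Since $Z$ is Bernoulli, the information gain in Problem~\eqref{eq:latent_kl_max} reduces to a binary KL divergence:
\begin{align*}
IG(Z;C=c) = D_{\text{KL}}\big(\mathrm{Ber}(p_c)\,\|\,\mathrm{Ber}(\bar p)\big) = f(p_c),
\end{align*}
where $p_c = P(Z=1\mid C=c,q)$, $\bar p = P(Z=1\mid q)$, and
\begin{align*}
f(p) = p\log\frac{p}{\bar p} + (1-p)\log\frac{1-p}{1-\bar p}, \qquad p\in[0,1].
\end{align*}
The prior $\bar p$ does not depend on $c$. Both sides of the claimed equivalence are therefore statements about the single scalar function $f$ evaluated at $p_{c_1}$ and $p_{c_2}$. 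The whole proof then reduces to showing that $f$ is strictly increasing on $[\bar p,1]$. Assumption~\ref{assumption:helpfulevidence_main} guarantees that $p_c\in[\bar p,1]$ for every $c\in\hat{\mathcal C}$.

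For monotonicity, differentiate:
\begin{align*}
f'(p) = \log\frac{p}{\bar p} - \log\frac{1-p}{1-\bar p} = \mathrm{logit}(p) - \mathrm{logit}(\bar p).
\end{align*}
This vanishes exactly at $p=\bar p$ and is strictly positive for $p>\bar p$, because the logit is strictly increasing. One could equivalently note that $f''(p) = 1/(p(1-p)) > 0$ with $f(\bar p)=f'(\bar p)=0$. Hence $f$ is strictly increasing on $[\bar p,1]$, with $f(\bar p)=0$ at the left endpoint. The endpoint $p=1$ is handled by continuity, using the convention $0\log 0 = 0$.

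The equivalence follows directly. For $c_1,c_2\in\hat{\mathcal C}$,
\begin{align*}
f(p_{c_1})\ge f(p_{c_2}) \iff p_{c_1}\ge p_{c_2}.
\end{align*}
The forward direction holds because strict monotonicity makes $f$ injective and order-preserving, so its inverse on $[\bar p,1]$ is also order-preserving. Since the two objectives induce the same total preorder on $\hat{\mathcal C}$, they have the same maximizers. This holds for $K=1$ and equally for top-$K$ sets, since top-$K$ selection depends only on the ordering; ties are mapped to ties because $f$ is injective.

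The only real obstacle is the degenerate case. If $\bar p\in\{0,1\}$, the KL divergence can be infinite or the logit undefined. If $\bar p = 1$, then $\hat{\mathcal C}$ forces $p_c=1$ for all $c$, so both objectives are constant and the claim holds trivially. If $\bar p=0$, then $f(p)=p\log p+(1-p)\log(1-p)+\infty\cdot\mathbb 1[p>0]$, and I would either assume $\bar p\in(0,1)$ explicitly or treat this case separately. I would also remark that the restriction to $\hat{\mathcal C}$ is essential: on $[0,\bar p]$ the function $f$ is decreasing, so without Assumption~\ref{assumption:helpfulevidence_main} a strongly harmful candidate could have large information gain while having small $p_c$.
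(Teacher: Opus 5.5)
Your proposal is correct and follows essentially the same route as the paper's proof. You express $IG(Z;C=c)$ as the binary KL divergence $f(p_c)$, show $f'(p)=\log\frac{p(1-\bar p)}{(1-p)\bar p}>0$ for $p>\bar p$ so that $f$ is strictly increasing on $[\bar p,1]$, and read off both the order equivalence and the equality of the argmax sets over $\hat{\mathcal C}$.

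Your handling of the endpoint $p=1$ and of the degenerate values $\bar p\in\{0,1\}$ goes slightly beyond the paper, which tacitly works in $(0,1)$. For $\bar p=0$, recall that $\bar p=\mathbb E[p_C]$ is the marginal; if every candidate has positive mass under the law of $C$, this forces $p_c=0$ for all $c$, and the case is trivial just like $\bar p=1$.
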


By Theorem \ref{thm:equivalent2}, we thus can convert solving Problem \eqref{eq:latent_kl_max} to solving a new and easier Problem \eqref{eq:latent_1_max}. These results motivate a practical design principle: rather than scoring evidence by relevance or similarity, evidence should be selected by directly modeling helpfulness likelihood and ranking candidates accordingly.

\section{Surrogate-Accelerated Discriminative Utility Estimation}
\label{sec:overview}

Building upon our theoretical results, we propose a novel evidence selection framework, namely \textbf{Surrogate-Accelerated Discriminative Utility Estimation}, to answer \hyperlink{RQ2}{\textbf{\textit{RQ2}}}. As shown in Figure~\ref{fig:main_framework}, our framework follows a \textit{retrieve–rank–generate} paradigm with two core components: \textit{Utility-Centric Metric Design}, which estimates evidence helpfulness via a generation-aware utility objective, and \textit{Surrogate-Accelerated Execution}, which employs a lightweight surrogate model to efficiently rank candidate evidences for the target model.

\subsection{Utility-Centric Metric Design}
\label{sec:decoupling}

As discussed in Section~\ref{sec: theory}, directly optimizing evidence utility in the high-dimensional answer space is computationally intractable. Practically, we construct an \textbf{auxiliary decision task} to probe model's internal belief regarding the latent variable $Z$.

\paragraph{Auxiliary Query Construction.}
We instantiate the condition for $Z$ by formulating a meta-cognitive query auxiliary query $q_{\text{aux}}$ that explicitly asks the model to evaluate the relationship between the candidate evidence and the user's original intent. Formally, given a query $q$ and a candidate image $c_i$, the model input is
{
\setlength{\abovedisplayskip}{3pt}
\setlength{\belowdisplayskip}{3pt}
\begin{equation}
\label{eq:prompt}
\mathcal{I} = \text{Template}(q, c_i, q_{\text{aux}}),
\end{equation}}
where $q_{\text{aux}}$ corresponds to the question \textit{``Is this evidence helpful for answering the query?''} and target output space is constrained to a binary set $\mathcal{V}_{\text{bin}} = \{v^+: \text{"True"}, v^-: \text{"False"}\}$.

\paragraph{Score Computation via Logits.}
We compute the helpfulness score from the model’s final-layer logits. Let $\ell(v \mid \mathcal{I})$ denote the logit of token $v$ given input $\mathcal{I}$. Then, the surrogate helpfulness score $\hat{s}(c_i)$ is computed as:
{
\setlength{\abovedisplayskip}{3pt}
\setlength{\belowdisplayskip}{3pt}
\begin{equation}
\label{eq:score_calculation}
\hat{s}(c_i) = \ell(v^+ \mid \mathcal{I}).
\end{equation}
}
Theoretically, our method optimizes Problem~(\ref{eq:latent_1_max}), whose optima are guaranteed to be optimal for Problem~(\ref{eq:utility_kl_max}). Practically, $q_{\text{aux}}$ (Question~(\ref{eq:prompt})) directly assesses evidence helpfulness, yielding rankings driven by intrinsic evidence informativeness rather than answer-generation uncertainty.

\subsection{Surrogate-Accelerated Selection Pipeline}
\label{sec:pipeline}

While the proposed scoring objective is theoretically sound, computing it for every candidate in a large pool $\mathcal{C}$ using the heavy main model $\mathcal{M}_{\text{main}}$ (e.g., 7B+ parameters) incurs significant latency. To address this, we implement the selection process via a Surrogate-Accelerated Pipeline.

\paragraph{Utility Transferability Hypothesis.}
We posit that models of varying sizes exhibit a \textbf{discriminative consensus}: evidence that is clearly unhelpful or contradictory to a lightweight surrogate model $\mathcal{M}_{\text{surrogate}}$ is highly likely to be unhelpful to $\mathcal{M}_{\text{main}}$. As a result, surrogate-induced rankings can closely approximate those of the main model, enabling the expensive $O(N)$ scanning over $N$ candidate evidences to be offloaded to the lightweight surrogate.

\paragraph{Inference Workflow.}
The overall inference pipeline follows a retrieve–select paradigm:
\begin{enumerate}[itemsep=0pt, topsep=0pt]
    \item \textbf{Scoring:} Given the retrieved candidate set $\mathcal{C}$, the lightweight $\mathcal{M}_{\text{surrogate}}$ computes the helpfulness score $\hat{s}(c_i)$ for each $c_i \in \mathcal{C}$ in parallel.
    \item \textbf{Selection:} Rank all $c_i \in \mathcal{C}$ based on $\hat{s}(c_i)$ (Eq.~\ref{eq:score_calculation}) and select the top-$k$ subset $S^*$.
    \item \textbf{Generation:} The main $\mathcal{M}_{\text{main}}$ generates the final response using only the selected subset $S^*$, requiring a single inference pass.
\end{enumerate}

\paragraph{Complexity Analysis.}
A standard reranking strategy incurs $(N+1)\times\Phi(\mathcal{M}_{\text{main}})$ inference cost. Our pipeline reduces this to $\mathcal{C}_{\text{ours}} = N\times\Phi(\mathcal{M}_{\text{surrogate}}) + \Phi(\mathcal{M}_{\text{main}})$,
yielding substantial computation cost reduction since $\Phi(\mathcal{M}_{\text{surrogate}}) < \Phi(\mathcal{M}_{\text{main}})$ and the $\mathcal{M}_{\text{main}}$ is invoked only once.

\begin{table*}[!htb]
\begin{lrbox}{\myentiretablebox}
\begin{tabular}{@{}l|ll|llllllllll@{}}
\toprule
 &  &  & \multicolumn{5}{c|}{\textbf{MRAG-Bench~\cite{hu2024mragbench}}} & \multicolumn{5}{c}{\textbf{Visual-RAG~\cite{wu2025visualrag}}} \\ \cmidrule(l){4-13} 
 &  &  & \multicolumn{10}{c}{Main Model} \\ \cmidrule(l){4-13} 
 & \multirow{-3}{*}{Image Selection Method} & \multirow{-3}{*}{\# Params} & Qwen3-VL-8B & MiniCPM-V4.5 & Gemma3-12B & Ovis2.5-9B & \multicolumn{1}{l|}{InternVL3.5-8B} & Qwen3-VL-8B & MiniCPM-V4.5 & Gemma3-12B & Ovis2.5-9B & InternVL3.5-8B \\ \cmidrule(l){2-13} 
\multirow{-4}{*}{Top K} & Zero-Shot &  & 59.35 & 57.95 & 56.84 & 59.05 & \multicolumn{1}{l|}{42.87} & 52.41 & 53.07 & 51.07 & 52.67 & 54.28 \\ \midrule
 & CLIP-B~\cite{radford2021learning_clip} & 151M & 57.80(-1.55) & 58.83(+0.88) & 55.95(-0.89) & 57.13(-1.92) & \multicolumn{1}{l|}{42.65(-0.22)} & 54.28(+1.87) & 58.56(+5.49) & 51.87(+0.80) & 66.04(+13.37) & \textbf{64.17(+9.89)} \\
 & CLIP-L~\cite{radford2021learning_clip} & 428M & 57.80(-1.55) & 58.98(+1.03) & 56.25(-0.59) & 57.95(-1.10) & \multicolumn{1}{l|}{44.49(+1.62)} & 54.14(+1.73) & 58.29(+5.22) & 54.95(+3.88) & 62.03(+9.36) & 61.23(+6.95) \\
 & OpenCLIP~\cite{cherti2023reproducible_openclip} & 151M & 58.46(-0.89) & 60.46(+2.51) & 57.50(+0.66) & 58.24(-0.81) & \multicolumn{1}{l|}{42.65(-0.22)} & 52.94(+0.53) & 58.56(+5.49) & 52.67(+1.60) & 64.97(+12.30) & 60.29(+6.01) \\
 & SigLIP 2 Base~\cite{tschannen2025siglip} & 375M & 59.42(+0.07) & 61.27(+3.32) & 58.24(+1.40) & 58.91(-0.14) & \multicolumn{1}{l|}{44.35(+1.48)} & 52.94(+0.53) & 56.68(+3.61) & 55.48(+4.41) & 64.04(+11.37) & 61.76(+7.48) \\
 & SigLIP 2 So400m~\cite{tschannen2025siglip} & 1.1B & 61.42(+2.07) & 61.57(+3.62) & 59.87(+3.03) & 59.35(+0.30) & \multicolumn{1}{l|}{44.49(+1.62)} & 53.34(+0.93) & 59.63(+6.56) & 53.34(+2.27) & 64.84(+12.17) & 62.57(+8.29) \\
 & BGE-VL-large~\cite{zhou2025megapairs} & 428M & 58.98(-0.37) & 60.53(+2.58) & 55.65(-1.19) & 59.05(+0.00) & \multicolumn{1}{l|}{44.64(+1.77)} & 52.81(+0.40) & 57.35(+4.28) & 54.28(+3.21) & 65.64(+12.97) & 58.42(+4.14) \\ \cmidrule(l){2-13} 
 & E5-V~\cite{jiang2024e5} & 7.8B & 58.83(-0.52) & 59.20(+1.25) & 55.06(-1.78) & 57.95(-1.10) & \multicolumn{1}{l|}{43.46(+0.59)} & 53.88(+1.47) & 58.42(+5.35) & 55.48(+4.41) & 67.91(+15.24) & 62.57(+8.29) \\
 & GME~\cite{zhang2024gme} & 2.2B & 64.38(+5.03) & 65.19(+7.24) & 59.42(+2.58) & 61.35(+2.30) & \multicolumn{1}{l|}{47.01(+4.14)} & 55.88(+3.47) & 59.09(+6.02) & 53.07(+2.00) & 67.51(+14.84) & 62.30(+8.02) \\
& VLM2Vec~\cite{jiang2024vlm2vec} & 7.7B & 61.71(+2.36) & 61.79(+3.84) & 58.76(+1.92) & 60.83(+1.78) & \multicolumn{1}{l|}{46.12(+3.25)} & 54.55(+2.14) & 57.49(+4.42) & 54.81(+3.74) & 49.33(-3.34) & 48.53(-5.75) \\
 & VLM2Vec-V2.0~\cite{meng2025vlm2vecV2} & 2.2B & 60.68(+1.33) & 61.86(+3.91) & 57.50(+0.66) & 60.24(+1.19) & \multicolumn{1}{l|}{45.31(+2.44)} & 55.61(+3.20) & 55.88(+2.81) & 53.88(+2.81) & 50.80(-1.87) & 48.53(-5.75) \\
 & BGE-MLLM-S1~\cite{zhou2025megapairs} & 7.6B & 60.46(+1.11) & 60.90(+2.95) & 55.28(-1.56) & 58.61(-0.44) & \multicolumn{1}{l|}{45.68(+2.81)} & 56.15(+3.74) & 56.55(+3.48) & 56.55(+5.48) & 67.11(+14.44) & 63.37(+9.09) \\
 & BGE-MLLM-S2~\cite{zhou2025megapairs} & 7.6B & 60.31(+0.96) & 61.57(+3.62) & 57.58(+0.74) & 58.54(-0.51) & \multicolumn{1}{l|}{44.72(+1.85)} & 53.61(+1.20) & 55.35(+2.28) & 55.35(+4.28) & 65.11(+12.44) & 63.90(+9.62) \\
 & UniME-V2~\cite{gu2025unime} & 7.1B & 62.82(+3.47) & 63.05(+5.10) & \textbf{60.90(+4.06)} & 60.61(+1.56) & \multicolumn{1}{l|}{45.60(+2.73)} & 56.95(+4.54) & 58.69(+5.62) & 55.88(+4.81) & 57.62(+4.95) & 49.33(-4.95) \\
 & LamRA-Rank~\cite{liu2025lamra} & 8B & 63.34(+3.99) & 62.97(+5.02) & 58.61(+1.77) & 59.05(+0.00) & \multicolumn{1}{l|}{45.75(+2.88)} & 58.42(+6.01) & 60.16(+7.09) & 55.48(+4.41) & 58.16(+5.49) & 62.57(+8.29) \\ \cmidrule(l){2-13} 
 & Ours (Qwen3-VL-2B Surrogate) & 2.1B & \textbf{65.56(+6.21)} & \textbf{65.41(+7.46)} & 60.83(+3.99) & \textbf{61.42(+2.37)} & \multicolumn{1}{l|}{\textbf{47.89(+5.02)}} & 59.89(+7.48) & 60.16(+7.09) & 59.22(+8.15) & 68.85(+16.18) & \textbf{64.17(+9.89)} \\
 & Ours (Ovis2.5-2B Surrogate) & 2.6B & 64.97(+5.62) & 64.08(+6.13) & 60.53(+3.69) & 60.16(+1.11) & \multicolumn{1}{l|}{47.23(+4.36)} & \textbf{61.36(+8.95)} & \textbf{61.23(+8.16)} & 57.75(+6.68) & \textbf{69.12(+16.45)} & 62.30(+8.02) \\
 & Ours (In-Family Surrogate) &  & \textbf{65.56(+6.21)} & 64.60(+6.65) & 59.72(+2.88) & 60.16(+1.11) & \multicolumn{1}{l|}{47.08(+4.21)} & 59.89(+7.48) & 60.70(+7.63) & \textbf{59.63(+8.56)} & \textbf{69.12(+16.45)} & 62.17(+7.89) \\ \cmidrule(l){2-13} 
\multirow{-18}{*}{K=1} & \textbf{GT} (Human-Annotated Image as Input) & \textbf{} & 64.82(+5.47) & 64.15(+6.20) & 57.65(+0.81) & 62.97(+3.92) & \multicolumn{1}{l|}{46.49(+3.62)} & 60.96(+8.55) & 63.50(+10.43) & 58.69(+7.62) & 70.86(+18.19) & 64.04(+9.76) \\ \midrule
 & CLIP-B~\cite{radford2021learning_clip} & 151M & 61.42(+2.07) & 60.83(+2.88) & 57.80(+0.96) & 56.84(-2.21) & \multicolumn{1}{l|}{43.61(+0.74)} & 58.69(+6.28) & 61.63(+8.56) & 57.35(+6.28) & 58.42(+5.75) & 51.07(-3.21) \\
 & CLIP-L~\cite{radford2021learning_clip} & 428M & 59.94(+0.59) & 61.86(+3.91) & 57.72(+0.88) & 57.58(-1.47) & \multicolumn{1}{l|}{44.27(+1.40)} & 59.49(+7.08) & 62.57(+9.50) & 56.82(+5.75) & 58.56(+5.89) & 51.74(-2.54) \\
 & OpenCLIP~\cite{cherti2023reproducible_openclip} & 151M & 62.23(+2.88) & 62.97(+5.02) & 58.68(+1.84) & 59.65(+0.60) & \multicolumn{1}{l|}{43.83(+0.96)} & 60.83(+8.42) & 61.36(+8.29) & 58.02(+6.95) & 57.75(+5.08) & 53.61(-0.67) \\
 & SigLIP 2 Base~\cite{tschannen2025siglip} & 375M & 62.90(+3.55) & 62.23(+4.28) & 60.98(+4.14) & 58.39(-0.66) & \multicolumn{1}{l|}{44.86(+1.99)} & 59.36(+6.95) & 58.69(+5.62) & 57.89(+6.82) & 58.42(+5.75) & 54.81(+0.53) \\
 & SigLIP 2 So400m~\cite{tschannen2025siglip} & 1.1B & 63.27(+3.92) & 63.56(+5.61) & 60.83(+3.99) & 59.87(+0.82) & \multicolumn{1}{l|}{45.53(+2.66)} & 56.68(+4.27) & \textbf{63.64(+10.57)} & 57.62(+6.55) & 58.29(+5.62) & 51.34(-2.94) \\
 & BGE-VL-large~\cite{zhou2025megapairs} & 428M & 62.75(+3.40) & 61.79(+3.84) & 58.98(+2.14) & 59.65(+0.60) & \multicolumn{1}{l|}{44.72(+1.85)} & 57.75(+5.34) & 59.63(+6.56) & 55.35(+4.28) & 56.82(+4.15) & 51.60(-2.68) \\ \cmidrule(l){2-13} 
 & E5-V~\cite{jiang2024e5} & 7.8B & 62.53(+3.18) & 62.53(+4.58) & 59.13(+2.29) & 58.98(-0.07) & \multicolumn{1}{l|}{43.16(+0.29)} & 58.96(+6.55) & 59.76(+6.69) & 58.29(+7.22) & 60.29(+7.62) & 52.14(-2.14) \\
 & GME~\cite{zhang2024gme} & 2.2B & 65.41(+6.06) & 65.48(+7.53) & 61.79(+4.95) & 61.05(+2.00) & \multicolumn{1}{l|}{46.93(+4.06)} & 61.50(+9.09) & 60.29(+7.22) & 56.02(+4.95) & 53.88(+1.21) & 50.53(-3.75) \\
 & VLM2Vec~\cite{jiang2024vlm2vec} & 7.7B & 64.08(+4.73) & 63.71(+5.76) & 60.90(+4.06) & 59.87(+0.82) & \multicolumn{1}{l|}{45.53(+2.66)} & 60.03(+7.62) & 60.29(+7.22) & 57.35(+6.28) & 56.42(+3.75) & 53.07(-1.21) \\
 & VLM2Vec-V2.0~\cite{meng2025vlm2vecV2} & 2.2B & 64.82(+5.47) & 64.75(+6.80) & 61.12(+4.28) & 61.05(+2.00) & \multicolumn{1}{l|}{45.08(+2.21)} & 58.69(+6.28) & 60.83(+7.76) & 56.68(+5.61) & 56.95(+4.28) & 52.41(-1.87) \\
 & BGE-MLLM-S1~\cite{zhou2025megapairs} & 7.6B & 62.82(+3.47) & 63.78(+5.83) & 59.65(+2.81) & 60.09(+1.04) & \multicolumn{1}{l|}{45.90(+3.03)} & 59.89(+7.48) & 58.69(+5.62) & 58.69(+7.62) & 58.29(+5.62) & 52.01(-2.27) \\
 & BGE-MLLM-S2~\cite{zhou2025megapairs} & 7.6B & 64.52(+5.17) & 64.52(+6.57) & 60.68(+3.84) & 60.61(+1.56) & \multicolumn{1}{l|}{45.01(+2.14)} & 59.09(+6.68) & 56.02(+2.95) & 56.02(+4.95) & 57.75(+5.08) & 50.27(-4.01) \\
 & UniME-V2~\cite{gu2025unime} & 7.1B & 65.19(+5.84) & 64.75(+6.80) & 62.08(+5.24) & 61.57(+2.52) & \multicolumn{1}{l|}{45.97(+3.10)} & 59.22(+6.81) & 61.10(+8.03) & 58.02(+6.95) & 58.29(+5.62) & 53.34(-0.94) \\
 & LamRA-Rank~\cite{liu2025lamra} & 8B & 65.85(+6.50) & 64.89(+6.94) & 60.53(+3.69) & 61.27(+2.22) & \multicolumn{1}{l|}{46.27(+3.40)} & 61.23(+8.82) & 63.10(+10.03) & 58.29(+7.22) & 56.95(+4.28) & 53.88(-0.40) \\
 \cmidrule(l){2-13} 
 & Ours (Qwen3-VL-2B Surrogate) & 2.1B & \textbf{67.55(+8.20)} & \textbf{65.85(+7.90)} & \textbf{64.52(+7.68)} & \textbf{62.31(+3.26)} & \multicolumn{1}{l|}{47.08(+4.21)} & 63.77(+11.36) & 59.89(+6.82) & \textbf{60.43(+9.36)} & \textbf{63.24(+10.57)} & \textbf{55.75(+1.47)} \\
 & Ours (Ovis2.5-2B Surrogate) & 2.6B & 66.08(+6.73) & \textbf{65.85(+7.90)} & 62.23(+5.39) & 62.01(+2.96) & \multicolumn{1}{l|}{\textbf{47.15(+4.28)}} & \textbf{64.44(+12.03)} & 60.43(+7.36) & 57.75(+6.68) & 60.70(+8.03) & 53.88(-0.40) \\
 & Ours (In-Family Surrogate) &  & \textbf{67.55(+8.20)} & 65.63(+7.68) & 62.82(+5.98) & 62.01(+2.96) & \multicolumn{1}{l|}{\textbf{47.15(+4.28)}} & 63.77(+11.36) & 61.50(+8.43) & 57.89(+6.82) & 60.70(+8.03) & 55.08(+0.80) \\
 \cmidrule(l){2-13} 
\multirow{-18}{*}{K=3} & \textbf{GT} (Human-Annotated Image as Input) & \textbf{} & 69.03(+9.68) & 66.89(+8.94) & 58.39(+1.55) & 64.01(+4.96) & \multicolumn{1}{l|}{48.78(+5.91)} & 64.71(+12.30) & 62.17(+9.10) & 59.63(+8.56) & 64.71(+12.04) & 57.89(+3.61) \\ \midrule
 & CLIP-B~\cite{radford2021learning_clip} & 151M & 61.79(+2.44) & 62.08(+4.13) & 59.79(+2.95) & 58.91(-0.14) & \multicolumn{1}{l|}{43.75(+0.88)} & 64.57(+12.16) & 62.03(+8.96) & 59.76(+8.69) & 60.03(+7.36) & 55.48(+1.20) \\
 & CLIP-L~\cite{radford2021learning_clip} & 428M & 60.61(+1.26) & 60.98(+3.03) & 58.24(+1.40) & 57.95(-1.10) & \multicolumn{1}{l|}{42.65(-0.22)} & 61.76(+9.35) & 62.03(+8.96) & 59.09(+8.02) & 58.42(+5.75) & 52.81(-1.47) \\
 & OpenCLIP~\cite{cherti2023reproducible_openclip} & 151M & 63.56(+4.21) & 62.08(+4.13) & 61.27(+4.43) & 59.79(+0.74) & \multicolumn{1}{l|}{45.60(+2.73)} & 62.17(+9.76) & 62.57(+9.50) & 59.22(+8.15) & 59.49(+6.82) & 55.35(+1.07) \\
 & SigLIP 2 Base~\cite{tschannen2025siglip} & 375M & 64.30(+4.95) & 63.05(+5.10) & 61.94(+5.10) & 60.61(+1.56) & \multicolumn{1}{l|}{45.75(+2.88)} & 62.57(+10.16) & 60.96(+7.89) & 59.89(+8.82) & 59.49(+6.82) & 54.95(+0.67) \\
 & SigLIP 2 So400m~\cite{tschannen2025siglip} & 1.1B & 65.19(+5.84) & 63.34(+5.39) & 62.60(+5.76) & 60.09(+1.04) & \multicolumn{1}{l|}{45.08(+2.21)} & 60.56(+8.15) & 61.36(+8.29) & 54.95(+3.88) & 59.22(+6.55) & 55.21(+0.93) \\
 & BGE-VL-large~\cite{zhou2025megapairs} & 428M & 64.67(+5.32) & 63.41(+5.46) & 61.27(+4.43) & 59.94(+0.89) & \multicolumn{1}{l|}{45.31(+2.44)} & 58.16(+5.75) & 61.23(+8.16) & 56.68(+5.61) & 56.82(+4.15) & 51.74(-2.54) \\ \cmidrule(l){2-13} 
 & E5-V~\cite{jiang2024e5} & 7.8B & 63.93(+4.58) & 62.38(+4.43) & 60.38(+3.54) & 59.94(+0.89) & \multicolumn{1}{l|}{43.61(+0.74)} & 60.16(+7.75) & 61.63(+8.56) & 60.96(+9.89) & 57.89(+5.22) & 55.35(+1.07) \\
 & GME~\cite{zhang2024gme} & 2.2B & 67.04(+7.69) & 66.30(+8.35) & 61.79(+4.95) & 61.94(+2.89) & \multicolumn{1}{l|}{46.78(+3.91)} & 65.78(+13.37) & 62.97(+9.90) & 59.36(+8.29) & 57.22(+4.55) & 54.95(+0.67) \\
 & VLM2Vec~\cite{jiang2024vlm2vec} & 7.7B & 66.00(+6.65) & 64.45(+6.50) & 61.71(+4.87) & 61.27(+2.22) & \multicolumn{1}{l|}{46.56(+3.69)} & 62.43(+10.02) & 61.50(+8.43) & 58.56(+7.49) & 60.16(+7.49) & 54.01(-0.27) \\
 & VLM2Vec-V2.0~\cite{meng2025vlm2vecV2} & 2.2B & 66.22(+6.87) & 65.11(+7.16) & 62.45(+5.61) & 61.86(+2.81) & \multicolumn{1}{l|}{45.90(+3.03)} & 61.63(+9.22) & 59.09(+6.02) & 58.69(+7.62) & 55.21(+2.54) & 51.20(-3.08) \\
 & BGE-MLLM-S1~\cite{zhou2025megapairs} & 7.6B & 64.89(+5.54) & 63.49(+5.54) & 61.20(+4.36) & 60.31(+1.26) & \multicolumn{1}{l|}{44.64(+1.77)} & 62.97(+10.56) & 57.75(+4.68) & 57.75(+6.68) & 59.22(+6.55) & 52.41(-1.87) \\
 & BGE-MLLM-S2~\cite{zhou2025megapairs} & 7.6B & 65.11(+5.76) & 65.04(+7.09) & 61.49(+4.65) & 61.20(+2.15) & \multicolumn{1}{l|}{45.60(+2.73)} & 59.36(+6.95) & 57.35(+4.28) & 57.35(+6.28) & 59.89(+7.22) & 54.14(-0.14) \\
 & UniME-V2~\cite{gu2025unime} & 7.1B & 66.30(+6.95) & 65.34(+7.39) & 62.60(+5.76) & 62.82(+3.77) & \multicolumn{1}{l|}{45.31(+2.44)} & 61.76(+9.35) & 63.77(+10.70) & 60.43(+9.36) & 58.02(+5.35) & 56.15(+1.87) \\
 & LamRA-Rank~\cite{liu2025lamra} & 8B & 66.44(+7.09) & 66.67(+8.72) & 61.71(+4.87) & 63.05(+4.00) & \multicolumn{1}{l|}{46.05(+3.18)} & 64.04(+11.63) & \textbf{64.97(+11.90)} & 59.89(+8.82) & 58.82(+6.15) & 54.95(+0.67) \\
 \cmidrule(l){2-13} 
 & Ours (Qwen3-VL-2B Surrogate) & 2.1B & \textbf{67.55(+8.20)} & 66.37(+8.42) & \textbf{63.93(+7.09)} & \textbf{63.41(+4.36)} & \multicolumn{1}{l|}{\textbf{46.86(+3.99)}} & 64.84(+12.43) & 63.10(+10.03) & 60.83(+9.76) & \textbf{62.17(+9.50)} & 55.21(+0.93) \\
 & Ours (Ovis2.5-2B Surrogate) & 2.6B & 65.71(+6.36) & 64.67(+6.72) & 61.71(+4.87) & 62.38(+3.33) & \multicolumn{1}{l|}{46.71(+3.84)} & \textbf{66.18(+13.77)} & 63.50(+10.43) & \textbf{61.50(+10.43)} & 60.70(+8.03) & \textbf{56.28(+2.00)} \\
 & Ours (In-Family Surrogate) &  & \textbf{67.55(+8.20)} & \textbf{66.96(+9.01)} & 62.31(+5.47) & 62.38(+3.33) & \multicolumn{1}{l|}{46.49(+3.62)} & 64.84(+12.43) & 63.50(+10.43) & 57.49(+6.42) & 60.70(+8.03) & 55.61(+1.33) \\ \cmidrule(l){2-13} 
\multirow{-18}{*}{K=5} & \textbf{GT} (Human-Annotated Image as Input) & \textbf{} & 69.99(+10.64) & 68.00(+10.05) & 59.42(+2.58) & 64.89(+5.84) & \multicolumn{1}{l|}{47.60(+4.73)} & 66.84(+14.43) & 64.17(+11.10) & 61.10(+10.03) & 64.71(+12.04) & 58.82(+4.54) \\ \bottomrule
\end{tabular}
\end{lrbox}
\resizebox{\linewidth}{!}{\usebox{\myentiretablebox}}
\vspace{-2mm}
\caption{Main results on MRAG-Bench and Visual-RAG under different visual evidence selection methods and Top-$K$ settings. Numbers in parentheses denote absolute performance differences relative to the zero-shot baseline for the same model. \textbf{Bold} values indicate the best-performing retrieval-based method in each setting (excluding the ground-truth (GT) oracle). Additional results are provided in Appendix~\ref{appendix_sec:addtional_results}.}
\vspace{-6mm}
\label{tab:exp-main-results}
\end{table*}

\section{Experiments}
\label{sec:experiments}
\vspace{-0.5mm}

This section presents the experimental setup, main results, and further analyses
(\cref{subsec:experimental_setup,subsec:main_results,subsec:analysis_utility_estimation,subsec:surrogate-vs-main,subsec:ablation,subsec:additional_analyses}).

\vspace{-0.5mm}
\subsection{Experimental Setup}
\label{subsec:experimental_setup}
\paragraph{Benchmarks.}
We evaluate our method on two MM-RAG benchmarks: MRAG-Bench~\cite{hu2024mragbench} (1,353 image–question pairs with ground-truth images) and Visual-RAG~\cite{wu2025visualrag} (374 text-only questions with relevant images).
For each query, we build a fixed candidate pool and evaluate all selection methods on the same pool for fair comparison. Concretely, the pool is formed by combining benchmark-provided ground-truth (relevant) images when available with additional CLIP-style retrieved images from the benchmark corpus; duplicate images are removed. We then rank this fixed pool and report Top-$K$ performance for $K\in\{1,\dots,5\}$. For MRAG-Bench, the pool includes annotated ground-truth evidence plus retrieved distractors; For VisualRAG, we first include the benchmark-provided ground-truth images (up to five per query), and then fill the remaining slots with retriever-returned evidence so that each query is associated with a fixed candidate pool of 10 images. Full construction details, including pool size and dataset-specific preprocessing, are provided in Appendix~\ref{app:subsec:benchmarks}.

\paragraph{Baselines.}
We compare against several categories of baselines: (i) A zero-shot setting without retrieved images and a ground-truth (GT) oracle setting; (ii) CLIP-style retrievers, including CLIP~\cite{radford2021learning_clip}, OpenCLIP~\cite{cherti2023reproducible_openclip}, SigLIP~2~\cite{tschannen2025siglip}, and BGE-VL-Large~\cite{zhou2025megapairs}; (iii) MLLM-based retrievers and rerankers that use MLLMs for relevance estimation, including E5-V~\cite{jiang2024e5}, GME~\cite{zhang2024gme}, VLM2Vec~\cite{jiang2024vlm2vec,meng2025vlm2vecV2}, BGE-MLLM~\cite{zhou2025megapairs}, LamRA-Rank~\cite{hu2024mragbench}, and UniME-V2~\cite{gu2025unime}; and (iv) Answer-level objective methods (Problem~\ref{eq:utility_kl_max}), including average token probability and MC sampling.

\paragraph{Backbone Models.}
Our experiments cover five families of state-of-the-art open-source MLLMs (Qwen3-VL~\cite{bai2025qwen3vltechnicalreport}, MiniCPM4.5~\cite{yu2025minicpm}, Gemma3~\cite{team2025gemma3}, OVIS2.5~\cite{lu2025ovis2}, and InternVL3.5~\cite{wang2025internvl3}), spanning multiple architectures and parameter scales. 

\paragraph{Metrics.}
For MRAG-Bench, we report exact-match accuracy; for Visual-RAG, we follow the original protocol and evaluate answers using an LLM-as-Judge metric. We additionally measure computational cost in terms of FLOPs and latency.

Full details of benchmarks, baselines, model variants, prompt templates, and evaluation protocols are provided in Appendix~\ref{sec:appendix_setup}.

\begin{table}[!htbp]
\begin{lrbox}{\myentiretablebox}
\begin{tabular}{@{}ll|lll|lll@{}}
\toprule
\multirow{2}{*}{Dataset} & \multirow{2}{*}{Model} & \multicolumn{3}{c|}{Ours} & \multicolumn{3}{c}{Answer-level Estimation} \\ \cmidrule(l){3-8} 
 &  & Top-1 & Top-3 & Top-5 & Top-1 & Top-3 & Top-5 \\ \midrule
\multirow{5}{*}{MRAG-Bench} & Qwen3-VL-8B & 65.71 & 67.41 & 67.11 & 63.27(-2.44) & 64.67(-2.74) & 65.04(-2.07) \\
 & MiniCPM-V4.5 & 65.11 & 66.89 & 65.71 & 62.90(-2.21) & 64.15(-2.74) & 64.15(-1.56) \\
 & Gemma3-12B & 59.87 & 61.94 & 63.12 & 59.42(-0.45) & 61.57(-0.37) & 61.64(-1.48) \\
 & Ovis2.5-9B & 61.64 & 62.53 & 63.19 & 60.38(-1.26) & 61.20(-1.33) & 61.27(-1.92) \\
 & InternVL3.5-8B & 48.41 & 47.75 & 46.56 & 46.42(-1.99) & 47.08(-0.67) & 45.53(-1.03) \\ \midrule
\multirow{10}{*}{Visual-RAG} & Qwen3-VL-8B & 62.43 & 63.37 & 64.57 & 57.49(-4.94) & 59.76(-3.61) & 62.97(-1.60) \\
 & MiniCPM-V4.5 & 59.63 & 61.23 & 62.03 & 61.50(+1.87) & 61.10(-0.13) & 62.57(+0.54) \\
 & Gemma3-12B & 56.55 & 60.43 & 60.03 & 58.29(+1.74) & 55.61(-4.82) & 57.09(-2.94) \\
 & Ovis2.5-9B & 70.05 & 60.96 & 61.23 & 54.81(-15.24) & 54.41(-6.55) & 56.82(-4.41) \\
 & InternVL3.5-8B & 61.23 & 58.16 & 57.22 & 52.94(-8.29) & 54.68(-3.48) & 55.88(-1.34) \\ \cmidrule(l){2-8} 
 & Qwen3-VL-8B & 62.43 & 63.37 & 64.57 & 57.89(-4.54) & 60.29(-3.08) & 63.90(-0.67) \\
 & MiniCPM-V4.5 & 59.63 & 61.23 & 62.03 & 61.10(+1.47) & 63.24(+2.01) & 64.71(+2.68) \\
 & Gemma3-12B & 56.55 & 60.43 & 60.03 & 55.88(-0.67) & 57.75(-2.68) & 59.76(-0.27) \\
 & Ovis2.5-9B & 70.05 & 60.96 & 61.23 & 54.95(-15.10) & 55.08(-5.88) & 57.62(-3.61) \\
 & InternVL3.5-8B & 61.23 & 58.16 & 57.22 & 53.07(-8.16) & 57.22(-0.94) & 57.49(+0.27) \\ \bottomrule
\end{tabular}
\end{lrbox}
\resizebox{\linewidth}{!}{\usebox{\myentiretablebox}}
\vspace{-2mm}
\caption{Downstream task performance of our latent-helpfulness-based selection method versus answer-level uncertainty-based selection methods. The table reports performance on MRAG-Bench and Visual-RAG, rather than uncertainty values. For the compared baselines, candidate images are ranked by choice softmax entropy on MRAG-Bench, and by average token probability or MC sampling with NLI-based consistency on Visual-RAG. Parentheses denote performance differences relative to our method. Additional results for different $K$ values are provided in Appendix~\ref{appendix_sec:addtional_results}.}

\vspace{-5mm}
\label{tab:Y_vs_A}
\end{table}

\begin{table*}[!htbp]
\begin{lrbox}{\myentiretablebox}
\begin{tabular}{@{}l|l|lllll|lllll@{}}
\toprule
\multirow{2}{*}{Top-K} & \multirow{2}{*}{Image Selection Method} & \multicolumn{5}{c|}{\textbf{MRAG-Bench}} & \multicolumn{5}{c}{\textbf{Visual-RAG}} \\ \cmidrule(l){3-12} 
 &  & Qwen3-VL-8B & MiniCPM-V4.5 & Gemma3-12B & Ovis2.5-9B & InternVL3.5-8B & Qwen3-VL-8B & MiniCPM-V4.5 & Gemma3-12B & Ovis2.5-9B & InternVL3.5-8B \\ \midrule
\multirow{6}{*}{K=1} & Main Model & 65.71 & 65.11 & 59.87 & 61.64 & 48.41 & 62.43 & 59.63 & 56.55 & 70.05 & 61.23 \\ \cmidrule(l){2-12} 
 & Qwen3-VL-2B & 65.56(-0.15) & 65.41(+0.30) & 60.83(+0.96) & 61.42(-0.22) & 47.89(-0.52) & 59.89(-2.54) & 60.16(+0.53) & 59.22(+2.67) & 68.85(-1.20) & 64.17(+2.94) \\
 & Ovis2.5-2B & 64.97(-0.74) & 64.08(-1.03) & 60.53(+0.66) & 60.16(-1.48) & 47.23(-1.18) & 61.36(-1.07) & 61.23(+1.60) & 57.75(+1.20) & 69.12(-0.93) & 62.30(+1.07) \\
 & MiniCPM-V4.5-AWQ & 64.23(-1.48) & 64.60(-0.51) & 60.38(+0.51) & 61.71(+0.07) & 48.41(+0.00) & 60.16(-2.27) & 60.70(+1.07) & 58.29(+1.74) & 67.25(-2.80) & 62.43(+1.20) \\
 & Gemma3-4B & 64.15(-1.56) & 62.75(-2.36) & 59.72(-0.15) & 61.71(+0.07) & 45.23(-3.18) & 59.89(-2.54) & 61.50(+1.87) & 59.63(+3.08) & 68.05(-2.00) & 61.90(+0.67) \\
 & InternVL3.5-2B & 63.64(-2.07) & 62.60(-2.51) & 59.87(+0.00) & 60.24(-1.40) & 47.08(-1.33) & 57.62(-4.81) & 58.56(-1.07) & 55.88(-0.67) & 66.98(-3.07) & 62.17(+0.94) \\ \midrule
\multirow{6}{*}{K=3} & Main Model & 67.41 & 66.89 & 61.94 & 62.53 & 47.75 & 63.37 & 61.23 & 60.43 & 60.96 & 58.16 \\ \cmidrule(l){2-12} 
 & Qwen3-VL-2B & 67.55(+0.14) & 65.85(-1.04) & 64.52(+2.58) & 62.31(-0.22) & 47.08(-0.67) & 63.77(+0.40) & 59.89(-1.34) & 60.43(+0.00) & 63.24(+2.28) & 55.75(-2.41) \\
 & Ovis2.5-2B & 66.08(-1.33) & 65.85(-1.04) & 62.23(+0.29) & 62.01(-0.52) & 47.15(-0.60) & 64.44(+1.07) & 60.43(-0.80) & 57.75(-2.68) & 60.70(-0.26) & 53.88(-4.28) \\
 & MiniCPM-V4.5-AWQ & 66.96(-0.45) & 65.63(-1.26) & 63.41(+1.47) & 62.31(-0.22) & 47.38(-0.37) & 63.64(+0.27) & 61.50(+0.27) & 58.42(-2.01) & 60.03(-0.93) & 54.41(-3.75) \\
 & Gemma3-4B & 65.78(-1.63) & 64.45(-2.44) & 62.82(+0.88) & 61.57(-0.96) & 46.78(-0.97) & 62.30(-1.07) & 63.37(+2.14) & 57.89(-2.54) & 58.42(-2.54) & 52.14(-6.02) \\
 & InternVL3.5-2B & 65.48(-1.93) & 64.15(-2.74) & 63.12(+1.18) & 61.64(-0.89) & 47.15(-0.60) & 62.17(-1.20) & 63.37(+2.14) & 60.16(-0.27) & 62.30(+1.34) & 55.08(-3.08) \\ \midrule
\multirow{6}{*}{K=5} & Main Model & 67.11 & 65.71 & 63.12 & 63.19 & 46.56 & 64.57 & 62.03 & 60.03 & 61.23 & 57.22 \\ \cmidrule(l){2-12} 
 & Qwen3-VL-2B & 67.55(+0.44) & 66.37(+0.66) & 63.93(+0.81) & 63.41(+0.22) & 46.86(+0.30) & 64.84(+0.27) & 63.10(+1.07) & 60.83(+0.80) & 62.17(+0.94) & 55.21(-2.01) \\
 & Ovis2.5-2B & 65.71(-1.40) & 64.67(-1.04) & 61.71(-1.41) & 62.38(-0.81) & 46.71(+0.15) & 66.18(+1.61) & 63.50(+1.47) & 61.50(+1.47) & 60.70(-0.53) & 56.28(-0.94) \\
 & MiniCPM-V4.5-AWQ & 66.74(-0.37) & 66.96(+1.25) & 64.75(+1.63) & 62.38(-0.81) & 46.42(-0.14) & 66.31(+1.74) & 63.50(+1.47) & 60.29(+0.26) & 60.29(-0.94) & 56.02(-1.20) \\
 & Gemma3-4B & 66.15(-0.96) & 65.34(-0.37) & 62.31(-0.81) & 62.90(-0.29) & 47.52(+0.96) & 63.24(-1.33) & 63.10(+1.07) & 57.49(-2.54) & 59.89(-1.34) & 54.81(-2.41) \\
 & InternVL3.5-2B & 66.81(-0.30) & 64.45(-1.26) & 62.82(-0.30) & 62.60(-0.59) & 46.49(-0.07) & 62.83(-1.74) & 65.24(+3.21) & 60.29(+0.26) & 62.30(+1.07) & 55.61(-1.61) \\ \bottomrule
\end{tabular}
\end{lrbox} 

\centering
\resizebox{0.95\linewidth}{!}{\usebox{\myentiretablebox}}

\caption{Comparison between surrogate models and their corresponding large main models under the same utility estimation framework. Results are reported across Top-$K$ settings; additional results are provided in Appendix~\ref{appendix_sec:addtional_results}.}

\vspace{-3mm}
\label{tab:surrogate_vs_main_model}
\end{table*}

\subsection{Main Results: Comparative Analysis}
\label{subsec:main_results}
We first conduct a comprehensive comparison with state-of-the-art methods, including both CLIP-based and MLLM-based baselines, on two benchmark datasets, as in Table~\ref{tab:exp-main-results}. Overall, our method achieves the best performance in the majority of settings across both benchmarks, consistently outperforming baselines across various Top-$K$ settings. In particular, on Visual-RAG, our method yields substantial improvements of up to 
$+16.18$, outperforming most competing approaches by a clear margin. Notably, while many baselines rely on \textit{supervised retrieval training} and \textit{large-scale} MLLMs (often 7B+ parameters), our approach is \textit{training-free} and achieves comparable or better performance using only \textit{lightweight} 2B-scale surrogate models. \emph{These results demonstrate both the effectiveness and parameter efficiency of our utility-oriented evidence selection method.} Moreover, our method closely approaches the ground-truth (GT) oracle upper bound, and in some cases even slightly exceeds it. This suggests that \emph{our proposed utility-oriented visual evidence selection can, in certain cases, even identify evidence that is more task-effective for a given model’s generation than human annotations.} Lastly, we observe that the zero-shot setting outperforms several RAG baselines, further highlighting \emph{the necessity of utility-aware evidence selection rather than indiscriminate RAG.}

\subsection{Analysis on Utility Estimation Metrics}
\label{subsec:analysis_utility_estimation}

Next, in Table~\ref{tab:Y_vs_A}, we demonstrate the effectiveness of introducing the latent helpfulness variable $Z$ (Problem~\ref{eq:latent_kl_max}) compared to directly optimizing the answer-space objective $Y$ (Problem~\ref{eq:utility_kl_max}). For open-ended Visual-RAG questions, we include two $Y$-based baselines (softmax and MC sampling) for comprehensive comparison. As shown, our method consistently outperforms the $Y$ objective in the majority of evaluation settings. Notably, the performance gap is more advanced on Visual-RAG than on MRAG-Bench, likely due to its open-ended nature, where generation noise more strongly affects the reliability of answer-level uncertainty. Overall, these results support our central claim that \emph{directly modeling answer-space objectives conflates evidence utility with generation noise, whereas decoupling utility estimation from answer generation yields a more stable and reliable criterion for visual evidence selection}.

\subsection{Surrogate Transferability and Efficiency}
\label{subsec:surrogate-vs-main}

In Table~\ref{tab:surrogate_vs_main_model}, we then compare lightweight surrogate models with substantially larger main models under the same latent-variable-based utility estimation framework to evaluate ranking agreement and assess the effectiveness of surrogate acceleration. Overall, lightweight surrogate models achieve performance that is highly comparable to the larger main models. Across all evaluated backbones and Top-$K$ settings, surrogate models exhibit a small average absolute performance gap of $0.38$ in exact-match accuracy on MRAG-Bench and $0.18$ in LLM-as-judge scores on Visual-RAG, while matching or outperforming their corresponding main models in 36\% and 48\% of the evaluated settings, respectively.

\begin{table}[!htb]
\begin{lrbox}{\myentiretablebox}
\begin{tabular}{@{}l|l|l|l|ll|ll@{}}

\toprule
\multirow{2}{*}{Model Family} & \multirow{2}{*}{\#Params} & \multirow{2}{*}{Method} & \multirow{2}{*}{\#Tokens} & \multicolumn{2}{c|}{Prefill} & \multicolumn{2}{c}{Decode} \\
 &  &  &  & GFLOPs & Latency(ms) & GFLOPs & Latency(ms) \\ \midrule
\multirow{4}{*}{Qwen3-VL} & 2.1B & \multirow{2}{*}{\makecell{Discriminative\\Estimation}} & \multirow{2}{*}{289} & 991 & 0.0480 & 3 & 0.0291 \\
 & 8.1B &  &  & 4360 & 0.0638 & 15 & 0.0377 \\ \cmidrule(l){2-8} 
 & 2.1B & \multirow{2}{*}{\makecell{Answer-level\\UQ}} & \multirow{2}{*}{315} & 984 & 0.0471 & 101 & 0.8422 \\
 & 8.1B &  &  & 4328 & 0.0640 & 443 & 1.1067 \\ \midrule
\multirow{4}{*}{Ovis2.5} & 2.6B & \multirow{2}{*}{\makecell{Discriminative\\Estimation}} & \multirow{2}{*}{326} & 1176 & 0.0727 & 3 & 0.0252 \\
 & 9.2B &  &  & 5034 & 0.0871 & 15 & 0.0344 \\ \cmidrule(l){2-8} 
 & 2.6B & \multirow{2}{*}{\makecell{Answer-level\\UQ}} & \multirow{2}{*}{352} & 1169 & 0.0714 & 101 & 0.7286 \\
 & 9.2B &  &  & 5003 & 0.0875 & 443 & 1.0021 \\ \midrule
\multirow{4}{*}{InternVL3.5} & 2.3B & \multirow{2}{*}{\makecell{Discriminative\\Estimation}} & \multirow{2}{*}{403} & 1427 & 0.0532 & 4 & 0.0304 \\
 & 8.5B &  &  & 6203 & 0.0789 & 15 & 0.0393 \\ \cmidrule(l){2-8} 
 & 2.3B & \multirow{2}{*}{\makecell{Answer-level\\UQ}} & \multirow{2}{*}{430} & 1421 & 0.0527 & 103 & 0.8529 \\
 & 8.5B &  &  & 6172 & 0.0776 & 449 & 1.1180 \\ \midrule
\multirow{4}{*}{Gemma-3} & 4.3B & \multirow{2}{*}{\makecell{Discriminative\\Estimation}} & \multirow{2}{*}{365} & 2827 & 0.1107 & 8 & 0.0417 \\
 & 12.2B &  &  & 8560 & 0.1354 & 24 & 0.0607 \\ \cmidrule(l){2-8} 
 & 4.3B & \multirow{2}{*}{\makecell{Answer-level\\UQ}} & \multirow{2}{*}{388} & 2788 & 0.1097 & 227 & 1.2162 \\
 & 12.2B &  &  & 8442 & 0.1340 & 688 & 1.7628 \\ \bottomrule
\end{tabular}
\end{lrbox}
\resizebox{\linewidth}{!}{\usebox{\myentiretablebox}}
\caption{Computation cost comparison between our approach and answer-space estimation across multiple model families. We report the number of tokens, FLOPs, and latency for both the prefill and decoding stages.}

\vspace{-3mm}
\label{tab:computation_cost}
\end{table}

\begin{table*}[!htb]
\begin{lrbox}{\myentiretablebox}
\begin{tabular}{@{}l|l|lllll|lllll@{}}
\toprule
\multirow{2}{*}{Top-K} & \multirow{2}{*}{Method} & \multicolumn{5}{c|}{\textbf{MRAG-Bench}} & \multicolumn{5}{c}{\textbf{Visual-RAG}} \\ \cmidrule(l){3-12} 
 &  & Qwen3-VL-8B & MiniCPM-V4.5 & Gemma3-12B & Ovis2.5-9B & InternVL3.5-8B & Qwen3-VL-8B & MiniCPM-V4.5 & Gemma3-12B & Ovis2.5-9B & InternVL3.5-8B \\ \midrule
\multirow{3}{*}{K=1} & Ours & 65.71 & 65.11 & 59.87 & 61.64 & 48.41 & 62.43 & 59.63 & 56.55 & 70.05 & 61.23 \\
 & Verbalized UQ & 62.82(-2.89) & 58.46(-6.65) & 55.65(-4.22) & 57.95(-3.69) & 43.53(-4.88) & 59.36(-3.07) & 58.82(-0.81) & 56.68(+0.13) & 60.03(-10.02) & 48.93(-12.30) \\
 & Listwise Ranking & 58.54(-7.17) & 59.35(-5.76) & 56.91(-2.96) & 58.17(-3.47) & 43.39(-5.02) & 63.24(+0.81) & 62.43(+2.80) & 59.09(+2.54) & 58.56(-11.49) & 54.55(-6.68) \\ \midrule
\multirow{3}{*}{K=3} & Ours & 67.41 & 66.89 & 61.94 & 62.53 & 47.75 & 63.37 & 61.23 & 60.43 & 60.96 & 58.16 \\
 & Verbalized UQ & 64.67(-2.74) & 61.27(-5.62) & 58.91(-3.03) & 58.61(-3.92) & 44.05(-3.70) & 61.10(-2.27) & 61.90(+0.67) & 57.22(-3.21) & 58.82(-2.14) & 52.01(-6.15) \\
 & Listwise Ranking & 63.27(-4.14) & 61.86(-5.03) & 59.65(-2.29) & 58.91(-3.62) & 45.90(-1.85) & 61.90(-1.47) & 62.30(+1.07) & 58.82(-1.61) & 59.49(-1.47) & 52.94(-5.22) \\ \midrule
\multirow{3}{*}{K=5} & Ours & 67.11 & 65.71 & 63.12 & 63.19 & 46.56 & 64.57 & 62.03 & 60.03 & 61.23 & 57.22 \\
 & Verbalized UQ & 65.56(-1.55) & 62.82(-2.89) & 60.46(-2.66) & 60.09(-3.10) & 43.83(-2.73) & 65.64(+1.07) & 61.76(-0.27) & 58.82(-1.21) & 59.09(-2.14) & 55.35(-1.87) \\
 & Listwise Ranking & 65.11(-2.00) & 63.86(-1.85) & 59.72(-3.40) & 60.53(-2.66) & 44.64(-1.92) & 66.71(+2.14) & 61.63(-0.40) & 60.56(+0.53) & 60.70(-0.53) & 52.01(-5.21) \\ \bottomrule
\end{tabular}
\end{lrbox}

\centering
\resizebox{0.95\linewidth}{!}{\usebox{\myentiretablebox}}

\caption{Comparison of utility estimation and evidence ranking strategies under a unified experimental setting, including verbalized uncertainty quantification and listwise ranking over all candidate evidence. Results are reported across different Top-$K$ settings. Additional results for Top-$1$ to Top-$5$ are provided in Appendix~\ref{appendix_sec:addtional_results}.}

\vspace{-5mm}
\label{tab:ablation}
\end{table*}

In Table~\ref{tab:computation_cost}, we further report the computational cost (FLOPs and inference latency) of different utility estimation strategies. Across model families, lightweight surrogate models incur substantially lower cost (e.g., 2B vs. 8B–12B: <25\% prefill FLOPs), confirming the efficiency of surrogate-based utility estimation. Moreover, the $Z$ objective is over $20\times$ more efficient than $Y$-based estimation in decoding FLOPs, with the gap further widening under MC sampling due to linear cost scaling.

Overall, these results show that \emph{our method enables scalable and utility-aware evidence selection while offering substantial computational advantages for practical multimodal RAG pipelines}.

\subsection{Ablation Analysis and Alternative Designs}
\label{subsec:ablation}

Table~\ref{tab:ablation} compares our method with alternative evidence scoring and ranking strategies, including verbalized uncertainty quantification and listwise ranking. Across settings, our method provides a more stable criterion for evidence selection, whereas generation-level alternatives are less consistent.

We also test the robustness of the auxiliary helpfulness probe to prompt realization by considering lexical substitution, paraphrasing, and output-label variation while keeping the model, benchmark, and candidate pool fixed. Performance remains largely stable across these variants, suggesting that the probe is not overly sensitive to moderate changes in wording or output tokens. Full templates and complete results are provided in Appendix~\ref{sec:appendix_setup} and Appendix~\ref{app:subsec:prompt_sensitivity_results}.

\subsection{Additional Analyses}
\label{subsec:additional_analyses}

We provide additional analyses on assumption robustness, surrogate--main disagreement, and cross-scale transferability.

\subsubsection{Robustness under Noisy Candidate Pools}

We evaluate three increasingly perturbed candidate-pool regimes---\emph{Pure Retrieve}, \emph{GT + Hard Negatives}, and \emph{GT + Hard Negatives + Stochastic Perturbation}---while varying pool size from 10 to 20. Performance remains stable across these settings, indicating that the proposed ranking criterion is robust to heterogeneous and noisy candidate pools. Detailed construction protocols and full results are provided in Appendix~\ref{sec:appendix_setup} and Appendix~\ref{app:noisy_candidate_pool}. Appendix~\ref{app:subsec:emp-val-thm1} further reports an empirical study related to Theorem~\ref{thm:equivalent1}.

\subsubsection{Error Analysis of Surrogate--Main Disagreement}

\begin{table}[]
\resizebox{0.95\linewidth}{!}{
\begin{tabular}{l|l|l|l|l|l}
\hline
Benchmark & Main & Surrogate & Candidates & FPs & FP (\%) \\ \hline
MRAG-Bench & Qwen3-VL-8B & Qwen3-VL-2B & 11523 & 259 & 2.25 \\
MRAG-Bench & Ovis2.5-9B & Ovis2.5-2B & 11523 & 227 & 1.97 \\
Visual-RAG & Qwen3-VL-8B & Qwen3-VL-2B & 3740 & 131 & 3.50 \\
Visual-RAG & Ovis2.5-9B & Ovis2.5-2B & 3740 & 83 & 2.22 \\ \hline
\end{tabular}}
\caption{False Positive Analysis of Surrogate Model}
\label{tab:false-positive-analysis}
\end{table}

We further analyze false positives (FPs), defined as candidates ranked in the top 25\% by the surrogate but in the bottom 25\% by the corresponding main model. Results are shown in Table~\ref{tab:false-positive-analysis}. Across two benchmarks and two surrogate--main pairs, the FP ratio remains low (about 2\%--3.5\%), suggesting that severe ranking disagreement is relatively rare. Fine-grained category-level analysis is deferred to Appendix~\ref{app:subsec:finegrained-false-pos-analysis}.

\subsubsection{Cross-Scale Transferability of Surrogate Rankings}

To examine whether surrogate-based helpfulness ranking transfers to substantially larger models, we evaluate Qwen2.5-VL-3B-Instruct and Qwen2.5-VL-7B-Instruct as surrogates for Qwen2.5-VL-72B-Instruct on MRAG-Bench and Visual-RAG. We report \emph{GT hit rate}, i.e., the proportion of top-$K$ selected evidence that belongs to the ground-truth set.

\begin{table}[]
\resizebox{0.95\linewidth}{!}{
\begin{tabular}{l|l|lllll}
\hline
Benchmark & Model & Top-1 & Top-2 & Top-3 & Top-4 & Top-5 \\ \hline
\multirow{3}{*}{MRAG-Bench} & Qwen2.5-VL-3B & 80.49 & 80.11 & 78.74 & 77.64 & 75.56 \\
 & Qwen2.5-VL-7B & 83.93 & 81.70 & 78.99 & 76.29 & 73.23 \\
 & Qwen2.5-VL-72B & 83.96 & 81.85 & 80.96 & 80.09 & 78.38 \\ \hline
\multirow{3}{*}{Visual-RAG} & Qwen2.5-VL-3B & 71.93 & 69.12 & 64.88 & 61.63 & 58.24 \\
 & Qwen2.5-VL-7B & 72.46 & 68.32 & 65.60 & 62.63 & 60.59 \\
 & Qwen2.5-VL-72B & 80.21 & 74.60 & 71.21 & 67.65 & 64.12 \\ \hline
\end{tabular}}
\caption{Empirical validation on 72B-scale models. We report GT hit rates on 100 samples.}
\label{tab:emp-val-72B-scale}
\end{table}

As shown in Table~\ref{tab:emp-val-72B-scale}, the 3B/7B surrogates remain well aligned with the 72B main model across both datasets. On MRAG-Bench, the gap is consistently small across $K$; on Visual-RAG, absolute performance improves with scale, but the relative ranking trends remain similar. These results support the transferability of surrogate-based helpfulness ranking across model scales.
\section{Case Studies}

Finally, we provide two qualitative examples to illustrate the advantages of utility-oriented visual evidence selection. Figure~\ref{fig:case_study}(a) illustrates a car model identification example from MRAG-Bench. Relevance-based methods select a visually similar but incorrect model due to shared color, while our method correctly identifies a helpful image of the correct model by focusing on task-relevant visual cues rather than surface-level similarity. In Figure~\ref{fig:case_study}(b), we present a Visual-RAG example contrasting answer-level objectives with our proposed utility estimation, showing that the former can favor uninformative images with high confidence or consistency, whereas our approach correctly prioritizes evidence that reveals task-critical visual information. 
More detailed task explanations, together with additional analysis of representative failure cases, are provided in Appendix~\ref{app:case}.

\begin{figure}[!htb]
  \centering
  \includegraphics[width=0.5\textwidth]{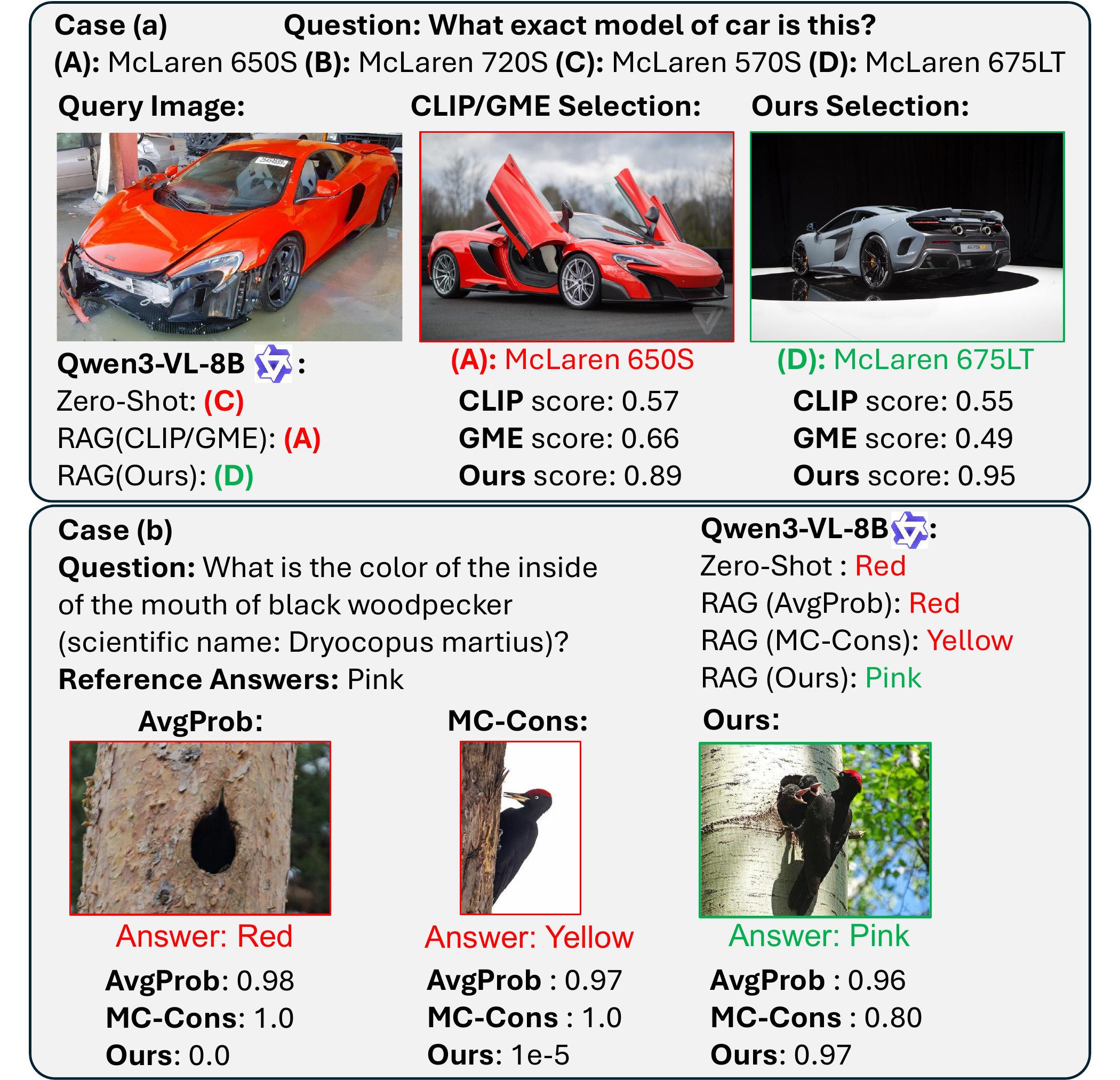}
    \caption{Qualitative case studies. \textbf{(a)} A case illustrating the gap between retrieval relevance and downstream utility: CLIP/GME scores denote relevance/similarity-based ranking scores, whereas our score denotes the surrogate helpfulness score used for utility-oriented selection. \textbf{(b)} A case comparing our latent-helpfulness-based ranking with answer-level uncertainty-based ranking: AvgProb and MC-Cons are answer-level uncertainty criteria, whereas our score denotes the surrogate helpfulness score.}

\vspace{-4mm}
\label{fig:case_study}
\end{figure}

\vspace{-1mm}
\section{Conclusion}

In this paper, we investigate the problem of visual evidence selection for multimodal RAG and argue that semantic relevance and answer-space estimation are insufficient for identifying truly helpful visual context. We propose a utility-oriented formulation based on information gain and a discriminative framework that decouples evidence utility from answer-space variability via a latent helpfulness variable, along with a surrogate-accelerated pipeline for efficient deployment. Experiments on two benchmarks across multiple model families show that our method consistently outperforms state-of-the-art RAG baselines while remaining computationally efficient. Beyond these empirical gains, our work highlights the importance of explicitly modeling evidence utility in multimodal RAG and suggests a promising direction for more principled and efficient evidence selection in future multimodal RAG pipelines.

\newpage

\section*{Limitations and Future Work}

While our approach demonstrates strong empirical performance and efficiency across multiple multimodal RAG benchmarks, several limitations point to promising directions for future work.

1). Our surrogate-accelerated pipeline leverages the observation that judgments of evidence helpfulness transfer well across model scales. While our experiments show that lightweight surrogate models closely match the utility preferences of larger models, exploring more systematic strategies for surrogate selection, adaptation, or dynamic scaling remains an interesting direction for future research.

2). We primarily study visual evidence selection in QA-style multimodal RAG settings. Although the proposed utility-oriented formulation is not tied to a specific answer format, our empirical validation is currently limited to benchmark tasks centered on question answering. Extending the framework to other multimodal generation settings, such as captioning, dialogue, long-form reasoning, or interleaved text--visual retrieval, would further clarify its generality. In addition, extending the current visual-only setting to other modalities (e.g., video and audio) or to mixed text--visual evidence selection may further broaden the applicability of this framework.

We believe these directions build naturally on the strengths of our approach and offer opportunities to develop more general and principled evidence selection mechanisms for multimodal RAG systems.

\section*{Ethical Statement}
This work studies visual evidence selection for multimodal retrieval-augmented generation, with the goal of improving the robustness and efficiency of model reasoning. Our research does not involve human subjects, personal data, or sensitive information. All experiments are conducted using publicly available benchmarks and open-source multimodal models.
We emphasize that our method is intended for research and benchmarking purposes, and we do not advocate its use in high-stakes or safety-critical applications without additional safeguards and human oversight. As with all large language and vision-language models, downstream deployment should follow established ethical guidelines and responsible AI practices.

\bibliography{latex/main}

\label{sec:appendix}

\clearpage 
\onecolumn
\appendix

\section{Extended Related Work}
\label{sec:appendix_related_work}

\subsection{Visual Evidence Selection for Multimodal RAG}

Early approaches to visual evidence selection in multimodal RAG adopt CLIP-style dual encoders that embed images and text into a shared representation space, enabling efficient retrieval via embedding similarity~\cite{radford2021learning_clip}. 
Subsequent variants improve training scale, data diversity, or loss design, leading to stronger dense vision--language retrievers such as OpenCLIP~\cite{cherti2023reproducible_openclip}, SigLIP~\cite{tschannen2025siglip}, and BGE-VL~\cite{zhou2025megapairs}. 
These methods learn image--text semantic alignment through contrastive objectives and form the foundation of modern vision--language retrieval.
Beyond bi-encoder architectures, other work explores alternative multimodal alignment designs, including cross-modal encoders that jointly process image--text pairs and are trained with contrastive or matching objectives~\cite{jia2021scaling_ALIGN, li2021align_ALBEF}. 
These relevance-based retrievers are widely adopted as the retrieval backbone in multimodal RAG systems, where retrieved images—often paired with captions or surrounding text—are treated analogously to retrieved text passages~\cite{chen2022murag, talmor2021multimodalqa}. 

More recent work improves relevance estimation by leveraging large multimodal language models (MLLMs) 
either as dense retrievers or as explicit relevance scorers.
Most existing approaches convert MLLMs into embedding encoders, 
representing queries and images as dense vectors and performing retrieval via cosine similarity 
\cite{lin2024mmembed,jiang2024vlm2vec,meng2025vlm2vecV2,zhou2025megapairs,zhang2024gme,jiang2024e5}. 
This design preserves the efficiency of dual-encoder retrieval while benefiting from stronger multimodal representations.
A smaller set of methods instead uses MLLMs as relevance scorers, 
directly estimating query-image relevance via pointwise or listwise scoring 
\cite{liu2023visual_llava,li2023blip2,chen2024ragllava,liu2025lamra,gu2025unime}.

Despite differences in architecture and scoring granularity, MLLM-based methods remain fundamentally relevance-driven. An image is selected if the model judges it to be semantically helpful or relevant to the query. While MLLMs provide a more expressive and flexible relevance estimator, they do not explicitly model whether the image contains sufficient or discriminative evidence to answer the question, which is the our focus in this paper.

\subsection{Uncertainty, Utility, and Efficiency in Evidence Selection}

\paragraph{Uncertainty estimation in LLMs.}
A substantial line of work studies answer-level uncertainty estimation for large language models, including token- or sequence-level confidence and likelihood, entropy-based measures and Monte Carlo approximations (e.g., predictive entropy and dropout), as well as consistency-based signals such as self-consistency across multiple generations and NLI-based entailment or contradiction checks~\citep{kadavath2022language,kuhn2023semantic,wang2022self,lin2023generating}. These signals are primarily designed to assess answer reliability and are commonly used for selective answering, abstention, or deciding whether additional retrieval is needed.

\paragraph{The Use of Uncertainty in RAG}
In retrieval-augmented generation, uncertainty signals have been used as indirect indicators of evidence reliability, for example to trigger adaptive retrieval when confidence is low~\citep{jiang2023active}, to compare answers generated from different retrieved documents for consistency~\citep{xu2025mega}, or to verify answer–evidence alignment via NLI-style checks~\citep{lin2023generating,kuhn2023semantic}. While there is existing study of uncertainty in RAG largely analyzing how uncertainty changes when retrieval is introduced, i.e., contrasting RAG with no-RAG settings~\citep{soudani2025why,ozaki2025understanding}, our work is complementary: we investigate how uncertainty signals can be used to \emph{horizontally compare the utility of different candidate evidence}, rather than to characterize the aggregate effect of adding retrieval.

Beyond their effectiveness, existing uncertainty estimation methods also exhibit fundamental limitations in signal quality. In particular, LLM uncertainty is often miscalibrated and prone to overconfidence, even when conditioned on misleading or irrelevant retrieved evidence~\citep{soudani2025why,ozaki2025understanding}. Moreover, many widely used uncertainty estimators, such as Monte Carlo sampling or self-consistency, are inherently inefficient, requiring repeated generations and significantly increasing inference cost.
Our work instead focuses on a discriminative notion of evidence utility that avoids costly uncertainty sampling and supports efficient, direct comparison among candidate evidence.

\section{Information-Theoretic Preliminaries and General Notations}
\label{sec:preliminary}

Let $X$, $Y$, and $Z$ be three general random variables respectively.  Let $\mu_{X,Y}$ and $\mu_{Y|X}$ denote the joint and conditional distributions of $(X, Y)$ receptively. The \textit{conditional entropy} of $Y$ given $X$ is defined as 
\[H(Y \mid X):=
-\mathbb{E}_{(x,y)\sim \mu_{X,Y}}
\big[ \log \mu_{Y\mid X}(y \mid x) \big], \]
which measures the uncertainty of the output given the input. Let $\mu$ and $\mu'$ be probability distributions supported on the same set $\mathcal{Y}$, such that $\mu'(y)=0$ implies $\mu(y)=0$. 
The \textit{Kullback--Leibler} (KL) divergence between $\mu$ and $\mu'$ is defined as 
\[
D_{\mathrm{KL}}(\mu \,\|\, \mu')
:=
\sum_{y}
\mu(y)\log\frac{\mu(y)}{\mu'(y)}.\]
The KL divergence is non-negative and equals zero if and only if $\mu=\mu'$. 

For a fixed input $x\in\mathcal{X}$ and a realization $z\in\mathcal{Z}$, the \textit{information gain} with respect to the output variable $Y$ is defined as
\[\mathrm{IG}(Y; Z=z \mid X=x)
:= D_{\mathrm{KL}}\!\left(
\mu_{Y \mid X=x, Z=z}
\;\middle\|\;
\mu_{Y \mid X=x}
\right).\]
Equivalently, the information gain can be expressed as a reduction in conditional entropy: 
\[\mathrm{IG}(Y; Z=z \mid X=x)
= H(Y \mid X=x) - H(Y \mid X=x, Z=z).\]

\section{Theoretical Results and Proof}
\label{sec:proof}

In this section, we rigorously present our theorems in the formal form with their proof.

To simplify notations, throughout this section, we omit the user query $q$ notation in the conditional probability as it is fixed throughout. We also naively assume that the optimal solutions to Problems \eqref{eq:utility_kl_max}, \eqref{eq:latent_kl_max}, and \eqref{eq:latent_1_max} exist. For instance, this holds naturally when $\mathcal{C}$ is a finite set.

First, we recall that $Z \in \{0,1\}$ is a Bernoulli random variable. In addition, recall that the (evidence-wise) information gain is 
\[
IG(Y;C=c) := D_{\mathrm{KL}}\!\big(P_{Y\mid C=c}\,\|\,P_Y\big),
\qquad
IG(Z;C=c) := D_{\mathrm{KL}}\!\big(P_{Z\mid C=c}\,\|\,P_Z\big),
\]
where $P_Y$ and $P_Z$ denote the marginals under the law of $(C,Y,Z)$.

Define $p_c = P(Z=1|C=c)$ and $\bar{p} = P(Z=1)$. Hence, $P_{Z\mid C=c}=\mathrm{Bern}(p_c)$ for any evidence $c$. 

\begin{assumption}[Constraints on helpful evidences]
\label{assumption:helpfulevidence}
Assume that when we solve Problem \eqref{eq:utility_kl_max}, Problem \eqref{eq:latent_kl_max}, or Problem \eqref{eq:latent_1_max}, we add a constraint on the set $\mathcal{C}$ by restricting the feasible set into the subset of the candidate evidence $\hat{\mathcal{C}}$ that generates useful information, in other words, $p_c \ge \bar{p}$ for all $c \in \hat{\mathcal{C}}$.    
\end{assumption}

\begin{assumption}[Connection between response and helpfulness]
\label{assumption:helpfulnessscore}
Assume the following conditions hold.

\begin{enumerate}
\item (Distribution for response) There exist distributions $P_0,P_1$
and a measurable map $\lambda:\mathcal{C}\to[0,1]$ such that for any evidence $c$,
\begin{equation}
P_{Y\mid C=c} \;=\; P_{\lambda(c)} := (1-\lambda(c))\,P_0+\lambda(c)\,P_1.
\label{eq:C2}
\end{equation}
Let $\bar\lambda := \mathbb E[\lambda(C)]$, so that the marginal satisfies
\begin{equation}
P_Y = \mathbb E_C[P_{Y\mid C}] = (1-\bar\lambda)\,P_0+\bar\lambda\,P_1 = P_{\bar\lambda}.
\label{eq:C3}
\end{equation}

To interpret, $P_0,P_1$ are two “reference” answer distributions where $P_0$ represents the LLM response distribution when it does not use any evidence, and $P_1$ represents the LLM response distribution when it does use the evidence in general.


\item (Monotone alignment between helpfulness and usage) For any two evidences $c_1,c_2\in \{c\in \hat{C}: \lambda(c)\ge \bar\lambda\}$,
\begin{equation}
IG(Z;C=c_1)\ge IG(Z;C=c_2) \quad\Longrightarrow\quad \lambda(c_1)\ge \lambda(c_2).
\label{eq:C4}
\end{equation}

When Assumption \ref{assumption:helpfulevidence} holds, this equation can also be replaced by
\begin{equation*}
p_{c_1}\ge p_{c_2} \quad\Longrightarrow\quad \lambda(c_1)\ge \lambda(c_2).
\end{equation*}

To interpret, a more helpful evidence is used more strongly in the response and leads to a higher weight toward the distribution $P_1$.
 
\end{enumerate}
\end{assumption}

We first prove Theorem \ref{thm:equivalent2} and then prove Theorem \ref{thm:equivalent1}.

\begin{theorem}[Theorem \ref{thm:equivalent2}]
\label{thm:equivalent2_new}
Suppose that Assumption \ref{assumption:helpfulevidence} holds. 
Then we have
\[
IG(Z;C=c_1)\ge IG(Z;C=c_2)
\quad\Longleftrightarrow\quad
P(Z=1\mid C=c_1)\ge P(Z=1\mid C=c_2).
\]
for all $c_1, c_2 \in \hat{C}$
and Problem \eqref{eq:latent_kl_max} has the identical solutions as in Problem \eqref{eq:latent_1_max}, i.e., 
\[
\argmax_{c\in \hat{C}} IG(Z;C=c) =
\argmax_{c\in \hat{C}} P(Z=1\mid C=c).
\]
\end{theorem}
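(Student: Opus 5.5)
The plan is to reduce the information gain on the binary latent variable to a one-dimensional function of $p_c$ and show that this function is monotone on the feasible region. Since $P_{Z\mid C=c}=\mathrm{Bern}(p_c)$ and $P_Z=\mathrm{Bern}(\bar p)$, the evidence-wise information gain is the binary KL divergence
\[
IG(Z;C=c)=f(p_c),\qquad f(p):=p\log\frac{p}{\bar p}+(1-p)\log\frac{1-p}{1-\bar p}.
\]
The whole statement then hinges on the monotonicity of $f$ on $[\bar p,1]$. By Assumption~\ref{assumption:helpfulevidence}, every $c\in\hat{\mathcal C}$ has $p_c\ge\bar p$, so only this interval matters. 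First I will dispose of the degenerate cases. If $\bar p\in\{0,1\}$, the absolute-continuity convention for KL and the constraint $p_c\ge \bar p$ force either $p_c=\bar p$ for all feasible $c$ (when $\bar p=1$), or leave $f(p)=-\log(1-p)$ on $[0,1]$ (when $\bar p=0$). In both cases the claims are immediate.

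For $\bar p\in(0,1)$, I differentiate on $(0,1)$:
\[
f'(p)=\log\frac{p}{\bar p}-\log\frac{1-p}{1-\bar p}=\log\frac{p(1-\bar p)}{\bar p(1-p)}.
\]
This derivative vanishes at $p=\bar p$ and is strictly positive for $p>\bar p$, because $p/(1-p)$ is strictly increasing. (Equivalently, $f''(p)=1/(p(1-p))>0$, so $f$ is strictly convex with its minimum $0$ at $\bar p$.) Continuity of $f$ at $p=1$, with $0\log 0=0$, extends this: $f$ is continuous and strictly increasing on $[\bar p,1]$.

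Strict monotonicity gives the equivalence directly. For $c_1,c_2\in\hat{\mathcal C}$ we have $p_{c_1},p_{c_2}\in[\bar p,1]$, so $f(p_{c_1})\ge f(p_{c_2})$ holds if and only if $p_{c_1}\ge p_{c_2}$. The strictness is what delivers the reverse implication, not merely the forward one. The argmax identity then follows because the two objectives induce the same total preorder on $\hat{\mathcal C}$: an element attains the maximum of $IG(Z;C=\cdot)$ exactly when it attains the maximum of $p_\cdot$. The same holds for top-$K$ sets, up to the identical tie structure, since $f$ maps ties to ties and strict orderings to strict orderings.

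I expect the only real subtlety to be the boundary and degenerate handling: $p_c=1$, where the log term needs the convention, and $\bar p\in\{0,1\}$. I will handle these by continuity of $f$ on the closed interval rather than by the derivative formula. Aside from that, the argument is a routine calculus fact about the binary KL divergence.
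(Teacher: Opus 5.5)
Your core argument is the paper's own proof, and it is correct for $\bar p\in(0,1)$. Both write $IG(Z;C=c)=D_{\mathrm{KL}}(\mathrm{Bern}(p_c)\,\|\,\mathrm{Bern}(\bar p))=f(p_c)$ and compute $f'(p)=\log\frac{p(1-\bar p)}{\bar p(1-p)}>0$ for $p>\bar p$. Both then use strict monotonicity of $f$ on $[\bar p,1]$, where Assumption~\ref{assumption:helpfulevidence} places every $p_c$ with $c\in\hat{\mathcal C}$, to get the equivalence and hence equal argmax sets. Your continuity argument at $p=1$ is a small tidy-up the paper omits.

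There is one slip, in the degenerate case you called immediate. For $\bar p=0$ and $p>0$ you have $\mathrm{Bern}(p)\not\ll\mathrm{Bern}(0)$, so $D_{\mathrm{KL}}(\mathrm{Bern}(p)\,\|\,\mathrm{Bern}(0))=+\infty$. Your value $-\log(1-p)$ is the reversed divergence $D_{\mathrm{KL}}(\mathrm{Bern}(0)\,\|\,\mathrm{Bern}(p))$. With the correct value, all positive $p_c$ tie at $+\infty$. The direction $IG(Z;C=c_1)\ge IG(Z;C=c_2)\Rightarrow p_{c_1}\ge p_{c_2}$ and the argmax identity would then fail if such candidates existed.

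The right way to dispose of this case is to use $\bar p=P(Z=1)=\mathbb{E}[p_C]$. Then $\bar p=0$ forces $p_c=0$ for every candidate with positive probability, for example when $\mathcal C$ is finite with full support. So all information gains are $0$, all $p_c$ are $0$, and the statement is trivial, just as in your (correctly handled) $\bar p=1$ case.
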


\begin{proof}[Proof of Theorem \ref{thm:equivalent2_new}]

Since \(Z\in\{0,1\}\), it is easy to see that by definition,
\[
IG(Y;C=c)\;=\;D_{\mathrm{KL}}\!\big(\mathrm{Bern}(p_c)\,\|\,\mathrm{Bern}(p)\big)\;=:\;f(p_c),
\]
where for \(q\in(0,1)\), the function $f$ is defined as
\[
f(q)\;:=\;q\log\frac{q}{\bar{p}}+(1-q)\log\frac{1-q}{1-\bar{p}}.
\]

Step 1: \(f\) is increasing on \([\bar{p},1]\).
Differentiating \(f\) with respect to $q$:
\[
f'(q)
=\log\frac{q}{\bar{p}}-\log\frac{1-q}{1-\bar{p}}
=\log\frac{q(1-\bar{p})}{(1-q)\bar{p}}.
\]
If \(q\ge \bar{p}\), then \(\frac{q}{1-q}\ge \frac{\bar{p}}{1-\bar{p}}\), hence
\[
\frac{q(1-\bar{p})}{(1-\bar{p})p}\ge 1
\quad\Longrightarrow\quad
f'(q)\ge 0,
\]
with strict inequality when \(q>p\). Therefore \(f\) is strictly increasing on the set \([\bar{p},1]\).

Step 2: monotonicity.
Assume \(p_{c_1},p_{c_2}\in[\bar{p},1]\). Since \(f\) is increasing on \([\bar{p},1]\),
\[
IG(Z;C=c_1)\ge IG(Z;C=c_2)
\iff f(p_{c_1})\ge f(p_{c_2})
\iff p_{c_1}\ge p_{c_2}.
\]

Hence, under the condition \(p_{c_1},p_{c_2}\ge \bar{p}\),
\[
IG(Z;C=c_1)\ge IG(Z;C=c_2)
\quad\Longleftrightarrow\quad
P(Z=1\mid C=c_1)\ge P(Z=1\mid C=c_2).
\]
and
\[
\argmax_{c\in \hat{C}} IG(Z;C=c) =
\argmax_{c\in \hat{C}} P(Z=1\mid C=c).
\]

\end{proof}

\begin{theorem} [Theorem \ref{thm:equivalent1}]
\label{thm:equivalent1_new}
Suppose that Assumptions \ref{assumption:helpfulevidence} and \ref{assumption:helpfulnessscore} hold. Then we have
\[
IG(Z;C=c_1)\ge IG(Z;C=c_2)\quad\Longrightarrow\quad IG(Y;C=c_1)\ge IG(Y;C=c_2).
\]
for all $c_1, c_2 \in \{c\in \hat{C}: \lambda(c)\ge \bar\lambda\}$
and the optimal solutions to Problem \eqref{eq:latent_kl_max} must be optimal to Problem \eqref{eq:utility_kl_max}, i.e.,
\[
\argmax_{c\in \hat{C}: \lambda(c)\ge \bar\lambda} IG(Z;C=c) \in \argmax_{c\in \hat{C}: \lambda(c)\ge \bar\lambda} IG(Y;C=c).
\]
In particular, if the optimal solution to Problem \eqref{eq:utility_kl_max} is unique, then
\[
\argmax_{c\in \hat{C}: \lambda(c)\ge \bar\lambda} IG(Z;C=c) = \argmax_{c\in \hat{C}: \lambda(c)\ge \bar\lambda} IG(Y;C=c)
\]
for $K=1$.
\end{theorem}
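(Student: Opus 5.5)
The plan is to reduce $IG(Y;C=c)$ to a scalar function of $\lambda(c)$, show this function is nondecreasing on $[\bar\lambda,1]$, and then compose it with the monotone alignment condition \eqref{eq:C4}.

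\textbf{Reduction to a scalar function.} By \eqref{eq:C2}–\eqref{eq:C3},
\[
IG(Y;C=c)=D_{\mathrm{KL}}\big(P_{\lambda(c)}\,\|\,P_{\bar\lambda}\big)=:g(\lambda(c)),
\qquad g(t):=D_{\mathrm{KL}}\big((1-t)P_0+tP_1\,\|\,(1-\bar\lambda)P_0+\bar\lambda P_1\big),
\]
defined for $t\in[0,1]$. I assume the support condition needed for the KL to be finite. If $\bar\lambda\in(0,1)$, then $P_{\bar\lambda}$ dominates both $P_0$ and $P_1$, so every $P_t$ is absolutely continuous with respect to it.

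\textbf{Monotonicity of $g$ on $[\bar\lambda,1]$.} This is the main step, and I would prove it by convexity. The map $t\mapsto P_t$ is affine, and $D_{\mathrm{KL}}(\cdot\,\|\,P_{\bar\lambda})$ is convex in its first argument, so $g$ is convex on $[0,1]$. Moreover $g\ge 0$ everywhere and $g(\bar\lambda)=0$, so $\bar\lambda$ is a global minimizer.

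Now take $\bar\lambda\le t_2\le t_1$. Write $t_2=(1-\theta)\bar\lambda+\theta t_1$ with $\theta\in[0,1]$. Convexity gives
\[
g(t_2)\le(1-\theta)g(\bar\lambda)+\theta g(t_1)=\theta g(t_1)\le g(t_1).
\]
Hence $g$ is nondecreasing on $[\bar\lambda,1]$.

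As an alternative route, one could differentiate, $g'(t)=\sum_y (P_1-P_0)(y)\log\frac{P_t(y)}{P_{\bar\lambda}(y)}$, and argue sign by sign. The convexity argument avoids regularity issues, so I would use it. The obstacle I expect is the degenerate case $\bar\lambda\in\{0,1\}$. If $\bar\lambda=1$, the constrained set forces $\lambda(c)=1$, so the claim is trivial. If $\bar\lambda=0$, KL may be infinite; the argument still goes through in $[0,\infty]$, provided the absolute-continuity convention of Appendix~\ref{sec:preliminary} holds.

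\textbf{The ordering implication.} Fix $c_1,c_2\in\{c\in\hat C:\lambda(c)\ge\bar\lambda\}$ with $IG(Z;C=c_1)\ge IG(Z;C=c_2)$. Condition \eqref{eq:C4} of Assumption~\ref{assumption:helpfulnessscore} gives $\lambda(c_1)\ge\lambda(c_2)$. Both values lie in $[\bar\lambda,1]$, so monotonicity of $g$ yields $IG(Y;C=c_1)\ge IG(Y;C=c_2)$. Theorem~\ref{thm:equivalent2_new} is not needed here, though it lets \eqref{eq:C4} be restated in terms of $p_c$.

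\textbf{The argmax statements.} Let $c^*$ maximize $IG(Z;C=c)$ over the restricted set. Then $IG(Z;C=c^*)\ge IG(Z;C=c)$ for every feasible $c$. The implication above gives $IG(Y;C=c^*)\ge IG(Y;C=c)$ for every feasible $c$, so $c^*$ maximizes $IG(Y;C=c)$. For $K=1$, if the $Y$-maximizer is unique, every $Z$-maximizer coincides with it. The $Z$-argmax is nonempty by the standing existence assumption, so the two argmax sets are equal.
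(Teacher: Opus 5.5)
Your proposal is correct and follows essentially the same route as the paper. Both write $IG(Y;C=c)=g(\lambda(c))$ and obtain convexity of $g$ from the affinity of $t\mapsto P_t$ plus convexity of KL in its first argument. Both then use $g(\bar\lambda)=0$ to get monotonicity on $[\bar\lambda,1]$ and compose with the alignment condition \eqref{eq:C4}. Your explicit convex-combination step, your treatment of degenerate $\bar\lambda$, and your argument for the $K=1$ uniqueness equality fill in details the paper only asserts.
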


\begin{proof}[Proof of Theorem \ref{thm:equivalent1_new}]
We proceed as follows.

Step 1: monotonicity of the map $\lambda\mapsto D_{\mathrm{KL}}(P_\lambda\|P_{\bar\lambda})$
on $[\bar\lambda,1]$. 
Define
\[
g(\lambda):=D_{\mathrm{KL}}(P_\lambda\,\|\,P_{\bar\lambda}),
\qquad P_\lambda=(1-\lambda)P_0+\lambda P_1.
\]
Note that by convexity of KL in its first argument and the linearity of $\lambda\mapsto P_\lambda$,
the function $g$ is convex on $[0,1]$. To see this, fix $\lambda_1,\lambda_2\in[0,1]$ and $t\in[0,1]$. By linearity of $\lambda\mapsto P_\lambda$,
\begin{align*}
P_{t\lambda_1+(1-t)\lambda_2}
&=(1-(t\lambda_1+(1-t)\lambda_2))P_0+(t\lambda_1+(1-t)\lambda_2)P_1\\
&=t\big((1-\lambda_1)P_0+\lambda_1 P_1\big)+(1-t)\big((1-\lambda_2)P_0+\lambda_2 P_1\big)\\
&=tP_{\lambda_1}+(1-t)P_{\lambda_2}.
\end{align*}
Next, use the convexity of KL in its first argument (Lemma \ref{lem:KL_convex_first_jensen}): for any probability measures $Q_1,Q_2,Q_3$,
\[
D_{\mathrm{KL}}(tQ_1+(1-t)Q_3\,\|\,Q_2)\ \le\ t\,D_{\mathrm{KL}}(Q_1\,\|\,Q_2)+(1-t)\,D_{\mathrm{KL}}(Q_3\,\|\,Q_2).
\]
Applying this with $Q_1=P_{\lambda_1}$, $Q_3=P_{\lambda_2}$, and $Q_2=P_{\bar\lambda}$ yields
\begin{align*}
g(t\lambda_1+(1-t)\lambda_2)
&=D_{\mathrm{KL}}(P_{t\lambda_1+(1-t)\lambda_2}\,\|\,P_{\bar\lambda})\\
&=D_{\mathrm{KL}}(tP_{\lambda_1}+(1-t)P_{\lambda_2}\,\|\,P_{\bar\lambda})\\
&\le t\,D_{\mathrm{KL}}(P_{\lambda_1}\,\|\,P_{\bar\lambda})+(1-t)\,D_{\mathrm{KL}}(P_{\lambda_2}\,\|\,P_{\bar\lambda})\\
&=t\,g(\lambda_1)+(1-t)\,g(\lambda_2),
\end{align*}
which is exactly convexity of $g$.

Moreover, by~\eqref{eq:C3},
\[
g(\bar\lambda)=D_{\mathrm{KL}}(P_{\bar\lambda}\,\|\,P_{\bar\lambda})=0,
\]
so $\bar\lambda$ is a global minimizer of $g$. A convex function on an interval that
attains its minimum at $\bar\lambda$ is nondecreasing on $[\bar\lambda,1]$.
Therefore, for all $c_1, c_2 \in \{c\in \hat{C}: \lambda(c)\ge \bar\lambda\}$
\begin{equation}
\lambda(c_1)\ge \lambda(c_2)\quad\Longrightarrow\quad
g(\lambda(c_1))\ge g(\lambda(c_2)).
\label{eq:step2}
\end{equation}

Step 2: conclude.
From the alignment assumption~\eqref{eq:C4},
\[
IG(Z;C=c_1)\ge IG(Z;C=c_2)\ \Longrightarrow\ \lambda(c_1)\ge\lambda(c_2).
\]
Applying~\eqref{eq:step2} and using~\eqref{eq:C2}--\eqref{eq:C3},
\[
IG(Y;C=c_1)=D_{\mathrm{KL}}(P_{Y\mid C=c_1}\,\|\,P_Y)
= D_{\mathrm{KL}}(P_{\lambda(c_1)}\,\|\,P_{\bar\lambda})
= g(\lambda(c_1))
\ge g(\lambda(c_2))
= IG(Y;C=c_2).
\]
This clearly implies 
\[
\argmax_{c\in \hat{C}: \lambda(c)\ge \bar\lambda} IG(Z;C=c) \in \argmax_{c\in \hat{C}: \lambda(c)\ge \bar\lambda} IG(Y;C=c).
\]
\end{proof}

\begin{lemma}[Convexity of KL in the first argument]
\label{lem:KL_convex_first_jensen}
Let $(\mathcal X,\mathcal F)$ be a measurable space and let $Q$ be a probability measure on it.
For any probability measures $P$ and $R$ on $(\mathcal X,\mathcal F)$ and any $t\in[0,1]$, define
\[
M:=tP+(1-t)R.
\]
Then
\begin{equation}
D_{\mathrm{KL}}(M\|Q)\ \le\ t\,D_{\mathrm{KL}}(P\|Q)+(1-t)\,D_{\mathrm{KL}}(R\|Q),
\tag{$\ast$}
\label{eq:KL_convex_first_general}
\end{equation}
with the convention that $D_{\mathrm{KL}}(\cdot\|\cdot)\in[0,\infty]$.
\end{lemma}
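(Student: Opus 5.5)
The plan is to prove the lemma pointwise: apply convexity of the scalar function $\phi(u)=u\log u$ (with $\phi(0)=0$) to the Radon--Nikodym derivatives with respect to $Q$. The infinite case is trivial, so we handle it first. If either $D_{\mathrm{KL}}(P\|Q)$ or $D_{\mathrm{KL}}(R\|Q)$ is $+\infty$ and the corresponding weight is positive, the right-hand side of \eqref{eq:KL_convex_first_general} is $+\infty$ and there is nothing to prove. If $t\in\{0,1\}$, the statement is an identity. We may therefore assume $t\in(0,1)$, $P\ll Q$, $R\ll Q$, and both divergences finite.

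Under these assumptions $M\ll Q$, since $Q(A)=0$ forces $P(A)=R(A)=0$. Writing $p=dP/dQ$, $r=dR/dQ$, linearity of the Radon--Nikodym derivative gives $m=dM/dQ=tp+(1-t)r$ $Q$-a.s. Then
\[
D_{\mathrm{KL}}(P\|Q)=\int \phi(p)\,dQ,
\]
and similarly for $R$ and $M$. Since $\phi$ is convex on $[0,\infty)$ (because $\phi''(u)=1/u>0$ for $u>0$, and $\phi$ is continuous at $0$), we have pointwise
\[
\phi(tp+(1-t)r)\le t\,\phi(p)+(1-t)\,\phi(r).
\]
Integrating against $Q$ yields \eqref{eq:KL_convex_first_general}.

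The main obstacle is measure-theoretic: we must justify integrating $\phi(m)$ when $\phi$ takes negative values. The key fact is that $\phi\ge -1/e$ and $Q$ is a probability measure, so the negative part of each integrand is bounded by $1/e$ and is therefore integrable. Each integral is thus well defined in $(-\infty,\infty]$, and the pointwise inequality integrates legitimately (monotonicity of the integral for functions bounded below). This same bound shows that $\int\phi(m)\,dQ$ is finite whenever the right-hand side is finite, which completes the argument.

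An alternative route, if a cleaner presentation is preferred, is the log-sum inequality applied pointwise to the pairs $(tp,(1-t)r)$ against $(tq,(1-t)q)$. That is equivalent to joint convexity of KL restricted to a fixed second argument.
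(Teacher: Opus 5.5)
Your proposal is correct and follows the same route as the paper: reduce to $P,R\ll Q$, write each divergence as $\int\phi(\cdot)\,dQ$ with $\phi(u)=u\log u$ applied to the Radon--Nikodym derivatives, use pointwise convexity of $\phi$ (which the paper phrases as Jensen's inequality for an auxiliary Bernoulli variable), and integrate against $Q$. Your edge-case handling is tighter than the paper's on two points. The paper asserts that the right-hand side is $+\infty$ whenever $P\not\ll Q$ or $R\not\ll Q$, which fails when that measure carries weight $0$ under the convention $0\cdot\infty=0$, and it never justifies integrating the sign-changing integrand; your separate treatment of $t\in\{0,1\}$ and the bound $\phi\ge -1/e$ settle both.
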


\begin{proof}
If either $P\not\ll Q$ or $R\not\ll Q$, then the right-hand side of \eqref{eq:KL_convex_first_general}
is $+\infty$, hence the inequality holds trivially. Thus assume $P\ll Q$ and $R\ll Q$.
Let
\[
p:=\frac{dP}{dQ},\qquad r:=\frac{dR}{dQ}.
\]
Then also $M\ll Q$ with Radon--Nikodym derivative
\[
m:=\frac{dM}{dQ}=t p+(1-t) r.
\]

Let $\phi:[0,\infty)\to\mathbb R$ be defined by $\phi(u):=u\log u$ (with the convention $0\log 0:=0$).
It is convex on $[0,\infty)$ since $\phi''(u)=1/u$ for $u>0$.

Recall the variational form of KL in terms of $\phi$:
\[
D_{\mathrm{KL}}(P\|Q)=\int \phi(p)\,dQ,\qquad
D_{\mathrm{KL}}(R\|Q)=\int \phi(r)\,dQ,\qquad
D_{\mathrm{KL}}(M\|Q)=\int \phi(m)\,dQ.
\]

Fix $x\in\mathcal X$.
Define an auxiliary Bernoulli random variable $B\sim\mathrm{Bern}(t)$ independent of everything else, and
define the real random variable
\[
Z_x :=
\begin{cases}
p(x), & B=1,\\
r(x), & B=0.
\end{cases}
\]
Then $\mathbb E[Z_x]=t p(x)+(1-t)r(x)=m(x)$ and
\[
\mathbb E[\phi(Z_x)] = t\,\phi(p(x))+(1-t)\,\phi(r(x)).
\]
By Jensen's inequality applied to the convex function $\phi$,
\[
\phi\!\big(\mathbb E[Z_x]\big)\ \le\ \mathbb E[\phi(Z_x)],
\]
i.e.,
\[
\phi\!\big(m(x)\big)\ \le\ t\,\phi\!\big(p(x)\big)+(1-t)\,\phi\!\big(r(x)\big).
\]
Integrating both sides with respect to $Q$ yields
\[
\int \phi(m)\,dQ
\ \le\
t\int \phi(p)\,dQ+(1-t)\int \phi(r)\,dQ,
\]
which is exactly
\[
D_{\mathrm{KL}}(M\|Q)\ \le\ t\,D_{\mathrm{KL}}(P\|Q)+(1-t)\,D_{\mathrm{KL}}(R\|Q).
\]
\end{proof}

We remark that Assumption \ref{assumption:helpfulnessscore} Part 2 can hold in some natural scenarios. For instance, we have the following lemma.

\begin{lemma}
\label{lemma:conditionforassumption}
Assume there exists a measurable ``helpfulness" score function $S:\mathcal{C} \to \mathbb R$,
and a real-valued noise random variable $\varepsilon$ with a increasing CDF that is independent of $C$, such that $Z \;=\; \mathbf 1\{S(C)+\varepsilon \ge 0\}$.
Moreover, there exists a non-decreasing function $\lambda':\mathbb R\to[0,1]$ such that $\lambda(c) = \lambda'(S(c))$. Then Assumption \ref{assumption:helpfulevidence} implies Assumption \ref{assumption:helpfulnessscore} Part 2.
\end{lemma}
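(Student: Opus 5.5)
The plan is to reduce the claimed implication to a chain of monotone maps. The chain runs from the IG ordering on $Z$ to the $p_c$ ordering, then to the $S$ ordering, and finally to the $\lambda$ ordering. Fix $c_1,c_2\in\{c\in\hat{\mathcal C}:\lambda(c)\ge\bar\lambda\}$ with $IG(Z;C=c_1)\ge IG(Z;C=c_2)$. We must show $\lambda(c_1)\ge\lambda(c_2)$.

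The first step uses Theorem~\ref{thm:equivalent2_new}. Since Assumption~\ref{assumption:helpfulevidence} gives $p_{c_1},p_{c_2}\ge\bar p$, that theorem converts the hypothesis into $p_{c_1}\ge p_{c_2}$. This is exactly the remark after \eqref{eq:C4}: under Assumption~\ref{assumption:helpfulevidence}, it suffices to prove $p_{c_1}\ge p_{c_2}\Rightarrow\lambda(c_1)\ge\lambda(c_2)$.

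The second step computes $p_c$ under the threshold model. By independence of $\varepsilon$ and $C$,
\[
p_c=P\bigl(S(c)+\varepsilon\ge 0\bigr)=1-F_\varepsilon\bigl(-S(c)\bigr)^{-},
\]
that is, $p_c=P(\varepsilon\ge -S(c))$. Here $F_\varepsilon$ is the CDF of $\varepsilon$, and I read "increasing" as strictly increasing. Hence $s\mapsto P(\varepsilon\ge -s)$ is strictly increasing in $s$. Concretely, for $s>s'$ we get $P(\varepsilon\ge -s)-P(\varepsilon\ge -s')=P(-s\le\varepsilon<-s')\ge F_\varepsilon(-s'-)-F_\varepsilon(-s)>0$, using strict increase of $F_\varepsilon$ on $(-s,-s')$ through some intermediate point. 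Therefore $p_{c_1}\ge p_{c_2}$ forces $S(c_1)\ge S(c_2)$, by contraposition: if $S(c_1)<S(c_2)$ then $p_{c_1}<p_{c_2}$.

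The third step applies $\lambda'$. Because $\lambda'$ is non-decreasing, $\lambda(c_1)=\lambda'(S(c_1))\ge\lambda'(S(c_2))=\lambda(c_2)$, which gives \eqref{eq:C4}. The restriction $\lambda(c)\ge\bar\lambda$ plays no role here beyond defining the domain.

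The main obstacle is the strictness argument in the second step. Mere monotonicity of $p_c$ in $S(c)$ would only give the wrong direction, namely $S$ ordering implies $p$ ordering. We need the converse, and that requires strict monotonicity of $F_\varepsilon$, handled carefully with left limits at possible atoms. If "increasing" were meant only as non-decreasing, the lemma could fail. A counterexample is two scores lying in a flat region of $F_\varepsilon$ with different $\lambda'$ values. So I would state the strict reading explicitly.
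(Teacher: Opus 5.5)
Your proposal is correct and follows the same chain as the paper's proof: write $p_c=P(\varepsilon\ge -S(c))$ as an increasing function of $S(c)$, invert this to get $S(c_1)\ge S(c_2)$ from $p_{c_1}\ge p_{c_2}$, and then apply the monotone $\lambda'$; your explicit use of Theorem~\ref{thm:equivalent2_new} to pass from the IG ordering to the $p_c$ ordering is the step the paper leaves implicit in the remark after \eqref{eq:C4}. Reading ``increasing'' as strictly increasing and using the left limit $F_\varepsilon(-S(c)-)$ tightens two points the paper glosses over: its identity $P(\varepsilon\ge -S(c))=1-F_\varepsilon(-S(c))$ holds only when $\varepsilon$ has no atom at $-S(c)$, and its inversion step needs strict monotonicity without saying so.
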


\begin{proof}[Proof of Lemma \ref{lemma:conditionforassumption}]
Note that for any $c\in\mathcal C$,
\begin{equation*}
P(Z=1\mid C=c)
\;=\;
P(\varepsilon \ge -S(c))
\;=\;
1-F_\varepsilon(-S(c))
\;=:\;
h(S(a)),
\end{equation*}
where $h(\cdot)$ is increasing since $F_\varepsilon$ is increasing. In other words, a higher helpfulness score implies a higher helpfulness probability. When $p_{c_1}\ge p_{c_2}$, this naturally leads to $S(c_1)\ge S(c_2)$ and thus $\lambda(c_1) = \lambda'(S(c_1))\ge \lambda'(S(c_2))  = \lambda(c_2)$.
\end{proof}

\section{Experimental Setup Details}
\label{sec:appendix_setup}

\subsection{Benchmarks}
\label{app:subsec:benchmarks}

\paragraph{MRAG-Bench.}
MRAG-Bench~\cite{hu2024mragbench} is a vision-centric multimodal RAG benchmark designed to evaluate whether large vision-language models can effectively retrieve and utilize visual knowledge rather than textual evidence.
The dataset consists of 1,353 human-annotated multiple-choice questions paired with 16,130 images, spanning nine real-world scenarios where visual augmentation is crucial. These scenarios cover both perspective-based variations (e.g., angle, occlusion, partial views) and transformative changes (e.g., temporal evolution, deformation, biological processes).
This benchmark emphasizes situations where textual knowledge is insufficient or difficult to retrieve, thereby highlighting the importance of vision-centric retrieval and reasoning.
Following the original benchmark protocol, each query in MRAG-Bench is associated with a set of human-annotated ground-truth images. In our experiments, we construct the final candidate image set for each query by taking the union of the annotated ground-truth images and the images retrieved by the retriever, ensuring that both oracle visual evidence and realistically retrieved images are available during evaluation.

\paragraph{Visual-RAG.}
Visual-RAG~\cite{wu2025visualrag} is a recently proposed benchmark that focuses on visual evidence--centric retrieval-augmented generation. Unlike prior datasets, Visual-RAG uses text-only queries and requires models to retrieve images that explicitly contain the visual evidence necessary for answering the question.
The benchmark contains 374 fine-grained, knowledge-intensive queries grounded in the organism domain, paired with an image corpus of 99,017 images sourced from iNaturalist 2021. For each query, only a small fraction of images serve as \emph{clue images}, while the majority are visually similar hard negatives that lack the queried attribute.
This design enforces a challenging text-to-image retrieval setting and evaluates whether models can accurately extract and ground detailed visual knowledge from retrieved images, rather than relying on parametric prior knowledge.
To align the experimental setting with MRAG-Bench, we construct a fixed-size candidate image set for each query.
Specifically, we first select up to five ground-truth clue images with the highest retrieval scores produced by the BGE-VL-Large model~\cite{zhou2025megapairs} when available.
If fewer than five ground-truth images exist for a query, all available ground-truth images are included.
The remaining slots are filled with globally top-ranked images retrieved by BGE-VL-Large, such that each query is associated with a total of ten candidate images.

Together, MRAG-Bench and Visual-RAG provide complementary evaluations of multimodal RAG systems. MRAG-Bench focuses on leveraging additional visual perspectives and transformations to support recognition and reasoning, while Visual-RAG explicitly isolates the ability to retrieve and exploit fine-grained visual evidence for knowledge-intensive answer generation. Evaluating on both benchmarks allows us to assess the robustness and generality of our approach across diverse vision-centric RAG scenarios.

\paragraph{Assumption Stress Test Benchmark Construction}

We construct three progressively perturbed candidate regimes, including retrieval-only deployment, structured ambiguity with hard negatives, and combined structured–stochastic noise, to empirically examine the practical validity of the underlying assumptions under increasingly heterogeneous retrieval conditions.

1) Pure Retrieve.
This setting captures a practical end-to-end deployment scenario where evidence selection operates purely on retrieved candidates without oracle guidance. Specifically, the candidate pool is constructed solely using a widely adopted high-performing retriever (BGE-VL-Large), reflecting realistic retrieval conditions in which candidate sets are inherently heterogeneous and may contain partially misleading, ambiguous, or semantically overlapping images.

2) GT + Hard Negatives.
This setting simulates a realistic ambiguity scenario in which truly informative evidence coexists with highly similar but incorrect candidates. Specifically, we enforce the inclusion of five ground-truth (GT) images (ordered by retriever relevance score) to ensure the presence of informative evidence. The remaining positions are filled with top-ranked but non-relevant images retrieved by the same retriever, forming structured hard negatives that are semantically close yet potentially contradictory or misleading.

3) GT + Hard Negatives + Stochastic Perturbation.
This setting simulates a more severely perturbed retrieval regime in which informative evidence and structured hard negatives coexist with additional unstructured noise. Specifically, we retain five GT images and five fixed hard negatives, while randomly sampling the remaining candidates from non-relevant images. This configuration introduces both retrieval-aligned ambiguity and stochastic perturbations, reflecting practical scenarios where candidate pools contain a mixture of relevant evidence, confounding near-matches, and arbitrary distractors.

For each variant, we systematically vary the candidate pool size (10 / 15 / 20), thereby increasing the proportion of hard negatives and stochastic distractors and progressively amplifying perturbation intensity. We report Top-1 / Top-3 / Top-5 selection performance across two backbone models to assess stability under escalating noise levels.

\subsection{Baselines}

\paragraph{Zero-shot and oracle settings.}
We report a zero-shot setting where no retrieved images are provided, as well as a GT oracle setting where all ground-truth images are supplied.

\paragraph{Relevance-based retrievers.}
We include CLIP-style retrievers and their variants, including CLIP~\cite{radford2021learning_clip}, OpenCLIP~\cite{cherti2023reproducible_openclip}, SigLIP~2~\cite{tschannen2025siglip}, and BGE-VL-Large~\cite{zhou2025megapairs}, which rank candidate images by semantic similarity.

\paragraph{MLLM-based retrievers and rerankers.}
We evaluate recent MLLM-based methods for relevance estimation, including dense retrievers (E5-V~\cite{jiang2024e5}, GME~\cite{zhang2024gme}, VLM2Vec~\cite{jiang2024vlm2vec,meng2025vlm2vecV2}, BGE-MLLM~\cite{zhou2025megapairs}) as well as pointwise or listwise rerankers (LamRA-Rank~\cite{hu2024mragbench}, UniME-V2~\cite{gu2025unime}).

\paragraph{Answer-level uncertainty baselines.}
We include answer-level uncertainty-based selection strategies as baselines. Since MRAG-Bench and Visual-RAG involve different question formats, we adopt dataset-specific uncertainty measures. For MRAG-Bench, which consists of multiple-choice questions, we use choice-level softmax entropy. For Visual-RAG, which features open-ended questions, we consider two uncertainty measures: average token probability and Monte Carlo sampling with NLI-based consistency estimation. Detailed implementations of these uncertainty measures are provided in Appendix~\ref{app_subsec:implementation_details}.

\subsection{Backbone Models}
We evaluate our method across five families of recent open-source MLLMs, covering diverse architectural designs and parameter scales while maintaining comparable instruction-following capabilities:
\begin{itemize}[leftmargin=*,itemsep=1pt]
    \item \textbf{Qwen3-VL~\cite{bai2025qwen3vltechnicalreport}}: Qwen/Qwen3-VL-2B-Instruct\footnote{\url{https://huggingface.co/Qwen/Qwen3-VL-2B-Instruct}} and Qwen/Qwen3-VL-8B-Instruct\footnote{\url{https://huggingface.co/Qwen/Qwen3-VL-8B-Instruct}};
    \item \textbf{InternVL3.5~\cite{wang2025internvl3}}: OpenGVLab/InternVL3\_5-2B\footnote{\url{https://huggingface.co/OpenGVLab/InternVL3_5-2B}} and OpenGVLab/InternVL3\_5-8B\footnote{\url{https://huggingface.co/OpenGVLab/InternVL3_5-8B}};
    \item \textbf{Gemma3~\cite{team2025gemma3}}: google/gemma-3-4b-it\footnote{\url{https://huggingface.co/google/gemma-3-4b-it}} and google/gemma-3-12b-it\footnote{\url{https://huggingface.co/google/gemma-3-12b-it}};
    \item \textbf{OVIS2.5~\cite{lu2025ovis2}}: AIDC-AI/Ovis2.5-2B\footnote{\url{https://huggingface.co/AIDC-AI/Ovis2.5-2B}} and AIDC-AI/Ovis2.5-9B\footnote{\url{https://huggingface.co/AIDC-AI/Ovis2.5-9B}};
    \item \textbf{MiniCPM4.5~\cite{yu2025minicpm}}: openbmb/MiniCPM-V-4\_5\footnote{\url{https://huggingface.co/openbmb/MiniCPM-V-4_5}} and its quantized variant MiniCPM-V-4\_5-AWQ\footnote{\url{https://huggingface.co/openbmb/MiniCPM-V-4_5-AWQ}}.
\end{itemize}

For each model family, we evaluate two variants with different parameter sizes (ranging from 2B to 12B) to support our evaluation of the transferability of surrogate models. For MiniCPMV4.5, we additionally evaluate an AWQ-quantized variant to examine the robustness of our approach under memory-efficient deployment settings.
All backbone models share the same retrieval results and candidate image sets to ensure fair comparison.

\subsection{Evaluation Metrics}

For MRAG-Bench, we report exact-match accuracy. For Visual-RAG, we follow the original benchmark protocol and evaluate generated answers using an LLM-as-Judge metric (details are in Appendix~\ref{app_subsec:model_prompts}). We report Top-$K$ results for $K\in\{1,\dots,5\}$. Computational cost is measured using FLOPs (prefill and decoding) via the \texttt{calflops} library\footnote{\url{https://github.com/MrYxJ/calculate-flops.pytorch}}~\cite{calflops}, together with end-to-end inference latency.

\subsection{Implementation Details and Evaluation Protocols}
\label{app_subsec:implementation_details}

\paragraph{Ground-Truth Image Selection Protocol.}
For the ground-truth (GT) oracle setting reported in the main results, we construct the visual evidence set by sampling from the annotated ground-truth images associated with each query. Specifically, for a given query, we randomly shuffle its ground-truth image pool and select the first $K$ images as the GT evidence set. This procedure ensures that settings with larger $K$ strictly contain those with smaller $K$, analogous to pointwise top-$K$ selection strategies. All GT results reported in the main tables are obtained using this sampling protocol.

\paragraph{Uncertainty estimation on MRAG-Bench.}
MRAG-Bench consists of multiple-choice questions with four candidate options. For this setting, we adopt choice-level softmax entropy as the answer uncertainty measure. Specifically, given the final-layer logits corresponding to the option tokens (A, B, C, D), we apply a softmax operation to obtain the choice probability distribution $\{p_i\}_{i=1}^4$. The uncertainty score is then computed as the entropy of this distribution:
\begin{equation}
\mathcal{U}_{\text{MC}} = - \sum_{i=1}^4 p_i \log p_i.
\end{equation}
Higher entropy indicates greater uncertainty over the answer choices. Retrieved images are ranked according to the resulting uncertainty scores.

\paragraph{Uncertainty estimation on Visual-RAG.}
Visual-RAG involves open-ended question answering, for which choice-level uncertainty is not applicable. We therefore consider two commonly used answer-level uncertainty measures.

\emph{Average token probability.}
Given a generated answer sequence $a = (w_1, w_2, \dots, w_T)$, we compute the average token probability as an answer-level uncertainty measure. At each decoding step $t$, we apply a softmax over the vocabulary to obtain the token distribution $p(\cdot \mid w_{<t})$, and extract the probability of the generated token $w_t$. The uncertainty score is then computed as
\begin{equation}
\mathcal{U}_{\text{avg}} = \frac{1}{T} \sum_{t=1}^{T} p(w_t \mid w_{<t}).
\end{equation}
Lower average token probability indicates higher answer uncertainty.

\emph{Monte Carlo sampling with NLI-based consistency.}
We further implement a Monte Carlo sampling-based uncertainty estimator. For each input, we first generate a deterministic answer $a_0$ using greedy decoding (temperature $=0$). We then sample $N$ additional answers $\{a_j\}_{j=1}^N$ using temperature $=1.0$, where $N=5$ in our experiments. To quantify uncertainty, we evaluate the semantic consistency between the deterministic answer $a_0$ and each sampled answer $a_j$ using a natural language inference (NLI) model\footnote{\url{https://huggingface.co/microsoft/deberta-large-mnli}}. The final uncertainty score is computed by aggregating the pairwise consistency scores, following the implementation in the \texttt{uqlm} repository\footnote{\url{https://github.com/cvs-health/uqlm}}. Lower consistency indicates higher uncertainty.

\paragraph{Computation Cost Measurement.}
To evaluate the computational efficiency of different evidence selection strategies, we follow a unified and reproducible measurement protocol. Specifically, we randomly select 100 query IDs from the Visual-RAG benchmark and collect all associated candidate images, resulting in a total of 1{,}000 query-image pairs. This set is used as the benchmark for computation cost evaluation. We measure the floating-point operations (FLOPs) using the \texttt{calflops} library\footnote{\url{https://github.com/MrYxJ/calculate-flops.pytorch}}, reporting both prefill and decode FLOPs. Prefill FLOPs correspond to the cost of the first forward pass given the full input, while decode FLOPs correspond to the cost of each subsequent forward pass during generation. All reported FLOPs are averaged over the full set of 1{,}000 query-image pairs. Latency is measured following the same protocol, reporting the average prefill latency and per-step decode latency across all samples.

\subsection{Model Prompts}
\label{app_subsec:model_prompts}

This section provides the exact prompt templates and evaluation procedures used in our experiments, covering answer generation, auxiliary utility probing, and ablation baselines. These details are included for reproducibility and to complement the main text.

\medskip
\noindent\textbf{Global implementation note (image placeholders).} Some multimodal models do not accept inline textual ``\{Image\}'' placeholders in a unified text prompt (e.g., Gemma3). For those models, we remove the image placeholders and keep the same textual instruction and question. \emph{This handling applies to all prompt templates below}.

\paragraph{MRAG-Bench Answer Generation Prompts.}
We follow the prompt design used in the original MRAG-Bench evaluation~\cite{hu2024mragbench}. The templates below show the text prompts used for (i) the no-RAG (zero-shot) setting and (ii) the RAG setting where retrieved images are provided as additional inputs.
\begin{tcolorbox}[
  colback=white,
  colframe=black!50!white,    
  boxrule=0.5pt,
  arc=3pt,                    
  left=2pt, right=2pt, top=2pt, bottom=2pt,
  title=\small\textbf{Model Prompt: No RAG (MRAG-Bench)},
  fonttitle=\small\bfseries
]
\small
Instruction: Answer with the option's letter from the given choices directly.

\{Image\_placeholder\}

Question: \{QUESTION\}

Choices:

(A) \{OPTION\_A\}

(B) \{OPTION\_B\}

(C) \{OPTION\_C\}

(D) \{OPTION\_D\}

Answer:
\end{tcolorbox}

\begin{tcolorbox}[
  colback=white,
  colframe=black!50!white,
  boxrule=0.5pt,
  arc=3pt,
  left=2pt, right=2pt, top=2pt, bottom=2pt,
  title=\small\textbf{Model Prompt: RAG (MRAG-Bench)},
  fonttitle=\small\bfseries
]
\small
Instruction: You will be given one question concerning several images. The first image is the input image; the remaining images are retrieved examples to help you. Answer with the option's letter from the given choices directly.

\{Image\_placeholder\}

Question: \{QUESTION\}

Choices:

(A) \{OPTION\_A\}

(B) \{OPTION\_B\}

(C) \{OPTION\_C\}

(D) \{OPTION\_D\}

Answer:
\end{tcolorbox}

\paragraph{Visual-RAG Answer Generation Prompts.}
We use dataset-specific prompt templates for Visual-RAG. Two prompt modes are employed: \texttt{zero\_shot}, where no images are provided, and \texttt{image\_prompt}, where one or multiple candidate images are given. All prompts request a concise answer in the format \texttt{"Answer: \{answer\_text\}"}.
\begin{tcolorbox}[
  colback=white,
  colframe=black!50!white,
  boxrule=0.5pt,
  arc=3pt,
  left=2pt, right=2pt, top=2pt, bottom=2pt,
  title=\small\textbf{Visual-RAG: Zero-shot prompt},
  fonttitle=\small\bfseries
]
\small
Please answer the question regarding a visual feature of an organism (animal, plant, etc.). Please follow the answer format: ``Answer: \{answer\_text\}''

Question:
\{QUESTION\}
\end{tcolorbox}

\begin{tcolorbox}[
  colback=white,
  colframe=black!50!white,
  boxrule=0.5pt,
  arc=3pt,
  left=2pt, right=2pt, top=2pt, bottom=2pt,
  title=\small\textbf{Visual-RAG: Image-prompt (single image)},
  fonttitle=\small\bfseries
]
\small
Please answer the question regarding a visual feature of an organism (animal, plant, etc.). You will be provided with an image regarding that organism. If this image does not contain the key information for answering the question, please answer using your internal knowledge. Please follow the answer format: ``Answer: \{answer\_text\}''

\noindent Image(s): \texttt{<image>} (if supported)

Question:
\{QUESTION\}
\end{tcolorbox}

\begin{tcolorbox}[
  colback=white,
  colframe=black!50!white,
  boxrule=0.5pt,
  arc=3pt,
  left=2pt, right=2pt, top=2pt, bottom=2pt,
  title=\small\textbf{Visual-RAG: Image-prompt (multiple images)},
  fonttitle=\small\bfseries
]
\small
Please answer the question regarding a visual feature of an organism (animal, plant, etc.). You will be provided with several images; all of them relate to the organism, but not every image necessarily contains the key information for answering the question. If none of the images contains the key information, please answer using your internal knowledge. Please follow the answer format: ``Answer: \{answer\_text\}''

\noindent Image(s): \texttt{<image><image><image>...} (if supported)

Question:
\{QUESTION\}
\end{tcolorbox}

\paragraph{Visual-RAG LLM-as-Judge Prompt.}
Following the Visual-RAG evaluation protocol, we assess answer quality using an LLM-as-Judge rather than exact matching. We use \texttt{Qwen/Qwen3-8B}\footnote{\url{https://huggingface.co/Qwen/Qwen3-8B}} as the judging model to score generated answers against reference annotations. The judge is instructed to assign a continuous score in $[0,1]$ based on semantic correctness, accounting for partial correctness and visually induced ambiguity, while penalizing hallucinated or irrelevant content.

\begin{tcolorbox}[
  colback=white,
  colframe=black!50!white,
  boxrule=0.5pt,
  arc=3pt,
  left=2pt, right=2pt, top=2pt, bottom=2pt,
  title=\small\textbf{Visual-RAG: LLM-as-Judge prompt},
  fonttitle=\small\bfseries
]
\small
Please evaluate the answer to a question, score from 0 to 1. The reference answer is provided, and the reference is usually short phrases or a single keyword. If the student answer is containing the keywords or similar expressions (including similar or close color/pattern), without any additional guessed information, it is full correct. Similar or close color/pattern includes but not limited to the following cases: pale or light color can appear to be yellowish/greyish under different light condition; dark colors like dark brown, dark grey, dark purple can appear close to each other, and may appear as black as well; stripe pattern can appear as band or ring, dotted pattern can etc. If the student answer have missed some important part in the reference answer, please assign partial score. The reference answer can be in the form of a Python list, in this case, any one of the list item is correct.\\
If student answer contain irrelevant information not related to question, mark it with "Redundant", but it does not affect score if related parts are correct. (e.g. Question: what shape are leaves of XYZ plant, Student Answer: shape xxx, color yyy, color is Redundant answer) If student answer contain features not listed in reference answer, deduct 0.5 score and mark it with "Likely Hallucination". (e.g., Reference Answer: black and white. Student Answer: black white, with yellow dots, "Yellow dots" is not mentioned in reference). The reference answer sometimes contains additional information not asked in question, usually enclosed by brackets (), to help verifying hallucinations (e.g.: "Shape is xxx (color is yyy)"). Not mentioning additional information in answer is not considered wrong. For yes/no question, reference may contain explanations on why giving yes/no, but it is not necessary for student to answer the explanation; however, if student explains and differs with reference, it is considered as hallucination. Answering "I don’t know", "Not enough information" or similar is considered wrong, and please mark it with "No Answer".
\\\\
Format Instructions: Separate the remarks with score using "|", that is, use the syntax of: "Score: score | Likely Hallucination", "Score: score", "Score: score | Likely Hallucination | Redundant", "Score: 0 | No Answer". If any explanation on why giving the score is needed, do not start a new line and append after remark with brackets, e.g. "Score: score | Redundant | (Explanation: abc)".
\\\\
Following are few examples:
\\\\
Question: Is there any specific color marking around the eyes of a semipalmated plover (scientific name: Charadrius semipalmatus)?
Reference Answer: black eye-round feather, white stripe above eyes. (sometimes connected to the white forehead)
\\\\
Student Answer: Yes, the bird has a distinctive black line that runs through the eye, which is a key identifying feature.
Score: 0 | Likely Hallucination
\\\\
Student Answer: They have a black vertical band in front of the eye, a white band above the eye, and a single black band that wraps partially around the eye, creating a partial "mask" appearance.
Score: 1
\\\\
Student Answer: Yes, the semipalmated plover has a distinctive black/dark ring around its eye, surrounded by a bright white ring or patch
Score: 0.5 | Likely Hallucination (Explanation: not white ring, but only a line above the eye)
\\\\
Question: What is the typical color of the antennae of Harris’s checkerspot butterfly (scientific name: Chlosyne harrisii)?
Reference Answer: alternating black and white band, with yellow on the tip
\\\\
Student Answer: The antennae of Harris’s checkerspot butterfly are black with orange-tipped clubs.
Score: 0.5 (Explanation: not mentioning black and white)
\\\\
Student Answer: The typical color of the antennae of Harris’s checkerspot butterfly is black with white spots.
Score: 0.5 | Likely Hallucination (Explanation: not white spot but band. Not mentioning the tip)
\\\\
Question: Are the leaves of burro-weed (scientific name: Ambrosia dumosa) usually covered in small hairs?
Reference Answer: yes
\\\\
Student Answer: Yes, the leaves of burro-weed (Ambrosia dumosa) are typically covered in small hairs, giving them a grayish or whitish-green appearance.
Score: 1 | Redundant
\\\\
Now, Score the following question:

\noindent Image(s): \texttt{<image><image><image>...} (if supported)

Question:
\{QUESTION\}
\end{tcolorbox}

\paragraph{Auxiliary Utility Probing Prompts.}
For discriminative utility estimation, we instantiate the latent helpfulness variable using dataset-specific auxiliary prompts. These prompts elicit a binary judgment on whether a retrieved image provides useful information for answering the original question.
\begin{tcolorbox}[
  colback=white,
  colframe=black!50!white,
  boxrule=0.5pt,
  arc=3pt,
  left=2pt, right=2pt, top=2pt, bottom=2pt,
  title=\small\textbf{Auxiliary Prompt for MRAG-Bench},
  fonttitle=\small\bfseries
]
\small
You will be given two images and a multiple-choice question.
\begin{itemize}[leftmargin=*,itemsep=1pt]
  \item The first image is the input image that the question is about.
  \item The second image is a retrieved image intended to provide additional visual evidence.
\end{itemize}
The retrieved image does not need to answer the question by itself. It is only meant to help answer the question together with the input image.
\\\\
Question:
\{QUESTION\}
\\\\
Choices:
\{CHOICES\}
\\\\
Based on the images provided, does the retrieved image provide helpful visual or factual information that could assist in answering the question correctly?
\\\\
Answer with True or False.
\end{tcolorbox}

\begin{tcolorbox}[
  colback=white,
  colframe=black!50!white,
  boxrule=0.5pt,
  arc=3pt,
  left=2pt, right=2pt, top=2pt, bottom=2pt,
  title=\small\textbf{Auxiliary Prompt for Visual-RAG},
  fonttitle=\small\bfseries
]
\small
You will be given one image and a question about a visual attribute of an organism.
\\\\
The image is retrieved as potential visual evidence. Not all retrieved images contain the information needed to answer the question.
\\\\
Question:
\{QUESTION\}
\\\\
Based on the image provided, does this image contain the key visual information needed to answer the question?
\\\\
Answer with True or False.
\end{tcolorbox}

\paragraph{Ablation Baselines: Verbalized UQ and Listwise Ranking.}
We include two prompt-based ablation baselines. \emph{Verbalized UQ} prompts the model to produce an explicit self-assessed uncertainty score for each retrieved image, which is parsed into a scalar ranking signal. \emph{Listwise ranking} prompts the model to jointly consider multiple retrieved images and output an explicit ranking over candidates. The corresponding prompt templates for MRAG-Bench and Visual-RAG are shown below.
\begin{tcolorbox}[
  colback=white,
  colframe=black!50!white,
  boxrule=0.5pt,
  arc=3pt,
  left=2pt, right=2pt, top=2pt, bottom=2pt,
  title=\small\textbf{Verbalized UQ Prompt for MRAG-Bench},
  fonttitle=\small\bfseries
]
\small
You will be given:
\begin{itemize}[leftmargin=*,itemsep=1pt]
\item An input image that the question refers to.
\item One retrieved image.
\item A multiple-choice question.
\end{itemize}
The retrieved image is intended to serve as additional visual evidence
to help answer the question, together with the input image.
It does not need to answer the question by itself.
\\\\
Input image: (the first image in the provided images list)
Retrieved image: (the second image in the provided images list)
\\\\
Question:
\{QUESTION\}
\\\\
Choices:
\{CHOICES\}
\\\\
Task:
Rate how useful the retrieved image is for answering the question,
considering it together with the input image.
\\\\
Use the following scale:
\begin{itemize}[leftmargin=*,itemsep=1pt]
\item 0 means the retrieved image is completely unhelpful or irrelevant.
\item 100 means the retrieved image provides clear and decisive information that directly supports choosing the correct answer.
\item Intermediate values indicate partial usefulness.
\end{itemize}
Output format:
Output a single integer between 0 and 100.
Do not provide any explanation or additional text.
\end{tcolorbox}

\begin{tcolorbox}[
  colback=white,
  colframe=black!50!white,
  boxrule=0.5pt,
  arc=3pt,
  left=2pt, right=2pt, top=2pt, bottom=2pt,
  title=\small\textbf{Verbalized UQ Prompt for Visual-RAG},
  fonttitle=\small\bfseries
]
\small
You will be given:
\begin{itemize}[leftmargin=*,itemsep=1pt]
\item A question about a visual attribute of an organism.
\item One retrieved image.
\end{itemize}

Not all retrieved images contain the information needed to answer the question.
Some images may be irrelevant or uninformative.
\\\\
Retrieved image: (the image provided)
\\\\
Question:
\{QUESTION\}
\\\\
Task:
Rate how useful the retrieved image is for answering the question.
\\\\
Use the following scale:
\begin{itemize}[leftmargin=*,itemsep=1pt]
\item 0 means the image does not contain any relevant visual information.
\item 100 means the image alone contains clear and sufficient visual information to answer the question.
\item Intermediate values indicate partial usefulness.
\end{itemize}

Output format:
Output a single integer between 0 and 100.
Do not provide any explanation or additional text.
\end{tcolorbox}

\begin{tcolorbox}[
  colback=white,
  colframe=black!50!white,
  boxrule=0.5pt,
  arc=3pt,
  left=2pt, right=2pt, top=2pt, bottom=2pt,
  title=\small\textbf{Listwise Ranking Prompt for MRAG-Bench},
  fonttitle=\small\bfseries
]
\small
You will be given:
\begin{itemize}[leftmargin=*,itemsep=1pt]
\item An input image that the question refers to.
\item \{NUM\_IMAGES\} retrieved images, indexed from Image 1 to Image \{NUM\_IMAGES\}.
\item A multiple-choice question.
\end{itemize}

Each retrieved image is intended to serve as additional visual evidence
to help answer the question, together with the input image.
The retrieved images do not need to answer the question by themselves.
\\\\
Input image: (the first image in the provided images list)
\\\\
Retrieved images:
Image 1, Image 2, ..., Image \{NUM\_IMAGES\}
(The order above is arbitrary.)
\\\\
Question:
\{QUESTION\}
\\\\
Choices:
\{CHOICES\}
\\\\
Task:
Rank the retrieved images by how useful they are for answering the question,
considering them together with the input image.
\\\\
Usefulness is defined as how much the image provides visual or factual information
that helps determine the correct answer.
\\\\
Output format:
Output the indices of the retrieved images in descending order of usefulness,
using the format:
Image X > Image Y > Image Z > ...
\\\\
Only output the ranking. Do not provide any explanation or additional text.
Each image index from Image 1 to Image \{NUM\_IMAGES\} must appear exactly once.
\end{tcolorbox}

\begin{tcolorbox}[
  colback=white,
  colframe=black!50!white,
  boxrule=0.5pt,
  arc=3pt,
  left=2pt, right=2pt, top=2pt, bottom=2pt,
  title=\small\textbf{Listwise Ranking Prompt for Visual-RAG},
  fonttitle=\small\bfseries
]
\small
You will be given:
\begin{itemize}[leftmargin=*,itemsep=1pt]
\item A question about a visual attribute of an organism.
\item \{NUM\_IMAGES\} retrieved images, indexed from Image 1 to Image \{NUM\_IMAGES\}.
\end{itemize}

Not all retrieved images contain the information needed to answer the question.
Some images may be irrelevant or uninformative.
\\\\
Retrieved images:
Image 1, Image 2, ..., Image \{NUM\_IMAGES\}
(The order above is arbitrary.)
\\\\
Question:
\{QUESTION\}
\\\\
Task:
Rank the retrieved images by how useful they are for answering the question.
\\\\
Usefulness is defined as how much the image contains the key visual information
required to answer the question.
\\\\
Output format:
Output the indices of the retrieved images in descending order of usefulness,
using the format:
Image X > Image Y > Image Z > ...
\\\\
Only output the ranking. Do not provide any explanation or additional text.
Each image index from Image 1 to Image \{NUM\_IMAGES\} must appear exactly once.
\end{tcolorbox}

\noindent\textbf{Note.}
The model output is parsed into an explicit ranking over candidate images. For evaluation, we select the top-$K$ images according to the predicted ranking and pass them to the main generation model.

\paragraph{Prompt Sensitivity Templates.}
To assess the robustness of surrogate-based helpfulness estimation to prompt realization, we conduct a controlled prompt sensitivity analysis using three template variants:

\begin{description}[leftmargin=1.5em,style=nextline]
    \item[V1: Lexical substitution.]
    We minimally modify the original template by replacing the keyword \textit{helpful} with \textit{useful}, isolating the effect of a small lexical change.
    \item[V2: Paraphrasing.]
    We construct a semantically equivalent rephrased version that changes the sentence form while preserving the original intent. This variant tests robustness to syntactic and stylistic variation.
    \item[V3: Output-label variation.]
    We modify the binary output label space used for the decision step. On MRAG-Bench, we replace \texttt{True/False} with \texttt{Yes/No}; on Visual-RAG, we replace \texttt{True/False} with \texttt{Helpful/Not\ helpful}. This variant directly tests sensitivity to output token choice and the resulting logit differences.
\end{description}
The corresponding prompt templates are listed below.

\begin{tcolorbox}[
  colback=white,
  colframe=black!50!white,
  boxrule=0.5pt,
  arc=3pt,
  left=2pt, right=2pt, top=2pt, bottom=2pt,
  title=\small\textbf{Lexical substitution (V1) for MRAG-Bench},
  fonttitle=\small\bfseries
]
\small

You are shown two images and a multiple-choice question.

\begin{itemize}[leftmargin=*,itemsep=1pt]
\item Image A: the original image referenced by the question.
\item Image B: an image retrieved as supporting evidence.
\end{itemize}

The retrieved image may or may not add information that changes the answer.

Question:
\{question\_text\}

Options:
\{choices\}

Does Image B supply additional visual or factual cues that would help resolve the question correctly?

Respond with True or False.

\end{tcolorbox}

\begin{tcolorbox}[
  colback=white,
  colframe=black!50!white,
  boxrule=0.5pt,
  arc=3pt,
  left=2pt, right=2pt, top=2pt, bottom=2pt,
  title=\small\textbf{Paraphrasing (V2) for MRAG-Bench},
  fonttitle=\small\bfseries
]
\small

You will be given two images and a multiple-choice question.
\begin{itemize}[leftmargin=*,itemsep=1pt]
\item The first image is the input image that the question is about.
\item The second image is a retrieved image intended to provide additional visual evidence.
\end{itemize}

The retrieved image does not need to answer the question by itself.
\\
It is only meant to help answer the question together with the input image.

Question:
\{question\_text\}

Choices:
\{choices\}

Based on the images provided, does the retrieved image provide useful visual or factual information that could assist in answering the question correctly?

Answer with True or False.

\end{tcolorbox}

\begin{tcolorbox}[
  colback=white,
  colframe=black!50!white,
  boxrule=0.5pt,
  arc=3pt,
  left=2pt, right=2pt, top=2pt, bottom=2pt,
  title=\small\textbf{Output label variation (V3) for MRAG-Bench},
  fonttitle=\small\bfseries
]
\small

You are shown two images and a multiple-choice question.
\begin{itemize}[leftmargin=*,itemsep=1pt]
\item Image A: the original image referenced by the question.
\item Image B: an image retrieved as supporting evidence.
\end{itemize}

The retrieved image may or may not add information that changes the answer.
\\
Question:
\{question\_text\}
\\
Options:
\{choices\}
\\
Does the retrieved image provide useful evidence to aid answering the question?
\\
Answer with "Yes" or "No".

\end{tcolorbox}

\begin{tcolorbox}[
  colback=white,
  colframe=black!50!white,
  boxrule=0.5pt,
  arc=3pt,
  left=2pt, right=2pt, top=2pt, bottom=2pt,
  title=\small\textbf{Lexical substitution (V1) for Visual-RAG},
  fonttitle=\small\bfseries
]
\small

You are shown one image and a question about a visual trait of an organism.

The image is a retrieved candidate; it might or might not show the necessary detail.

Question:
\{question\_text\}

Does the provided image include the critical visual detail required to answer the question?

Respond with True or False.

\end{tcolorbox}

\begin{tcolorbox}[
  colback=white,
  colframe=black!50!white,
  boxrule=0.5pt,
  arc=3pt,
  left=2pt, right=2pt, top=2pt, bottom=2pt,
  title=\small\textbf{Paraphrasing (V2) for Visual-RAG},
  fonttitle=\small\bfseries
]
\small

You will be given one image and a question about a visual attribute of an organism.

The image is retrieved as potential visual evidence.
Not all retrieved images contain the information needed to answer the question.

Question:
\{question\_text\}

Based on the image provided, does this image contain useful visual information needed to answer the question?

Answer with True or False.

\end{tcolorbox}

\begin{tcolorbox}[
  colback=white,
  colframe=black!50!white,
  boxrule=0.5pt,
  arc=3pt,
  left=2pt, right=2pt, top=2pt, bottom=2pt,
  title=\small\textbf{Output label variation (V3) for Visual-RAG},
  fonttitle=\small\bfseries
]
\small

You are shown one image and a question about a visual trait of an organism.

The image is a retrieved candidate; it might or might not show the necessary detail.

Question:
\{question\_text\}

Please indicate whether this retrieved image is Helpful or Not helpful for answering the question.

Answer with "Helpful" or "Not helpful".

\end{tcolorbox}

\begin{table*}[]
\begin{lrbox}{\myentiretablebox}
\begin{tabular}{l|ll|llllllllll}
\hline
 &  &  & \multicolumn{5}{c|}{\textbf{MRAG-Bench}} & \multicolumn{5}{c}{\textbf{Visual-RAG}} \\ \cline{4-13} 
 &  &  & \multicolumn{10}{c}{Main Model} \\ \cline{4-13} 
 & \multirow{-3}{*}{Image Selection Method} & \multirow{-3}{*}{\# Params} & Qwen3-VL-8B & MiniCPM-V4.5 & Gemma3-12B & Ovis2.5-9B & \multicolumn{1}{l|}{InternVL3.5-8B} & Qwen3-VL-8B & MiniCPM-V4.5 & Gemma3-12B & Ovis2.5-9B & InternVL3.5-8B \\ \cline{2-13} 
\multirow{-4}{*}{Top K} & Zero-Shot &  & 59.35 & 57.95 & 56.84 & 59.05 & \multicolumn{1}{l|}{42.87} & 52.41 & 53.07 & 51.07 & 52.67 & 54.28 \\ \hline
 & CLIP-B & 151M & 57.80(-1.55) & 58.83(+0.88) & 55.95(-0.89) & 57.13(-1.92) & \multicolumn{1}{l|}{42.65(-0.22)} & 54.28(+1.87) & 58.56(+5.49) & 51.87(+0.80) & 66.04(+13.37) & \textbf{64.17(+9.89)} \\
 & CLIP-L & 428M & 57.80(-1.55) & 58.98(+1.03) & 56.25(-0.59) & 57.95(-1.10) & \multicolumn{1}{l|}{44.49(+1.62)} & 54.14(+1.73) & 58.29(+5.22) & 54.95(+3.88) & 62.03(+9.36) & 61.23(+6.95) \\
 & OpenCLIP & 151M & 58.46(-0.89) & 60.46(+2.51) & 57.50(+0.66) & 58.24(-0.81) & \multicolumn{1}{l|}{42.65(-0.22)} & 52.94(+0.53) & 58.56(+5.49) & 52.67(+1.60) & 64.97(+12.30) & 60.29(+6.01) \\
 & SigLIP 2 Base & 375M & 59.42(+0.07) & 61.27(+3.32) & 58.24(+1.40) & 58.91(-0.14) & \multicolumn{1}{l|}{44.35(+1.48)} & 52.94(+0.53) & 56.68(+3.61) & 55.48(+4.41) & 64.04(+11.37) & 61.76(+7.48) \\
 & SigLIP 2 So400m & 1.1B & 61.42(+2.07) & 61.57(+3.62) & 59.87(+3.03) & 59.35(+0.30) & \multicolumn{1}{l|}{44.49(+1.62)} & 53.34(+0.93) & 59.63(+6.56) & 53.34(+2.27) & 64.84(+12.17) & 62.57(+8.29) \\
 & BGE-VL-large & 428M & 58.98(-0.37) & 60.53(+2.58) & 55.65(-1.19) & 59.05(+0.00) & \multicolumn{1}{l|}{44.64(+1.77)} & 52.81(+0.40) & 57.35(+4.28) & 54.28(+3.21) & 65.64(+12.97) & 58.42(+4.14) \\
 & E5-V & 7.8B & 58.83(-0.52) & 59.20(+1.25) & 55.06(-1.78) & 57.95(-1.10) & \multicolumn{1}{l|}{43.46(+0.59)} & 53.88(+1.47) & 58.42(+5.35) & 55.48(+4.41) & 67.91(+15.24) & 62.57(+8.29) \\
 & GME & 2.2B & 64.38(+5.03) & 65.19(+7.24) & 59.42(+2.58) & 61.35(+2.30) & \multicolumn{1}{l|}{47.01(+4.14)} & 55.88(+3.47) & 59.09(+6.02) & 53.07(+2.00) & 67.51(+14.84) & 62.30(+8.02) \\
 & VLM2Vec & 7.7B & 61.71(+2.36) & 61.79(+3.84) & 58.76(+1.92) & 60.83(+1.78) & \multicolumn{1}{l|}{46.12(+3.25)} & 54.55(+2.14) & 57.49(+4.42) & 54.81(+3.74) & 49.33(-3.34) & 48.53(-5.75) \\
 & VLM2Vec-V2.0 & 2.2B & 60.68(+1.33) & 61.86(+3.91) & 57.50(+0.66) & 60.24(+1.19) & \multicolumn{1}{l|}{45.31(+2.44)} & 55.61(+3.20) & 55.88(+2.81) & 53.88(+2.81) & 50.80(-1.87) & 48.53(-5.75) \\
 & BGE-MLLM-S1 & 7.6B & 60.46(+1.11) & 60.90(+2.95) & 55.28(-1.56) & 58.61(-0.44) & \multicolumn{1}{l|}{45.68(+2.81)} & 56.15(+3.74) & 56.55(+3.48) & 56.55(+5.48) & 67.11(+14.44) & 63.37(+9.09) \\
 & BGE-MLLM-S2 & 7.6B & 60.31(+0.96) & 61.57(+3.62) & 57.58(+0.74) & 58.54(-0.51) & \multicolumn{1}{l|}{44.72(+1.85)} & 53.61(+1.20) & 55.35(+2.28) & 55.35(+4.28) & 65.11(+12.44) & 63.90(+9.62) \\
 & UniME-V2 & 7.1B & 62.82(+3.47) & 63.05(+5.10) & \textbf{60.90(+4.06)} & 60.61(+1.56) & \multicolumn{1}{l|}{45.60(+2.73)} & 56.95(+4.54) & 58.69(+5.62) & 55.88(+4.81) & 57.62(+4.95) & 49.33(-4.95) \\
 & LamRA & 8B & 63.34(+3.99) & 62.97(+5.02) & 58.61(+1.77) & 59.05(+0.00) & \multicolumn{1}{l|}{45.75(+2.88)} & 58.42(+6.01) & 60.16(+7.09) & 55.48(+4.41) & 58.16(+5.49) & 62.57(+8.29) \\
 & Ours (Qwen3-VL-2B Surrogate) & 2.1B & \textbf{65.56(+6.21)} & \textbf{65.41(+7.46)} & 60.83(+3.99) & \textbf{61.42(+2.37)} & \multicolumn{1}{l|}{\textbf{47.89(+5.02)}} & 59.89(+7.48) & 60.16(+7.09) & 59.22(+8.15) & 68.85(+16.18) & \textbf{64.17(+9.89)} \\
 & Ours (Ovis2.5-2B Surrogate) & 2.6B & 64.97(+5.62) & 64.08(+6.13) & 60.53(+3.69) & 60.16(+1.11) & \multicolumn{1}{l|}{47.23(+4.36)} & \textbf{61.36(+8.95)} & \textbf{61.23(+8.16)} & 57.75(+6.68) & \textbf{69.12(+16.45)} & 62.30(+8.02) \\
 & Ours (In-Family Surrogate) &  & \textbf{65.56(+6.21)} & 64.60(+6.65) & 59.72(+2.88) & \textbf{61.42(+2.37)} & \multicolumn{1}{l|}{47.08(+4.21)} & 59.89(+7.48) & 60.70(+7.63) & \textbf{59.63(+8.56)} & 68.85(+16.18) & 62.17(+7.89) \\ \cline{2-13} 
\multirow{-18}{*}{K=1} & GT (Human-Annotated Image as Input) & \textbf{} & 64.82(+5.47) & 64.15(+6.20) & 57.65(+0.81) & 62.97(+3.92) & \multicolumn{1}{l|}{46.49(+3.62)} & 60.96(+8.55) & 63.50(+10.43) & 58.69(+7.62) & 70.86(+18.19) & 64.04(+9.76) \\ \hline
 & CLIP-B & 151M & 59.35(+0.00) & 59.42(+1.47) & 56.91(+0.07) & 57.58(-1.47) & \multicolumn{1}{l|}{43.09(+0.22)} & 56.82(+4.41) & 59.09(+6.02) & 57.35(+6.28) & 58.02(+5.35) & 51.87(-2.41) \\
 & CLIP-L & 428M & 58.98(-0.37) & 60.46(+2.51) & 56.47(-0.37) & 57.21(-1.84) & \multicolumn{1}{l|}{45.08(+2.21)} & 57.62(+5.21) & 58.56(+5.49) & 57.49(+6.42) & 57.22(+4.55) & 52.27(-2.01) \\
 & OpenCLIP & 151M & 60.61(+1.26) & 61.12(+3.17) & 57.95(+1.11) & 58.46(-0.59) & \multicolumn{1}{l|}{44.49(+1.62)} & 56.95(+4.54) & 60.29(+7.22) & 56.82(+5.75) & 57.22(+4.55) & 53.21(-1.07) \\
 & SigLIP 2 Base & 375M & 61.49(+2.14) & 61.64(+3.69) & 58.68(+1.84) & 58.39(-0.66) & \multicolumn{1}{l|}{45.16(+2.29)} & 56.95(+4.54) & 56.42(+3.35) & 56.68(+5.61) & 56.95(+4.28) & 51.74(-2.54) \\
 & SigLIP 2 So400m & 1.1B & 62.97(+3.62) & 63.41(+5.46) & 60.46(+3.62) & 59.05(+0.00) & \multicolumn{1}{l|}{45.38(+2.51)} & 54.01(+1.60) & 59.76(+6.69) & 54.41(+3.34) & 58.42(+5.75) & 52.14(-2.14) \\
 & BGE-VL-large & 428M & 60.75(+1.40) & 60.90(+2.95) & 57.13(+0.29) & 59.05(+0.00) & \multicolumn{1}{l|}{44.05(+1.18)} & 57.35(+4.94) & 58.69(+5.62) & 56.68(+5.61) & 57.09(+4.42) & 52.94(-1.34) \\
 & E5-V & 7.8B & 60.90(+1.55) & 61.27(+3.32) & 58.24(+1.40) & 59.20(+0.15) & \multicolumn{1}{l|}{43.46(+0.59)} & 57.62(+5.21) & 58.02(+4.95) & 58.16(+7.09) & 59.09(+6.42) & 52.27(-2.01) \\
 & GME & 2.2B & 64.52(+5.17) & 64.30(+6.35) & 60.90(+4.06) & 61.20(+2.15) & \multicolumn{1}{l|}{46.27(+3.40)} & 59.89(+7.48) & 58.42(+5.35) & 57.09(+6.02) & 58.16(+5.49) & 52.81(-1.47) \\
 & VLM2Vec & 7.7B & 63.34(+3.99) & 63.56(+5.61) & 60.53(+3.69) & 59.72(+0.67) & \multicolumn{1}{l|}{46.27(+3.40)} & 56.82(+4.41) & 60.03(+6.96) & 54.55(+3.48) & 55.61(+2.94) & 52.27(-2.01) \\
 & VLM2Vec-V2.0 & 2.2B & 63.86(+4.51) & 64.15(+6.20) & 60.53(+3.69) & 59.50(+0.45) & \multicolumn{1}{l|}{46.12(+3.25)} & 55.61(+3.20) & 59.09(+6.02) & 54.95(+3.88) & 55.88(+3.21) & 53.48(-0.80) \\
 & BGE-MLLM-S1 & 7.6B & 62.90(+3.55) & 63.41(+5.46) & 57.95(+1.11) & 59.57(+0.52) & \multicolumn{1}{l|}{45.75(+2.88)} & 57.49(+5.08) & 58.56(+5.49) & 58.56(+7.49) & 58.96(+6.29) & 50.94(-3.34) \\
 & BGE-MLLM-S2 & 7.6B & 63.34(+3.99) & 63.27(+5.32) & 58.39(+1.55) & 58.39(-0.66) & \multicolumn{1}{l|}{44.64(+1.77)} & 58.29(+5.88) & 55.75(+2.68) & 55.75(+4.68) & 54.28(+1.61) & 50.67(-3.61) \\
 & UniME-V2 & 7.1B & 64.52(+5.17) & 63.41(+5.46) & 60.75(+3.91) & 60.38(+1.33) & \multicolumn{1}{l|}{44.42(+1.55)} & 58.69(+6.28) & 60.96(+7.89) & 56.95(+5.88) & 60.03(+7.36) & 54.81(+0.53) \\
 & LamRA & 8B & 64.82(+5.47) & 63.93(+5.98) & 59.57(+2.73) & 60.24(+1.19) & \multicolumn{1}{l|}{46.64(+3.77)} & 62.30(+9.89) & \textbf{63.50(+10.43)} & 59.22(+8.15) & 59.76(+7.09) & \textbf{56.02(+1.74)} \\
 & Ours (Qwen3-VL-2B Surrogate) & 2.1B & \textbf{65.71(+6.36)} & \textbf{66.74(+8.79)} & \textbf{64.15(+7.31)} & \textbf{62.68(+3.63)} & \multicolumn{1}{l|}{\textbf{48.26(+5.39)}} & \textbf{62.57(+10.16)} & 61.50(+8.43) & 60.16(+9.09) & 60.96(+8.29) & 54.95(+0.67) \\
 & Ours (Ovis2.5-2B Surrogate) & 2.6B & \textbf{65.71(+6.36)} & 65.34(+7.39) & 61.57(+4.73) & 61.20(+2.15) & \multicolumn{1}{l|}{47.52(+4.65)} & \textbf{62.57(+10.16)} & 61.36(+8.29) & 60.03(+8.96) & \textbf{61.76(+9.09)} & 54.95(+0.67) \\
 & Ours (In-Family Surrogate) &  & \textbf{65.71(+6.36)} & 65.56(+7.61) & 61.71(+4.87) & 61.20(+2.15) & \multicolumn{1}{l|}{46.56(+3.69)} & \textbf{62.57(+10.16)} & 60.03(+6.96) & \textbf{60.43(+9.36)} & \textbf{61.76(+9.09)} & 55.61(+1.33) \\ \cline{2-13} 
\multirow{-18}{*}{K=2} & GT (Human-Annotated Image as Input) &  & 67.11(+7.76) & 66.44(+8.49) & 59.42(+2.58) & 62.90(+3.85) & \multicolumn{1}{l|}{48.34(+5.47)} & 63.77(+11.36) & 64.17(+11.10) & 59.36(+8.29) & 64.84(+12.17) & 59.09(+4.81) \\ \hline
 & CLIP-B & 151M & 61.42(+2.07) & 60.83(+2.88) & 57.80(+0.96) & 56.84(-2.21) & \multicolumn{1}{l|}{43.61(+0.74)} & 58.69(+6.28) & 61.63(+8.56) & 57.35(+6.28) & 58.42(+5.75) & 51.07(-3.21) \\
 & CLIP-L & 428M & 59.94(+0.59) & 61.86(+3.91) & 57.72(+0.88) & 57.58(-1.47) & \multicolumn{1}{l|}{44.27(+1.40)} & 59.49(+7.08) & 62.57(+9.50) & 56.82(+5.75) & 58.56(+5.89) & 51.74(-2.54) \\
 & OpenCLIP & 151M & 62.23(+2.88) & 62.97(+5.02) & 58.68(+1.84) & 59.65(+0.60) & \multicolumn{1}{l|}{43.83(+0.96)} & 60.83(+8.42) & 61.36(+8.29) & 58.02(+6.95) & 57.75(+5.08) & 53.61(-0.67) \\
 & SigLIP 2 Base & 375M & 62.90(+3.55) & 62.23(+4.28) & 60.98(+4.14) & 58.39(-0.66) & \multicolumn{1}{l|}{44.86(+1.99)} & 59.36(+6.95) & 58.69(+5.62) & 57.89(+6.82) & 58.42(+5.75) & 54.81(+0.53) \\
 & SigLIP 2 So400m & 1.1B & 63.27(+3.92) & 63.56(+5.61) & 60.83(+3.99) & 59.87(+0.82) & \multicolumn{1}{l|}{45.53(+2.66)} & 56.68(+4.27) & \textbf{63.64(+10.57)} & 57.62(+6.55) & 58.29(+5.62) & 51.34(-2.94) \\
 & BGE-VL-large & 428M & 62.75(+3.40) & 61.79(+3.84) & 58.98(+2.14) & 59.65(+0.60) & \multicolumn{1}{l|}{44.72(+1.85)} & 57.75(+5.34) & 59.63(+6.56) & 55.35(+4.28) & 56.82(+4.15) & 51.60(-2.68) \\
 & E5-V & 7.8B & 62.53(+3.18) & 62.53(+4.58) & 59.13(+2.29) & 58.98(-0.07) & \multicolumn{1}{l|}{43.16(+0.29)} & 58.96(+6.55) & 59.76(+6.69) & 58.29(+7.22) & 60.29(+7.62) & 52.14(-2.14) \\
 & GME & 2.2B & 65.41(+6.06) & 65.48(+7.53) & 61.79(+4.95) & 61.05(+2.00) & \multicolumn{1}{l|}{46.93(+4.06)} & 61.50(+9.09) & 60.29(+7.22) & 56.02(+4.95) & 53.88(+1.21) & 50.53(-3.75) \\
 & VLM2Vec & 7.7B & 64.08(+4.73) & 63.71(+5.76) & 60.90(+4.06) & 59.87(+0.82) & \multicolumn{1}{l|}{45.53(+2.66)} & 60.03(+7.62) & 60.29(+7.22) & 57.35(+6.28) & 56.42(+3.75) & 53.07(-1.21) \\
 & VLM2Vec-V2.0 & 2.2B & 64.82(+5.47) & 64.75(+6.80) & 61.12(+4.28) & 61.05(+2.00) & \multicolumn{1}{l|}{45.08(+2.21)} & 58.69(+6.28) & 60.83(+7.76) & 56.68(+5.61) & 56.95(+4.28) & 52.41(-1.87) \\
 & BGE-MLLM-S1 & 7.6B & 62.82(+3.47) & 63.78(+5.83) & 59.65(+2.81) & 60.09(+1.04) & \multicolumn{1}{l|}{45.90(+3.03)} & 59.89(+7.48) & 58.69(+5.62) & 58.69(+7.62) & 58.29(+5.62) & 52.01(-2.27) \\
 & BGE-MLLM-S2 & 7.6B & 64.52(+5.17) & 64.52(+6.57) & 60.68(+3.84) & 60.61(+1.56) & \multicolumn{1}{l|}{45.01(+2.14)} & 59.09(+6.68) & 56.02(+2.95) & 56.02(+4.95) & 57.75(+5.08) & 50.27(-4.01) \\
 & UniME-V2 & 7.1B & 65.19(+5.84) & 64.75(+6.80) & 62.08(+5.24) & 61.57(+2.52) & \multicolumn{1}{l|}{45.97(+3.10)} & 59.22(+6.81) & 61.10(+8.03) & 58.02(+6.95) & 58.29(+5.62) & 53.34(-0.94) \\
 & LamRA & 8B & 65.85(+6.50) & 64.89(+6.94) & 60.53(+3.69) & 61.27(+2.22) & \multicolumn{1}{l|}{46.27(+3.40)} & 61.23(+8.82) & 63.10(+10.03) & 58.29(+7.22) & 56.95(+4.28) & 53.88(-0.40) \\
 & Ours (Qwen3-VL-2B Surrogate) & 2.1B & \textbf{67.55(+8.20)} & \textbf{65.85(+7.90)} & \textbf{64.52(+7.68)} & \textbf{62.31(+3.26)} & \multicolumn{1}{l|}{47.08(+4.21)} & 63.77(+11.36) & 59.89(+6.82) & \textbf{60.43(+9.36)} & \textbf{63.24(+10.57)} & \textbf{55.75(+1.47)} \\
 & Ours (Ovis2.5-2B Surrogate) & 2.6B & 66.08(+6.73) & \textbf{65.85(+7.90)} & 62.23(+5.39) & 62.01(+2.96) & \multicolumn{1}{l|}{\textbf{47.15(+4.28)}} & \textbf{64.44(+12.03)} & 60.43(+7.36) & 57.75(+6.68) & 60.70(+8.03) & 53.88(-0.40) \\
 & Ours (In-Family Surrogate) &  & \textbf{67.55(+8.20)} & 65.63(+7.68) & 62.82(+5.98) & \textbf{62.31(+3.26)} & \multicolumn{1}{l|}{\textbf{47.15(+4.28)}} & 63.77(+11.36) & 61.50(+8.43) & 57.89(+6.82) & \textbf{63.24(+10.57)} & 55.08(+0.80) \\ \cline{2-13} 
\multirow{-18}{*}{K=3} & GT (Human-Annotated Image as Input) & \textbf{} & 69.03(+9.68) & 66.89(+8.94) & 58.39(+1.55) & 64.01(+4.96) & \multicolumn{1}{l|}{48.78(+5.91)} & 64.71(+12.30) & 62.17(+9.10) & 59.63(+8.56) & 64.71(+12.04) & 57.89(+3.61) \\ \hline
 & CLIP-B & 151M & 61.20(+1.85) & 60.46(+2.51) & 58.46(+1.62) & 58.31(-0.74) & \multicolumn{1}{l|}{43.75(+0.88)} & 63.64(+11.23) & 61.63(+8.56) & 58.82(+7.75) & 60.43(+7.76) & 53.48(-0.80) \\
 & CLIP-L & 428M & 59.87(+0.52) & 62.53(+4.58) & 57.35(+0.51) & 58.61(-0.44) & \multicolumn{1}{l|}{42.87(+0.00)} & 60.03(+7.62) & 62.03(+8.96) & 55.48(+4.41) & 58.96(+6.29) & 52.94(-1.34) \\
 & OpenCLIP & 151M & 63.05(+3.70) & 63.05(+5.10) & 59.20(+2.36) & 59.72(+0.67) & \multicolumn{1}{l|}{43.98(+1.11)} & 60.56(+8.15) & 61.76(+8.69) & 58.02(+6.95) & 60.03(+7.36) & 54.55(+0.27) \\
 & SigLIP 2 Base & 375M & 63.34(+3.99) & 63.34(+5.39) & 62.68(+5.84) & 59.79(+0.74) & \multicolumn{1}{l|}{44.49(+1.62)} & 61.76(+9.35) & 58.16(+5.09) & 58.96(+7.89) & 57.62(+4.95) & 54.01(-0.27) \\
 & SigLIP 2 So400m & 1.1B & 64.82(+5.47) & 63.12(+5.17) & 61.71(+4.87) & 60.61(+1.56) & \multicolumn{1}{l|}{45.16(+2.29)} & 60.56(+8.15) & 62.97(+9.90) & 55.35(+4.28) & 60.03(+7.36) & 54.14(-0.14) \\
 & BGE-VL-large & 428M & 62.90(+3.55) & 63.78(+5.83) & 59.87(+3.03) & 59.50(+0.45) & \multicolumn{1}{l|}{44.86(+1.99)} & 58.42(+6.01) & 62.03(+8.96) & 56.95(+5.88) & 56.42(+3.75) & 51.87(-2.41) \\
 & E5-V & 7.8B & 62.68(+3.33) & 63.05(+5.10) & 59.28(+2.44) & 59.79(+0.74) & \multicolumn{1}{l|}{44.49(+1.62)} & 60.16(+7.75) & 61.10(+8.03) & 60.43(+9.36) & 59.36(+6.69) & 54.68(+0.40) \\
 & GME & 2.2B & 66.22(+6.87) & 66.08(+8.13) & 62.82(+5.98) & 62.53(+3.48) & \multicolumn{1}{l|}{45.38(+2.51)} & 63.64(+11.23) & 59.36(+6.29) & 57.35(+6.28) & 55.88(+3.21) & 53.88(-0.40) \\
 & VLM2Vec & 7.7B & 64.89(+5.54) & 64.75(+6.80) & 61.57(+4.73) & 60.83(+1.78) & \multicolumn{1}{l|}{45.01(+2.14)} & 61.76(+9.35) & 62.97(+9.90) & 58.42(+7.35) & 60.29(+7.62) & 53.07(-1.21) \\
 & VLM2Vec-V2.0 & 2.2B & 65.56(+6.21) & 65.11(+7.16) & 62.31(+5.47) & 62.08(+3.03) & \multicolumn{1}{l|}{44.35(+1.48)} & 61.90(+9.49) & 60.56(+7.49) & 59.22(+8.15) & 57.22(+4.55) & 53.07(-1.21) \\
 & BGE-MLLM-S1 & 7.6B & 62.97(+3.62) & 64.23(+6.28) & 58.98(+2.14) & 60.61(+1.56) & \multicolumn{1}{l|}{44.05(+1.18)} & 62.03(+9.62) & 59.63(+6.56) & 59.63(+8.56) & 60.83(+8.16) & 53.21(-1.07) \\
 & BGE-MLLM-S2 & 7.6B & 63.93(+4.58) & 64.08(+6.13) & 60.31(+3.47) & 60.53(+1.48) & \multicolumn{1}{l|}{45.23(+2.36)} & 58.96(+6.55) & 56.55(+3.48) & 56.55(+5.48) & 58.69(+6.02) & 51.60(-2.68) \\
 & UniME-V2 & 7.1B & 66.15(+6.80) & 65.34(+7.39) & 62.45(+5.61) & 62.31(+3.26) & \multicolumn{1}{l|}{44.72(+1.85)} & 60.70(+8.29) & 62.97(+9.90) & 60.56(+9.49) & 58.96(+6.29) & 54.68(+0.40) \\
 & LamRA & 8B & 66.44(+7.09) & \textbf{66.37(+8.42)} & 60.75(+3.91) & 62.53(+3.48) & \multicolumn{1}{l|}{\textbf{47.15(+4.28)}} & 62.17(+9.76) & 63.64(+10.57) & 59.22(+8.15) & 58.96(+6.29) & 54.81(+0.53) \\
 & Ours (Qwen3-VL-2B Surrogate) & 2.1B & \textbf{67.04(+7.69)} & 65.71(+7.76) & \textbf{64.82(+7.98)} & \textbf{62.97(+3.92)} & \multicolumn{1}{l|}{46.78(+3.91)} & 64.30(+11.89) & 62.70(+9.63) & 59.63(+8.56) & \textbf{62.97(+10.30)} & 56.68(+2.40) \\
 & Ours (Ovis2.5-2B Surrogate) & 2.6B & 65.93(+6.58) & 66.00(+8.05) & 61.79(+4.95) & 61.79(+2.74) & \multicolumn{1}{l|}{46.64(+3.77)} & \textbf{65.78(+13.37)} & \textbf{65.11(+12.04)} & \textbf{60.83(+9.76)} & 61.23(+8.56) & 54.95(+0.67) \\
 & Ours (In-Family Surrogate) &  & \textbf{67.04(+7.69)} & 65.85(+7.90) & 62.53(+5.69) & 61.79(+2.74) & \multicolumn{1}{l|}{46.93(+4.06)} & 64.30(+11.89) & 62.97(+9.90) & 57.35(+6.28) & 61.23(+8.56) & \textbf{56.82(+2.54)} \\ \cline{2-13} 
\multirow{-18}{*}{K=4} & GT (Human-Annotated Image as Input) &  & 69.18(+9.83) & 68.14(+10.19) & 59.50(+2.66) & 64.67(+5.62) & \multicolumn{1}{l|}{48.26(+5.39)} & 66.58(+14.17) & 63.37(+10.30) & 62.83(+11.76) & 65.24(+12.57) & 57.49(+3.21) \\ \hline
 & CLIP-B & 151M & 61.79(+2.44) & 62.08(+4.13) & 59.79(+2.95) & 58.91(-0.14) & \multicolumn{1}{l|}{43.75(+0.88)} & 64.57(+12.16) & 62.03(+8.96) & 59.76(+8.69) & 60.03(+7.36) & 55.48(+1.20) \\
 & CLIP-L & 428M & 60.61(+1.26) & 60.98(+3.03) & 58.24(+1.40) & 57.95(-1.10) & \multicolumn{1}{l|}{42.65(-0.22)} & 61.76(+9.35) & 62.03(+8.96) & 59.09(+8.02) & 58.42(+5.75) & 52.81(-1.47) \\
 & OpenCLIP & 151M & 63.56(+4.21) & 62.08(+4.13) & 61.27(+4.43) & 59.79(+0.74) & \multicolumn{1}{l|}{45.60(+2.73)} & 62.17(+9.76) & 62.57(+9.50) & 59.22(+8.15) & 59.49(+6.82) & 55.35(+1.07) \\
 & SigLIP 2 Base & 375M & 64.30(+4.95) & 63.05(+5.10) & 61.94(+5.10) & 60.61(+1.56) & \multicolumn{1}{l|}{45.75(+2.88)} & 62.57(+10.16) & 60.96(+7.89) & 59.89(+8.82) & 59.49(+6.82) & 54.95(+0.67) \\
 & SigLIP 2 So400m & 1.1B & 65.19(+5.84) & 63.34(+5.39) & 62.60(+5.76) & 60.09(+1.04) & \multicolumn{1}{l|}{45.08(+2.21)} & 60.56(+8.15) & 61.36(+8.29) & 54.95(+3.88) & 59.22(+6.55) & 55.21(+0.93) \\
 & BGE-VL-large & 428M & 64.67(+5.32) & 63.41(+5.46) & 61.27(+4.43) & 59.94(+0.89) & \multicolumn{1}{l|}{45.31(+2.44)} & 58.16(+5.75) & 61.23(+8.16) & 56.68(+5.61) & 56.82(+4.15) & 51.74(-2.54) \\
 & E5-V & 7.8B & 63.93(+4.58) & 62.38(+4.43) & 60.38(+3.54) & 59.94(+0.89) & \multicolumn{1}{l|}{43.61(+0.74)} & 60.16(+7.75) & 61.63(+8.56) & 60.96(+9.89) & 57.89(+5.22) & 55.35(+1.07) \\
 & GME & 2.2B & 67.04(+7.69) & 66.30(+8.35) & 61.79(+4.95) & 61.94(+2.89) & \multicolumn{1}{l|}{46.78(+3.91)} & 65.78(+13.37) & 62.97(+9.90) & 59.36(+8.29) & 57.22(+4.55) & 54.95(+0.67) \\
 & VLM2Vec & 7.7B & 66.00(+6.65) & 64.45(+6.50) & 61.71(+4.87) & 61.27(+2.22) & \multicolumn{1}{l|}{46.56(+3.69)} & 62.43(+10.02) & 61.50(+8.43) & 58.56(+7.49) & 60.16(+7.49) & 54.01(-0.27) \\
 & VLM2Vec-V2.0 & 2.2B & 66.22(+6.87) & 65.11(+7.16) & 62.45(+5.61) & 61.86(+2.81) & \multicolumn{1}{l|}{45.90(+3.03)} & 61.63(+9.22) & 59.09(+6.02) & 58.69(+7.62) & 55.21(+2.54) & 51.20(-3.08) \\
 & BGE-MLLM-S1 & 7.6B & 64.89(+5.54) & 63.49(+5.54) & 61.20(+4.36) & 60.31(+1.26) & \multicolumn{1}{l|}{44.64(+1.77)} & 62.97(+10.56) & 57.75(+4.68) & 57.75(+6.68) & 59.22(+6.55) & 52.41(-1.87) \\
 & BGE-MLLM-S2 & 7.6B & 65.11(+5.76) & 65.04(+7.09) & 61.49(+4.65) & 61.20(+2.15) & \multicolumn{1}{l|}{45.60(+2.73)} & 59.36(+6.95) & 57.35(+4.28) & 57.35(+6.28) & 59.89(+7.22) & 54.14(-0.14) \\
 & UniME-V2 & 7.1B & 66.30(+6.95) & 65.34(+7.39) & 62.60(+5.76) & 62.82(+3.77) & \multicolumn{1}{l|}{45.31(+2.44)} & 61.76(+9.35) & 63.77(+10.70) & 60.43(+9.36) & 58.02(+5.35) & 56.15(+1.87) \\
 & LamRA & 8B & 66.44(+7.09) & 66.67(+8.72) & 61.71(+4.87) & 63.05(+4.00) & \multicolumn{1}{l|}{46.05(+3.18)} & 64.04(+11.63) & \textbf{64.97(+11.90)} & 59.89(+8.82) & 58.82(+6.15) & 54.95(+0.67) \\
 & Ours (Qwen3-VL-2B Surrogate) & 2.1B & \textbf{67.55(+8.20)} & 66.37(+8.42) & \textbf{63.93(+7.09)} & \textbf{63.41(+4.36)} & \multicolumn{1}{l|}{\textbf{46.86(+3.99)}} & 64.84(+12.43) & 63.10(+10.03) & 60.83(+9.76) & \textbf{62.17(+9.50)} & 55.21(+0.93) \\
 & Ours (Ovis2.5-2B Surrogate) & 2.6B & 65.71(+6.36) & 64.67(+6.72) & 61.71(+4.87) & 62.38(+3.33) & \multicolumn{1}{l|}{46.71(+3.84)} & \textbf{66.18(+13.77)} & 63.50(+10.43) & \textbf{61.50(+10.43)} & 60.70(+8.03) & \textbf{56.28(+2.00)} \\
 & Ours (In-Family Surrogate) &  & \textbf{67.55(+8.20)} & \textbf{66.96(+9.01)} & 62.31(+5.47) & \textbf{63.41(+4.36)} & \multicolumn{1}{l|}{46.49(+3.62)} & 64.84(+12.43) & 63.50(+10.43) & 57.49(+6.42) & \textbf{62.17(+9.50)} & 55.61(+1.33) \\ \cline{2-13} 
\multirow{-18}{*}{K=5} & GT (Human-Annotated Image as Input) & \textbf{} & 69.99(+10.64) & 68.00(+10.05) & 59.42(+2.58) & 64.89(+5.84) & \multicolumn{1}{l|}{47.60(+4.73)} & 66.84(+14.43) & 64.17(+11.10) & 61.10(+10.03) & 64.71(+12.04) & 58.82(+4.54) \\ \hline
\end{tabular}
\end{lrbox}

\resizebox{\linewidth}{!}{\usebox{\myentiretablebox}}
\caption{Full version of Table~\ref{tab:exp-main-results}.Main results on MRAG-Bench and Visual-RAG under different visual evidence selection methods and Top-$K$ settings. Numbers in parentheses denote absolute performance differences relative to the zero-shot baseline for the same model. \textbf{Bold} values indicate the best-performing retrieval-based method in each setting (excluding the ground-truth (GT) oracle).}
\label{tab:appendix_main_results}
\end{table*}

\begin{table*}[]
\begin{lrbox}{\myentiretablebox}
\begin{tabular}{@{}llllllll@{}}
\toprule
\multirow{2}{*}{Dataset} & \multirow{2}{*}{Method} & \multicolumn{1}{l|}{\multirow{2}{*}{Model}} & \multicolumn{5}{c}{Top K} \\ \cmidrule(l){4-8} 
 &  & \multicolumn{1}{l|}{} & 1 & 2 & 3 & 4 & 5 \\ \midrule
\multirow{10}{*}{MRAG-Bench} & \multicolumn{1}{l|}{\multirow{5}{*}{Ours}} & Qwen3-VL-8B & 65.71 & 65.34 & 67.41 & 66.74 & 67.11 \\
 & \multicolumn{1}{l|}{} & MiniCPM-V4.5 & 65.11 & 66.52 & 66.89 & 66.08 & 65.71 \\
 & \multicolumn{1}{l|}{} & Gemma3-12B & 59.87 & 62.6 & 61.94 & 62.68 & 63.12 \\
 & \multicolumn{1}{l|}{} & Ovis2.5-9B & 61.64 & 62.53 & 62.53 & 62.45 & 63.19 \\
 & \multicolumn{1}{l|}{} & InternVL3.5-8B & 48.41 & 48.04 & 47.75 & 46.49 & 46.56 \\ \cmidrule(l){2-8} 
 & \multicolumn{1}{l|}{\multirow{5}{*}{A (Choices Softmax Entropy)}} & Qwen3-VL-8B & 63.27(-2.44) & 64.08(-1.26) & 64.67(-2.74) & 65.11(-1.63) & 65.04(-2.07) \\
 & \multicolumn{1}{l|}{} & MiniCPM-V4.5 & 62.90(-2.21) & 63.71(-2.81) & 64.15(-2.74) & 64.45(-1.63) & 64.15(-1.56) \\
 & \multicolumn{1}{l|}{} & Gemma3-12B & 59.42(-0.45) & 59.72(-2.88) & 61.57(-0.37) & 61.20(-1.48) & 61.64(-1.48) \\
 & \multicolumn{1}{l|}{} & Ovis2.5-9B & 60.38(-1.26) & 60.68(-1.85) & 61.20(-1.33) & 61.57(-0.88) & 61.27(-1.92) \\
 & \multicolumn{1}{l|}{} & InternVL3.5-8B & 46.42(-1.99) & 46.64(-1.40) & 47.08(-0.67) & 46.42(-0.07) & 45.53(-1.03) \\ \midrule
\multirow{15}{*}{Visual-RAG} & \multicolumn{1}{l|}{\multirow{5}{*}{Ours}} & Qwen3-VL-8B & 62.43 & 64.84 & 63.37 & 64.44 & 64.57 \\
 & \multicolumn{1}{l|}{} & MiniCPM-V4.5 & 59.63 & 58.16 & 61.23 & 61.23 & 62.03 \\
 & \multicolumn{1}{l|}{} & Gemma3-12B & 56.55 & 59.76 & 60.43 & 58.69 & 60.03 \\
 & \multicolumn{1}{l|}{} & Ovis2.5-9B & 70.05 & 59.63 & 60.96 & 62.03 & 61.23 \\
 & \multicolumn{1}{l|}{} & InternVL3.5-8B & 61.23 & 56.95 & 58.16 & 57.09 & 57.22 \\ \cmidrule(l){2-8} 
 & \multicolumn{1}{l|}{\multirow{5}{*}{A (Avg. Token Prob)}} & Qwen3-VL-8B & 57.49(-4.94) & 58.82(-6.02) & 59.76(-3.61) & 62.70(-1.74) & 62.97(-1.60) \\
 & \multicolumn{1}{l|}{} & MiniCPM-V4.5 & 61.50(+1.87) & 61.63(+3.47) & 61.10(-0.13) & 62.17(+0.94) & 62.57(+0.54) \\
 & \multicolumn{1}{l|}{} & Gemma3-12B & 58.29(+1.74) & 55.48(-4.28) & 55.61(-4.82) & 56.42(-2.27) & 57.09(-2.94) \\
 & \multicolumn{1}{l|}{} & Ovis2.5-9B & 54.81(-15.24) & 53.48(-6.15) & 54.41(-6.55) & 55.48(-6.55) & 56.82(-4.41) \\
 & \multicolumn{1}{l|}{} & InternVL3.5-8B & 52.94(-8.29) & 53.88(-3.07) & 54.68(-3.48) & 57.22(+0.13) & 55.88(-1.34) \\ \cmidrule(l){2-8} 
 & \multicolumn{1}{l|}{\multirow{5}{*}{A (MC sampling + NLI)}} & Qwen3-VL-8B & 57.89(-4.54) & 61.50(-3.34) & 60.29(-3.08) & 62.43(-2.01) & 63.90(-0.67) \\
 & \multicolumn{1}{l|}{} & MiniCPM-V4.5 & 61.10(+1.47) & 63.10(+4.94) & 63.24(+2.01) & 63.37(+2.14) & 64.71(+2.68) \\
 & \multicolumn{1}{l|}{} & Gemma3-12B & 55.88(-0.67) & 53.88(-5.88) & 57.75(-2.68) & 58.02(-0.67) & 59.76(-0.27) \\
 & \multicolumn{1}{l|}{} & Ovis2.5-9B & 54.95(-15.10) & 56.42(-3.21) & 55.08(-5.88) & 55.08(-6.95) & 57.62(-3.61) \\
 & \multicolumn{1}{l|}{} & InternVL3.5-8B & 53.07(-8.16) & 57.49(+0.54) & 57.22(-0.94) & 56.82(-0.27) & 57.49(+0.27) \\ \bottomrule
\end{tabular}
\end{lrbox}

\resizebox{\linewidth}{!}{\usebox{\myentiretablebox}}

\caption{We compare our latent-variable-based utility estimation with answer-level uncertainty quantification methods on MRAG-Bench and Visual-RAG. Answer uncertainty is measured by choice softmax entropy on MRAG-Bench, and by average token probability or MC sampling with NLI-based consistency on Visual-RAG. Parentheses denote performance differences relative to the zero-shot baseline.}

\label{tab:appendix_Y_vs_A}

\end{table*}

\begin{table*}[]
\begin{lrbox}{\myentiretablebox}
\begin{tabular}{@{}l|l|lllll|lllll@{}}
\toprule
 & \multicolumn{1}{c|}{\textbf{}} & \multicolumn{5}{c|}{\textbf{MRAG-Bench}} & \multicolumn{5}{c}{\textbf{Visual-RAG}} \\ \midrule
\multirow{2}{*}{Model} & \multirow{2}{*}{Image Selection Method} & \multicolumn{5}{c|}{Top K} & \multicolumn{5}{c}{Top K} \\ \cmidrule(l){3-12} 
 &  & \multicolumn{1}{c}{1} & \multicolumn{1}{c}{2} & \multicolumn{1}{c}{3} & \multicolumn{1}{c}{4} & \multicolumn{1}{c|}{5} & \multicolumn{1}{c}{1} & \multicolumn{1}{c}{2} & \multicolumn{1}{c}{3} & \multicolumn{1}{c}{4} & \multicolumn{1}{c}{5} \\ \midrule
\multirow{6}{*}{Qwen3-VL-8B} & Qwen3-VL-2B & 65.56(-0.15) & 65.71(+0.37) & 67.55(+0.14) & 67.04(+0.30) & 67.55(+0.44) & 59.89(-2.54) & 62.57(-2.27) & 63.77(+0.40) & 64.30(-0.14) & 64.84(+0.27) \\
 & Ovis2.5-2B & 64.97(-0.74) & 65.71(+0.37) & 66.08(-1.33) & 65.93(-0.81) & 65.71(-1.40) & 61.36(-1.07) & 62.57(-2.27) & 64.44(+1.07) & 65.78(+1.34) & 66.18(+1.61) \\
 & MiniCPM-V4.5-AWQ & 64.23(-1.48) & 66.89(+1.55) & 66.96(-0.45) & 66.67(-0.07) & 66.74(-0.37) & 60.16(-2.27) & 64.30(-0.54) & 63.64(+0.27) & 64.17(-0.27) & 66.31(+1.74) \\
 & Gemma3-4B & 64.15(-1.56) & 64.89(-0.45) & 65.78(-1.63) & 66.08(-0.66) & 66.15(-0.96) & 59.89(-2.54) & 61.76(-3.08) & 62.30(-1.07) & 63.37(-1.07) & 63.24(-1.33) \\
 & InternVL3.5-2B & 63.64(-2.07) & 64.23(-1.11) & 65.48(-1.93) & 66.59(-0.15) & 66.81(-0.30) & 57.62(-4.81) & 62.57(-2.27) & 62.17(-1.20) & 64.04(-0.40) & 62.83(-1.74) \\
 & Qwen3-VL-8B & 65.71 & 65.34 & 67.41 & 66.74 & 67.11 & 62.43 & 64.84 & 63.37 & 64.44 & 64.57 \\ \midrule
\multirow{6}{*}{MiniCPM-V4.5} & Qwen3-VL-2B & 65.41(+0.30) & 66.74(+0.22) & 65.85(-1.04) & 65.71(-0.37) & 66.37(+0.66) & 60.16(+0.53) & 61.50(+3.34) & 59.89(-1.34) & 62.70(+1.47) & 63.10(+1.07) \\
 & Ovis2.5-2B & 64.08(-1.03) & 65.34(-1.18) & 65.85(-1.04) & 66.00(-0.08) & 64.67(-1.04) & 61.23(+1.60) & 61.36(+3.20) & 60.43(-0.80) & 65.11(+3.88) & 63.50(+1.47) \\
 & MiniCPM-V4.5-AWQ & 64.60(-0.51) & 65.56(-0.96) & 65.63(-1.26) & 65.85(-0.23) & 66.96(+1.25) & 60.70(+1.07) & 60.03(+1.87) & 61.50(+0.27) & 62.97(+1.74) & 63.50(+1.47) \\
 & Gemma3-4B & 62.75(-2.36) & 64.38(-2.14) & 64.45(-2.44) & 65.34(-0.74) & 65.34(-0.37) & 61.50(+1.87) & 62.57(+4.41) & 63.37(+2.14) & 62.17(+0.94) & 63.10(+1.07) \\
 & InternVL3.5-2B & 62.60(-2.51) & 63.27(-3.25) & 64.15(-2.74) & 64.97(-1.11) & 64.45(-1.26) & 58.56(-1.07) & 61.23(+3.07) & 63.37(+2.14) & 63.77(+2.54) & 65.24(+3.21) \\
 & MiniCPM-V4.5 & 65.11 & 66.52 & 66.89 & 66.08 & 65.71 & 59.63 & 58.16 & 61.23 & 61.23 & 62.03 \\ \midrule
\multirow{6}{*}{Gemma3-12B} & Qwen3-VL-2B & 60.83(+0.96) & 64.15(+1.55) & 64.52(+2.58) & 64.82(+2.14) & 63.93(+0.81) & 59.22(+2.67) & 60.16(+0.40) & 60.43(+0.00) & 59.63(+0.94) & 60.83(+0.80) \\
 & Ovis2.5-2B & 60.53(+0.66) & 61.57(-1.03) & 62.23(+0.29) & 61.79(-0.89) & 61.71(-1.41) & 57.75(+1.20) & 60.03(+0.27) & 57.75(-2.68) & 60.83(+2.14) & 61.50(+1.47) \\
 & MiniCPM-V4.5-AWQ & 60.38(+0.51) & 63.34(+0.74) & 63.41(+1.47) & 63.56(+0.88) & 64.75(+1.63) & 58.29(+1.74) & 57.62(-2.14) & 58.42(-2.01) & 61.23(+2.54) & 60.29(+0.26) \\
 & Gemma3-4B & 59.72(-0.15) & 61.71(-0.89) & 62.82(+0.88) & 62.53(-0.15) & 62.31(-0.81) & 59.63(+3.08) & 60.43(+0.67) & 57.89(-2.54) & 57.35(-1.34) & 57.49(-2.54) \\
 & InternVL3.5-2B & 59.87(+0.00) & 62.16(-0.44) & 63.12(+1.18) & 62.97(+0.29) & 62.82(-0.30) & 55.88(-0.67) & 58.29(-1.47) & 60.16(-0.27) & 59.89(+1.20) & 60.29(+0.26) \\
 & Gemma3-12B & 59.87 & 62.6 & 61.94 & 62.68 & 63.12 & 56.55 & 59.76 & 60.43 & 58.69 & 60.03 \\ \midrule
\multirow{6}{*}{Ovis2.5-9B} & Qwen3-VL-2B & 61.42(-0.22) & 62.68(+0.15) & 62.31(-0.22) & 62.97(+0.52) & 63.41(+0.22) & 68.85(-1.20) & 60.96(+1.33) & 63.24(+2.28) & 62.97(+0.94) & 62.17(+0.94) \\
 & Ovis2.5-2B & 60.16(-1.48) & 61.20(-1.33) & 62.01(-0.52) & 61.79(-0.66) & 62.38(-0.81) & 69.12(-0.93) & 61.76(+2.13) & 60.70(-0.26) & 61.23(-0.80) & 60.70(-0.53) \\
 & MiniCPM-V4.5-AWQ & 61.71(+0.07) & 62.53(+0.00) & 62.31(-0.22) & 62.68(+0.23) & 62.38(-0.81) & 67.25(-2.80) & 60.16(+0.53) & 60.03(-0.93) & 61.23(-0.80) & 60.29(-0.94) \\
 & Gemma3-4B & 61.71(+0.07) & 61.49(-1.04) & 61.57(-0.96) & 63.34(+0.89) & 62.90(-0.29) & 68.05(-2.00) & 59.89(+0.26) & 58.42(-2.54) & 60.29(-1.74) & 59.89(-1.34) \\
 & InternVL3.5-2B & 60.24(-1.40) & 61.49(-1.04) & 61.64(-0.89) & 62.68(+0.23) & 62.60(-0.59) & 66.98(-3.07) & 60.29(+0.66) & 62.30(+1.34) & 60.70(-1.33) & 62.30(+1.07) \\
 & Ovis2.5-9B & 61.64 & 62.53 & 62.53 & 62.45 & 63.19 & 70.05 & 59.63 & 60.96 & 62.03 & 61.23 \\ \midrule
\multirow{6}{*}{InternVL3.5-8B} & Qwen3-VL-2B & 47.89(-0.52) & 48.26(+0.22) & 47.08(-0.67) & 46.78(+0.29) & 46.86(+0.30) & 64.17(+2.94) & 54.95(-2.00) & 55.75(-2.41) & 56.68(-0.41) & 55.21(-2.01) \\
 & Ovis2.5-2B & 47.23(-1.18) & 47.52(-0.52) & 47.15(-0.60) & 46.64(+0.15) & 46.71(+0.15) & 62.30(+1.07) & 54.95(-2.00) & 53.88(-4.28) & 54.95(-2.14) & 56.28(-0.94) \\
 & MiniCPM-V4.5-AWQ & 48.41(+0.00) & 47.82(-0.22) & 47.38(-0.37) & 47.67(+1.18) & 46.42(-0.14) & 62.43(+1.20) & 56.02(-0.93) & 54.41(-3.75) & 56.28(-0.81) & 56.02(-1.20) \\
 & Gemma3-4B & 45.23(-3.18) & 46.78(-1.26) & 46.78(-0.97) & 47.38(+0.89) & 47.52(+0.96) & 61.90(+0.67) & 55.75(-1.20) & 52.14(-6.02) & 54.95(-2.14) & 54.81(-2.41) \\
 & InternVL3.5-2B & 47.08(-1.33) & 46.56(-1.48) & 47.15(-0.60) & 46.93(+0.44) & 46.49(-0.07) & 62.17(+0.94) & 55.61(-1.34) & 55.08(-3.08) & 56.82(-0.27) & 55.61(-1.61) \\
 & InternVL3.5-8B & 48.41 & 48.04 & 47.75 & 46.49 & 46.56 & 61.23 & 56.95 & 58.16 & 57.09 & 57.22 \\ \bottomrule
\end{tabular}
\end{lrbox}

\resizebox{\linewidth}{!}{\usebox{\myentiretablebox}}

\caption{We compare surrogate models with fewer parameters to their corresponding large main models under the same latent-variable-based utility estimation framework. Results are reported on MRAG-Bench and Visual-RAG across different Top-K settings, using identical evidence selection procedures.}

\label{tab:appendix_surrogate_vs_main_model}
\end{table*}

\begin{table*}[]
\begin{lrbox}{\myentiretablebox}
\begin{tabular}{@{}l|l|lllll|lllll@{}}
\toprule
\multirow{3}{*}{Model} & \multirow{3}{*}{Method} & \multicolumn{5}{c|}{MRAG-BENCH} & \multicolumn{5}{c}{Visual-RAG} \\ \cmidrule(l){3-12} 
 &  & \multicolumn{5}{c|}{Top K} & \multicolumn{5}{c}{Top K} \\ \cmidrule(l){3-12} 
 &  & 1 & 2 & 3 & 4 & 5 & 1 & 2 & 3 & 4 & 5 \\ \midrule
\multirow{3}{*}{Qwen3-VL-8B} & Ours & 65.71 & 65.34 & 67.41 & 66.74 & 67.11 & 62.43 & 64.84 & 63.37 & 64.44 & 64.57 \\
 & Verbalized UQ & 62.82(-2.89) & 63.71(-1.63) & 64.67(-2.74) & 65.34(-1.40) & 65.56(-1.55) & 59.36(-3.07) & 61.76(-3.08) & 61.10(-2.27) & 62.97(-1.47) & 65.64(+1.07) \\
 & Listwise Ranking & 58.54(-7.17) & 61.35(-3.99) & 63.27(-4.14) & 64.01(-2.73) & 65.11(-2.00) & 63.24(+0.81) & 63.37(-1.47) & 61.90(-1.47) & 66.44(+2.00) & 66.71(+2.14) \\ \midrule
\multirow{3}{*}{MiniCPM-V4.5} & Ours & 65.11 & 66.52 & 66.89 & 66.08 & 65.71 & 59.63 & 58.16 & 61.23 & 61.23 & 62.03 \\
 & Verbalized UQ & 58.46(-6.65) & 59.72(-6.80) & 61.27(-5.62) & 61.71(-4.37) & 62.82(-2.89) & 58.82(-0.81) & 59.49(+1.33) & 61.90(+0.67) & 62.83(+1.60) & 61.76(-0.27) \\
 & Listwise Ranking & 59.35(-5.76) & 60.46(-6.06) & 61.86(-5.03) & 62.68(-3.40) & 63.86(-1.85) & 62.43(+2.80) & 62.83(+4.67) & 62.30(+1.07) & 62.03(+0.80) & 61.63(-0.40) \\ \midrule
\multirow{3}{*}{Gemma3-12B} & Ours & 59.87 & 62.6 & 61.94 & 62.68 & 63.12 & 56.55 & 59.76 & 60.43 & 58.69 & 60.03 \\
 & Verbalized UQ & 55.65(-4.22) & 57.72(-4.88) & 58.91(-3.03) & 60.01(-2.67) & 60.46(-2.66) & 56.68(+0.13) & 54.95(-4.81) & 57.22(-3.21) & 57.09(-1.60) & 58.82(-1.21) \\
 & Listwise Ranking & 56.91(-2.96) & 58.91(-3.69) & 59.65(-2.29) & 59.72(-2.96) & 59.72(-3.40) & 59.09(+2.54) & 56.95(-2.81) & 58.82(-1.61) & 60.16(+1.47) & 60.56(+0.53) \\ \midrule
\multirow{3}{*}{Ovis2.5-9B} & Ours & 61.64 & 62.53 & 62.53 & 62.45 & 63.19 & 70.05 & 59.63 & 60.96 & 62.03 & 61.23 \\
 & Verbalized UQ & 57.95(-3.69) & 58.83(-3.70) & 58.61(-3.92) & 60.90(-1.55) & 60.09(-3.10) & 60.03(-10.02) & 58.96(-0.67) & 58.82(-2.14) & 58.16(-3.87) & 59.09(-2.14) \\
 & Listwise Ranking & 58.17(-3.47) & 58.46(-4.07) & 58.91(-3.62) & 59.94(-2.51) & 60.53(-2.66) & 58.56(-11.49) & 60.70(+1.07) & 59.49(-1.47) & 61.10(-0.93) & 60.70(-0.53) \\ \midrule
\multirow{3}{*}{InternVL3.5-8B} & Ours & 48.41 & 48.04 & 47.75 & 46.49 & 46.56 & 61.23 & 56.95 & 58.16 & 57.09 & 57.22 \\
 & Verbalized UQ & 43.53(-4.88) & 43.68(-4.36) & 44.05(-3.70) & 43.75(-2.74) & 43.83(-2.73) & 48.93(-12.30) & 53.07(-3.88) & 52.01(-6.15) & 54.81(-2.28) & 55.35(-1.87) \\
 & Listwise Ranking & 43.39(-5.02) & 45.08(-2.96) & 45.90(-1.85) & 45.16(-1.33) & 44.64(-1.92) & 54.55(-6.68) & 55.88(-1.07) & 52.94(-5.22) & 52.41(-4.68) & 52.01(-5.21) \\ \bottomrule
\end{tabular}
\end{lrbox}

\resizebox{\linewidth}{!}{\usebox{\myentiretablebox}}

\caption{We compare different utility estimation and evidence ranking strategies under the same experimental setting, including our logit-based discriminative utility estimation, verbalized uncertainty quantification, and listwise ranking that jointly considers all candidate evidence. Results are reported on MRAG-Bench and Visual-RAG across different Top-K settings.}

\label{tab:appendix_ablation}

\end{table*}

\section{Additional Experimental Results}
\label{appendix_sec:addtional_results}
This appendix provides the complete experimental results referenced in the main paper. Specifically, we report the full Top-$K$ ($K=1$ to $5$) performance tables for all evaluated models, benchmarks, and evidence selection methods.

These tables (\cref{tab:appendix_main_results,tab:appendix_Y_vs_A,tab:appendix_surrogate_vs_main_model,tab:appendix_ablation}) serve as a comprehensive extension of \cref{tab:exp-main-results,tab:Y_vs_A,tab:surrogate_vs_main_model,tab:ablation} in the main paper, which present representative or summarized results for clarity. No additional experimental settings are introduced in this appendix, and all results follow the same evaluation protocols described in Section~\ref{sec:experiments}.

\subsection{Prompt Sensitivity Results}
\label{app:subsec:prompt_sensitivity_results}

Table~\ref{tab:prompt_sensitivity} presents the prompt sensitivity experiment results. Across settings, performance remains close to that of the original template, with no consistent degradation pattern. These results suggest that the probe is robust to moderate lexical, syntactic, and output-token variations.

\begin{table}[]
\resizebox{0.95\linewidth}{!}{
\begin{tabular}{l|l|l|l|lll}
\hline
Dataset & Main Model & Surrogate Model & Prompt Variant & Top-1 & Top-3 & Top-5 \\ \hline
\multirow{8}{*}{MRAG-Bench} & \multirow{4}{*}{Qwen3-VL-8B} & \multirow{4}{*}{Qwen3-VL-2B} & Original & 65.56 & 67.55 & 67.55 \\
 &  &  & V1 & 64.75 & 66.89 & 67.63 \\
 &  &  & V2 & 65.34 & 66.81 & 66.81 \\
 &  &  & V3 & 64.75 & 66.81 & 66.89 \\ \cline{2-7} 
 & \multirow{4}{*}{Ovis2.5-9B} & \multirow{4}{*}{Ovis2.5-2B} & Original & 60.16 & 62.01 & 62.38 \\
 &  &  & V1 & 60.61 & 61.79 & 62.53 \\
 &  &  & V2 & 60.09 & 62.68 & 62.97 \\
 &  &  & V3 & 60.01 & 61.79 & 61.86 \\ \hline
\multirow{8}{*}{Visual-RAG} & \multirow{4}{*}{Qwen3-VL-8B} & \multirow{4}{*}{Qwen3-VL-2B} & Original & 59.89 & 63.77 & 64.84 \\
 &  &  & V1 & 60.83 & 62.57 & 64.84 \\
 &  &  & V2 & 60.03 & 63.64 & 63.90 \\
 &  &  & V3 & 58.56 & 61.10 & 63.64 \\ \cline{2-7} 
 & \multirow{4}{*}{Ovis2.5-9B} & \multirow{4}{*}{Ovis2.5-2B} & Original & 69.12 & 60.70 & 60.70 \\
 &  &  & V1 & 62.43 & 58.69 & 59.36 \\
 &  &  & V2 & 60.29 & 58.56 & 58.02 \\
 &  &  & V3 & 58.82 & 57.75 & 57.35 \\ \hline
\end{tabular}}
\caption{Prompt Sensitivity Analysis}
\label{tab:prompt_sensitivity}
\end{table}

\subsection{Robustness under Noisy Candidate Pools}
\label{app:noisy_candidate_pool}

To assess the practical robustness of Assumptions~\ref{assumption:helpfulevidence_main} and~\ref{assumption_informal:helpfulnessscore}, we construct three progressively perturbed candidate-pool regimes:
\begin{itemize}
    \item \textbf{Pure Retrieve:} the candidate pool is formed solely from retrieved images, reflecting realistic end-to-end deployment.
    \item \textbf{GT + Hard Negatives:} the pool includes ground-truth images together with top-ranked but non-relevant retrieved images, yielding structured ambiguity.
    \item \textbf{GT + Hard Negatives + Stochastic Perturbation:} additional randomly sampled non-relevant images are introduced on top of the previous regime, creating both structured and unstructured noise.
\end{itemize}

For each regime, we vary the pool size from 10 to 20. Across both backbones and both benchmarks, performance remains stable under increasing perturbation, supporting the practical robustness of the proposed ranking criterion.

\begin{table*}[]
\resizebox{0.95\linewidth}{!}{
\begin{tabular}{llll|lll|lll}
\hline
\multicolumn{4}{l|}{Benchmark} & \multicolumn{3}{c|}{MRAG-Bench} & \multicolumn{3}{c}{Visual-RAG} \\ \hline
\multicolumn{1}{l|}{Main Model} & \multicolumn{1}{l|}{Surrogate Model} & \multicolumn{1}{l|}{Candidate Pool Variant} & Candidate Pool Size & Top-1 & Top-3 & Top-5 & Top-1 & Top-3 & Top-5 \\ \hline
\multicolumn{1}{l|}{\multirow{9}{*}{Qwen3-VL-8B}} & \multicolumn{1}{l|}{\multirow{9}{*}{Qwen3-VL-2B}} & \multicolumn{1}{l|}{\multirow{3}{*}{Pure Retrieve}} & 10 & 58.76 & 51.79 & 55.36 & 56.42 & 58.56 & 60.29 \\
\multicolumn{1}{l|}{} & \multicolumn{1}{l|}{} & \multicolumn{1}{l|}{} & 15 & 58.68 & 50.00 & 43.75 & 57.62 & 58.69 & 59.76 \\
\multicolumn{1}{l|}{} & \multicolumn{1}{l|}{} & \multicolumn{1}{l|}{} & 20 & 58.61 & 50.00 & 43.75 & 57.75 & 59.89 & 60.43 \\ \cline{3-10} 
\multicolumn{1}{l|}{} & \multicolumn{1}{l|}{} & \multicolumn{1}{l|}{\multirow{3}{*}{GT + Hard Negatives}} & 10 & 60.90 & 61.64 & 68.75 & 60.56 & 63.24 & 65.91 \\
\multicolumn{1}{l|}{} & \multicolumn{1}{l|}{} & \multicolumn{1}{l|}{} & 15 & 60.53 & 62.94 & 64.29 & 60.03 & 62.70 & 64.04 \\
\multicolumn{1}{l|}{} & \multicolumn{1}{l|}{} & \multicolumn{1}{l|}{} & 20 & 60.31 & 63.79 & 66.67 & 60.03 & 64.04 & 63.10 \\ \cline{3-10} 
\multicolumn{1}{l|}{} & \multicolumn{1}{l|}{} & \multicolumn{1}{l|}{\multirow{3}{*}{GT + Hard Negatives + Stochastic Perturbation}} & 10 & 60.90 & 61.57 & 68.75 & 60.16 & 63.77 & 65.51 \\
\multicolumn{1}{l|}{} & \multicolumn{1}{l|}{} & \multicolumn{1}{l|}{} & 15 & 60.53 & 61.05 & 66.96 & 59.89 & 63.77 & 64.97 \\
\multicolumn{1}{l|}{} & \multicolumn{1}{l|}{} & \multicolumn{1}{l|}{} & 20 & 60.46 & 61.12 & 66.96 & 60.16 & 63.90 & 64.71 \\ \hline
\multicolumn{1}{l|}{\multirow{9}{*}{Ovis2.5-9B}} & \multicolumn{1}{l|}{\multirow{9}{*}{Ovis2.5-2B}} & \multicolumn{1}{l|}{\multirow{3}{*}{Pure Retrieve}} & 10 & 56.98 & 56.61 & 55.95 & 57.62 & 56.15 & 56.55 \\
\multicolumn{1}{l|}{} & \multicolumn{1}{l|}{} & \multicolumn{1}{l|}{} & 15 & 56.47 & 56.54 & 56.10 & 58.42 & 56.15 & 57.49 \\
\multicolumn{1}{l|}{} & \multicolumn{1}{l|}{} & \multicolumn{1}{l|}{} & 20 & 56.47 & 56.54 & 56.10 & 57.22 & 56.15 & 59.49 \\ \cline{3-10} 
\multicolumn{1}{l|}{} & \multicolumn{1}{l|}{} & \multicolumn{1}{l|}{\multirow{3}{*}{GT + Hard Negatives}} & 10 & 58.91 & 58.61 & 59.50 & 61.23 & 59.09 & 59.76 \\
\multicolumn{1}{l|}{} & \multicolumn{1}{l|}{} & \multicolumn{1}{l|}{} & 15 & 58.91 & 58.68 & 59.05 & 60.29 & 58.69 & 58.42 \\
\multicolumn{1}{l|}{} & \multicolumn{1}{l|}{} & \multicolumn{1}{l|}{} & 20 & 58.31 & 58.31 & 58.68 & 59.76 & 57.09 & 60.70 \\ \cline{3-10} 
\multicolumn{1}{l|}{} & \multicolumn{1}{l|}{} & \multicolumn{1}{l|}{\multirow{3}{*}{GT + Hard Negatives + Stochastic Perturbation}} & 10 & 59.05 & 58.83 & 59.72 & 60.96 & 58.82 & 59.09 \\
\multicolumn{1}{l|}{} & \multicolumn{1}{l|}{} & \multicolumn{1}{l|}{} & 15 & 58.83 & 58.76 & 60.09 & 60.56 & 58.29 & 57.75 \\
\multicolumn{1}{l|}{} & \multicolumn{1}{l|}{} & \multicolumn{1}{l|}{} & 20 & 58.83 & 58.91 & 60.09 & 60.16 & 58.56 & 57.75 \\ \hline
\end{tabular}}
\caption{Empirical validation of Assumption~\ref{assumption:helpfulevidence_main} and \ref{assumption_informal:helpfulnessscore}}
\label{tab:emp-val-assumptions}
\end{table*}

\subsection{Empirical Validation of Theorem~\ref{thm:equivalent1}}
\label{app:subsec:emp-val-thm1}

\begin{table}[]
\begin{tabular}{l|ll|ll}
\hline
 & \multicolumn{2}{c|}{GT} & \multicolumn{2}{c}{non-GT} \\ \hline
Variable & Mean & Median & Mean & Median \\ \hline
Information Gain (KL divergence) & 0.3816 & 0.0723 & 0.3236 & 0.0790 \\ \hline
Normalized logit ("True") & 0.5868 & 0.6792 & 0.3385 & 0.2227 \\ \hline
\end{tabular}
\caption{Empirical validation of Theorem~\ref{thm:equivalent1}}
\label{emp-val-thm-equivalent1}
\end{table}

We further examine whether latent helpfulness scores exhibit monotonic alignment with answer-space belief shifts, as suggested by the theoretical formulation.

Directly computing answer-space information gain over open-ended outputs is intractable (Section~\ref{sec:info_gain}). We therefore construct a restricted empirical proxy in a controlled multiple-choice setting on MRAG-Bench. Specifically, we approximate $IG(Z; C=c)$ using the surrogate model's helpfulness logit (Ovis2.5-2B), i.e., the logit of the \texttt{True} decision in the auxiliary binary task. For the answer-space term, we approximate the model output distribution $P_Y(\cdot)$ using normalized choice-token probabilities, and estimate $IG(Y; C=c)$ as the KL divergence between (i) the choice-token distribution conditioned on candidate image $c$ and (ii) the corresponding zero-shot distribution without external evidence.

This restricted setup is not intended to optimize the full answer-space objective directly; rather, it provides a tractable proxy for testing whether higher latent utility scores correspond to larger shifts in the target model's answer distribution.

Table~\ref{emp-val-thm-equivalent1} shows a consistent positive alignment between latent helpfulness signals and answer-space belief shifts. Candidates assigned higher surrogate helpfulness confidence tend to induce larger KL divergence in the target model's answer distribution. In particular, ground-truth (GT) images, which typically receive higher surrogate confidence, also exhibit larger mean KL divergence than non-GT candidates. More broadly, the ordering induced by surrogate helpfulness scores is positively aligned with the ordering of estimated answer-space information gain. This monotonic empirical pattern is consistent with the implication of Theorem~\ref{thm:equivalent1}, and provides supporting evidence for the practical validity of the surrogate approximation.

\subsection{Fine-Grained False Positive Analysis on MRAG-Bench}
\label{app:subsec:finegrained-false-pos-analysis}

\begin{table}[]
\resizebox{0.95\linewidth}{!}{
\begin{tabular}{l|l|l|l|l|l}
\hline
Main Model & Surrogate Model & Scenario & \#Candidate Images & \#False Positive Candidates & Percentage \\ \hline
\multirow{9}{*}{Qwen3-VL-8B} & \multirow{9}{*}{Qwen3-VL-2B} & Angle & 2763 & 69 & 2.5 \\
 &  & Biological & 957 & 6 & 0.63 \\
 &  & Deformation & 966 & 33 & 3.42 \\
 &  & Incomplete & 546 & 37 & 6.78 \\
 &  & Obstruction & 885 & 13 & 1.47 \\
 &  & Others & 1007 & 21 & 2.09 \\
 &  & Partial & 2196 & 13 & 0.59 \\
 &  & Scope & 857 & 10 & 1.17 \\
 &  & Temporal & 1346 & 57 & 4.23 \\ \hline
\multirow{9}{*}{Ovis2.5-9B} & \multirow{9}{*}{Ovis2.5-2B} & Angle & 2763 & 45 & 1.63 \\
 &  & Biological & 957 & 15 & 1.57 \\
 &  & Deformation & 966 & 48 & 4.97 \\
 &  & Incomplete & 546 & 26 & 4.76 \\
 &  & Obstruction & 885 & 11 & 1.24 \\
 &  & Others & 1007 & 21 & 2.09 \\
 &  & Partial & 2196 & 20 & 0.91 \\
 &  & Scope & 857 & 8 & 0.93 \\
 &  & Temporal & 1346 & 33 & 2.45 \\ \hline
\end{tabular}}
\caption{Per-catagory False Positive Analysis of Surrogate Model}
\label{tab:per-catagory-false-pos-analysis}
\end{table}

Table~\ref{tab:per-catagory-false-pos-analysis} reports a fine-grained false positive analysis across scenario categories in MRAG-Bench. We observe relatively higher FP rates in \emph{Incomplete}, \emph{Deformation}, and \emph{Temporal} categories. These scenarios typically require transformation-invariant and structure-aware visual reasoning, such as recognizing objects under temporal change, geometric deformation, or partial occlusion.

Such cases depend less on surface-level similarity and more on modeling object continuity and structural consistency. The elevated disagreement in these categories suggests that smaller surrogate models and larger main models may differ in how they encode object structure and physical transformations. Larger models may develop stronger invariance to deformation and temporal variation, which can lead to divergent ranking judgments in these settings.

Importantly, even in these more challenging categories, the absolute FP rates remain modest. Overall, the analysis suggests that: (i) ranking disagreement between surrogate and main models is limited in magnitude; (ii) when disagreements occur, they are concentrated in structurally complex scenarios involving transformation or object continuity; and (iii) these findings reveal interpretable representational differences across model scales while still supporting the overall alignment between surrogate and main models.

\section{Extended Case Studies}
\label{app:case}
Figure~\ref{fig:case_study}(a) presents an example from MRAG-Bench that compares relevance-based methods (CLIP-Large and GME) with our approach ( using Qwen3-VL-2B as the surrogate model), for identifying the exact model of a car. In this case, the query image depicts a McLaren 675LT. However, relevance-based methods incorrectly select an image of a McLaren 650S, likely because the retrieved candidate shares the same red color with the query image, a salient but irrelevant attribute. Although visually similar, the retrieved image does not provide the discriminative visual cues required to answer the question correctly. 

Figure~\ref{fig:case_study}(b) presents an example from Visual-RAG that contrasts answer-level UQ with our discriminative utility estimation. The task is to identify the inside color of a black woodpecker’s mouth. For average token probability-based UQ, the model assigns high confidence ($0.98$) to an image that does not reveal the interior of the mouth, leading to an incorrect selection. Similarly, MC sampling-based UQ favors an uninformative side-view image, where identical sampled answers (“yellow”) yield a misleadingly high consistency score ($1.0$). Conversely, the truly helpful image induces multiple plausible but different answers (“red” in one sample and “pink” in others), resulting in a lower consistency score ($0.81$). By contrast, our method assigns a substantially higher utility score to the image that exposes the inside of the mouth ($0.97$), while assigning near-zero scores to uninformative candidates. This example highlights that answer confidence does not necessarily reflect evidence utility, and that directly estimating evidence helpfulness leads to more reliable selection.

\subsection{Failure Case Analysis}
\label{app:failure_case_flower}

\paragraph{Task.}
\textbf{Question:} Among these characteristics, which one is improbable for this fruit after oxidation?

\textbf{Choices:}
(A) Its skin develops a white, powdery substance.
(B) It shrivels and becomes wrinkled.
(C) It develops blue molds.
(D) It develops dark brown or black spots.

\textbf{Ground-truth answer:} (C) It develops blue molds.

Figure~\ref{fig:failure-case-flower} shows a representative failure case in which the selected evidence is semantically related to the query scene but not truly informative for the target reasoning task. The query asks which visual characteristic is \emph{improbable} for the fruit after oxidation, and the correct answer is \textbf{C} (\emph{It developed blue molds}). However, our method selects a flower image as the top evidence, with a predicted helpfulness score of $p_{\text{true}}=0.852$, which is higher than the two ground-truth evidence images ($0.706$ and $0.777$).

A plausible explanation is that the surrogate confuses \emph{botanical or contextual relevance} with \emph{state-relevant utility}. The selected flower image shares plant-level visual cues with the query image, such as green foliage, vine-like structure, and a natural growing context, and may therefore appear semantically related to the underlying plant or fruit category. However, the question is not about identifying the fruit species; rather, it requires reasoning about \emph{appearance changes after oxidation}. For this question, the truly useful evidence should depict oxidized or spoiled grapes, such as wrinkling, darkening, or surface changes, as shown by the ground-truth images.

This example highlights a limitation of the helpfulness probe: when the task depends on \emph{state-specific visual changes} rather than object identity, the surrogate may assign a high score to candidates that are botanically related but do not contain the discriminative evidence needed to answer the question. In other words, the model captures \emph{what the object is associated with}, but fails to capture \emph{how the object changes under the queried condition}.

\begin{figure*}[t]
    \centering
    \begin{subfigure}[t]{0.23\textwidth}
        \centering
        \includegraphics[width=\linewidth]{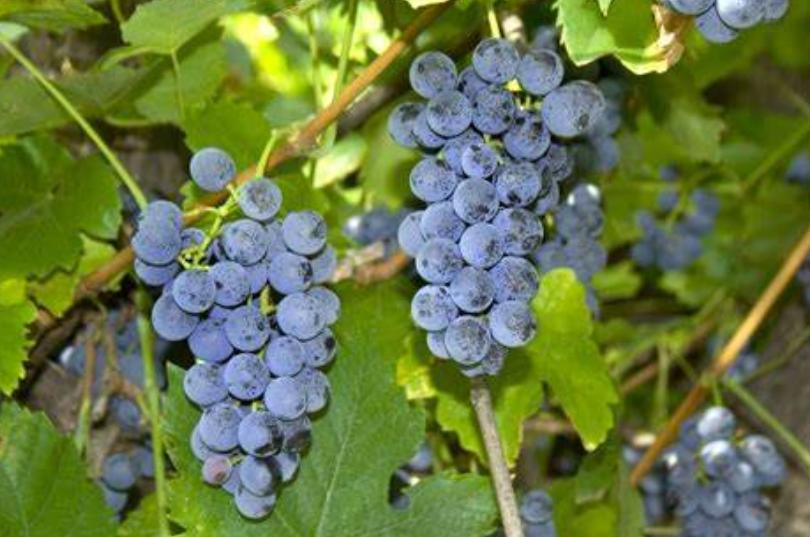}
        \caption{Query image.}
        \label{fig:failure-case-query}
    \end{subfigure}
    \hfill
    \begin{subfigure}[t]{0.23\textwidth}
        \centering
        \includegraphics[width=\linewidth]{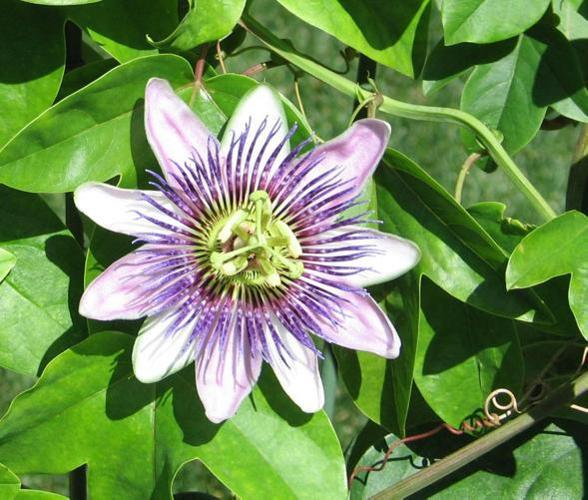}
        \caption{Top evidence selected by our method. Although assigned the highest helpfulness score, it depicts a flower rather than the target fruit.}
        \label{fig:failure-case-selected}
    \end{subfigure}
    \hfill
    \begin{subfigure}[t]{0.23\textwidth}
        \centering
        \includegraphics[width=\linewidth]{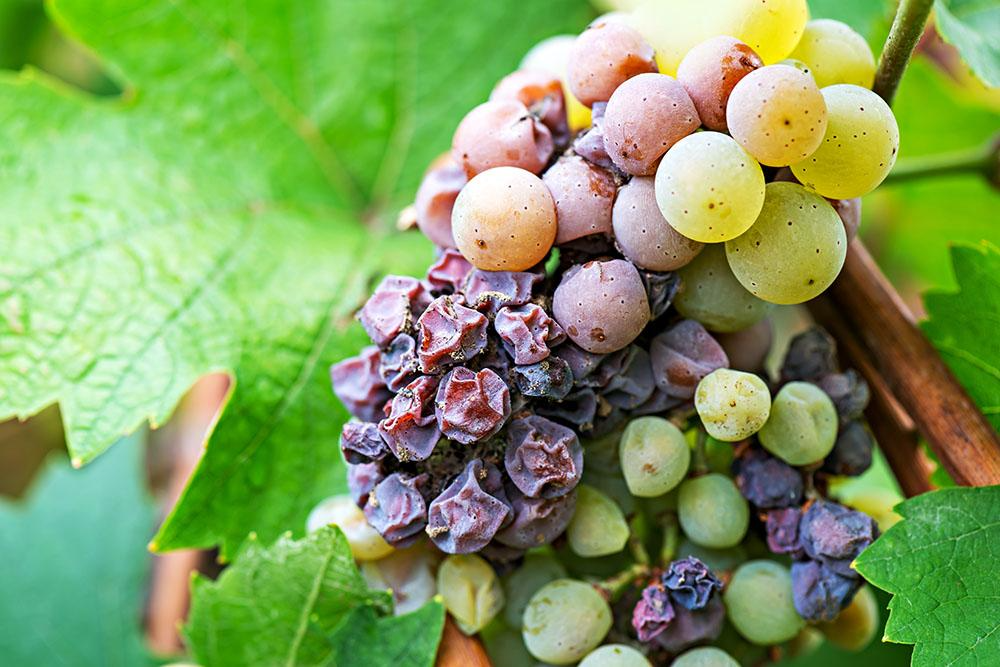}
        \caption{Ground-truth evidence image 1. This image directly reflects oxidation-related changes of the fruit.}
        \label{fig:failure-case-gt1}
    \end{subfigure}
    \hfill
    \begin{subfigure}[t]{0.23\textwidth}
        \centering
        \includegraphics[width=\linewidth]{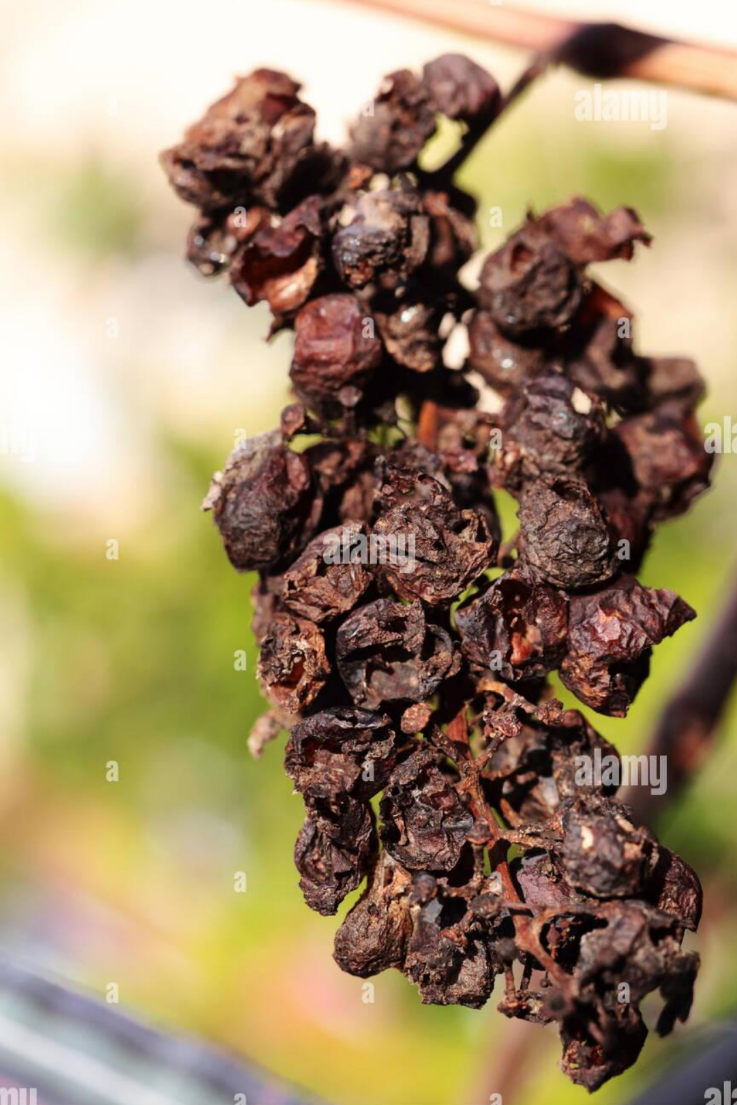}
        \caption{Ground-truth evidence image 2. This image also provides direct evidence of post-oxidation fruit appearance.}
        \label{fig:failure-case-gt2}
    \end{subfigure}

    \caption{A representative failure case of surrogate-based helpfulness estimation. The query asks about the improbable visual characteristic of a fruit after oxidation. Our method assigns the highest helpfulness score to a flower image, likely because it is botanically or contextually related to the query scene. However, this evidence is not useful for the actual reasoning objective, which requires recognizing oxidation-induced state changes of grapes. In contrast, the two ground-truth evidence images directly depict relevant post-oxidation appearances and are more informative for answering the question.}
    \label{fig:failure-case-flower}
\end{figure*}

\section{Acknowledgment of AI Assistance in Writing and Revision}
We utilized LLMs for revising and enhancing writing of this paper.

\end{document}